\documentclass{article}

\usepackage[top=1in, bottom=1in, left=1in, right=1in]{geometry}

\usepackage[utf8]{inputenc} 
\usepackage[T1]{fontenc}    
\usepackage{hyperref}       
\usepackage{url}            
\usepackage{booktabs}       
\usepackage{amsfonts}       
\usepackage{nicefrac}       
\usepackage{microtype}      
\usepackage{xcolor}         
\usepackage{amsthm,amsmath,amssymb}
\usepackage{color,graphicx}
\usepackage{multicol}
\usepackage{multirow}
\usepackage{caption}
\usepackage{subcaption}
\usepackage[boxruled,linesnumbered]{algorithm2e}
\usepackage{algorithmic}
\usepackage{enumitem}
\usepackage{url}
\usepackage{comment}
\usepackage{placeins}

\newtheorem{theorem}{Theorem}[section]

\newtheorem{remark}[theorem]{Remark}

\newtheorem{lemma}[theorem]{Lemma}
\newtheorem{assumption}{Assumption}

\newtheorem{definition}[theorem]{Definition}

\newtheorem{observation}[theorem]{Observation}
\newtheorem{problem}{Problem}

\newcommand{\eps}{\varepsilon}
\renewcommand{\epsilon}{\varepsilon}

\newcommand{\eat}[1]{}

\let\oldnl\nl
\newcommand{\nonl}{\renewcommand{\nl}{\let\nl\oldnl}}

\newcommand{\R}{\mathbb{R}}

\newcommand{\poly}{\ensuremath{\mathsf{poly}}}

\newcommand{\FL}{Feldman-Langberg}
\newcommand{\range}{\ensuremath{\mathsf{range}}}

\newcommand{\calD}{{\mathcal{D}}}

\newcommand{\calG}{{\mathcal{G}}}

\newcommand{\calP}{{\mathcal{P}}}

\newcommand{\calS}{{\mathcal{S}}}

\newcommand{\calX}{{\mathcal{X}}}


\newcommand{\OPT}{\ensuremath{\mathsf{OPT}}}
\newcommand{\CR}{\ensuremath{\mathsf{CRGMM}}}

\title{\bf Coresets for Time Series Clustering}
\author{Lingxiao Huang \\ Tsinghua University \and K. Sudhir \\ Yale University \and Nisheeth K. Vishnoi \\ Yale University}

\begin{document}

\maketitle
	\begin{abstract}
	We study the problem of constructing coresets for clustering problems with time series data. 
    This problem has gained importance across many fields including biology, medicine, and economics due to the proliferation of sensors for real-time measurement and rapid drop in storage costs.  
    In particular, we consider the setting where the time series data on $N$ entities is generated from a Gaussian mixture model with autocorrelations over $k$ clusters in $\mathbb{R}^d$.	
    Our main contribution is an algorithm to construct coresets for the maximum likelihood  objective for this mixture model. 
    Our algorithm is efficient, and, under a mild assumption on the covariance matrices of the  Gaussians, the size of the coreset is independent of the number of entities $N$ and the number of observations for each entity, and depends only polynomially on $k$, $d$ and $1/\varepsilon$, where $\varepsilon$ is the error parameter. 
    We empirically assess the performance of our coresets with synthetic data. 
	\end{abstract}
	
\clearpage
\tableofcontents
\clearpage

\section{Introduction}
\label{sec:intro}

A multivariate time series dataset, represented as $\calX = \left\{X_i=\left(x_{i1}, \ldots, x_{i,T_i}\right)\subset \R^{T_i\times d}\mid i\in [N]\right\}$,  where $N$ is the number of entities, $T_i$ is the number of time periods corresponding to entity $i$ and $d$ is the number of features, tracks features of a cross-section of entities longitudinally over time. 
Such data 
is also referred to as panel data \cite{baltagi2008econometric} and 
has seen rapid growth due to proliferation of sensors, IOT and wearables 
that facilitate real time measurement of various features associated with entities and the rapid drop in storage costs.
Specific examples of time series data include biomedical measurements (e.g., blood pressure and electrocardiogram), epidemiology and diffusion through social networks, weather, consumer search, content browsing and purchase behaviors, mobility through cell phone and GPS locations,  stock prices and exchange rates in finance \cite{aghabozorgi2015time}. 

\sloppy
Computational problems of interest with time series data include  pattern discovery \cite{marlin2012unsupervised}, regression/prediction \cite{hung2015identifying},  forecasting \cite{bandara2020forecasting}, and clustering \cite{liao2005clustering, fu2011review,aghabozorgi2015time} which arises in applications such as anomaly detection \cite{li2021clustering}.
Though clustering is a central and well-studied problem in unsupervised learning, 
most standard treatments of clustering tend to be on static data with one observation per entity \cite{hennig2015handbook,blum2020foundations}. 
Time series clustering introduces a number of additional challenges and is an active area of research, with many types of clustering methods proposed \cite{caiado2015time}; direct methods on raw data, indirect methods based on features generated from the raw data, and model based clustering where the data are assumed to be generated from a model. 
For surveys of time series clustering, see \cite{liao2005clustering,aghabozorgi2015time}. 

We focus on model-based time series clustering using a  likelihood framework that naturally extends clustering with static data \cite{blum2020foundations} to time series data \cite{liao2005clustering}, where each multivariate time series is generated by one of $k$ different parametrically specified
models. 
\textcolor{black}{For a survey of the importance of this sub-literature and its various real-world applications, see~\cite{fruhwirth2011panel}.}
A prevalent approach is to assume a finite mixture of data generating models, with Gaussian mixture models being a common specification \cite{ouyang2004gaussian,reddy2017using}.
Roughly, the problem is to partition $\calX$ probabilistically into $k$ clusters, grouping those series generated by the same time series model into one cluster. 
Generically, the model-based $k$-clustering problem can be formulated as: 	$\arg\max_{\alpha,\theta^{(1)},\ldots,\theta^{(k)}} \sum_{i\in [N]} \ln \sum_{l\in [k]} \alpha_l \cdot p_{i}({\cal{X}}|\theta^{(l)})$, where $\alpha_l$ is the probability of a time series belonging to cluster $l$ and $p_i({\cal{X}}|\theta^{(l)})$ is the likelihood of the data of entity $i$, given that the data is generated from model $l$ (with parameters $\theta^{(l)}$).
Temporal relationships in time series are commonly modeled as autocorrelations (AR)/moving averages (MA) \cite{d2009autocorrelation, xiong2004time} that account for correlations across observations, and hidden Markov models (HMM) where the underlying data generating process is allowed to switch periodically \cite{ernst2005clustering,  yang2014hmm}. 
This paper focuses on the 
case when the data generation model for each segment $l \in [k]$ is a multivariate Gaussian  with autocorrelations.
More formally, the generative model for each cluster is from a Gaussian mixture: $x_{it} := \mu^{(l)} + e_{it}$, where $e_{it} = N(0,\Sigma^{(l)})+ \Lambda^{(l)} e_{i, t-1}$, where $N(0,\Sigma^{(l)})$ captures the mixture of Gaussian distributions from which entity level observations are drawn, and $\Lambda$ captures the correlation between two successive observations through an AR(1) process ~\cite{haddad1998simple,lesage1999theory}.
Overall, this model can be represented with cluster level model parameters $\theta^{(l)} = \{\mu^{(l)}, \Sigma^{(l)}, \Lambda^{(l)}\}$ and cluster probability $\alpha_l$.

Time series datasets are also much larger than static datasets \cite{liao2005clustering}.  
For instance, as noted in \cite{keogh2006decade}, ECG (electrocardiogram) data requires 1 GB/hour, a typical weblog requires 5 GB/week. 
Such large sizes lead to  high storage costs and  also make it necessary to work with a subset of the data to conduct the clustering analysis to be computationally practical. 
Further, long time series data on entities also entails significant data security and privacy risks around the entity as it requires longer histories of entity behaviors to be stored and maintained \cite{feldman2009private, ke2020privacy}.  
Thus, there is significant value for techniques that can produce approximately similar clustering results with smaller samples as with  complete data. 

Coresets have emerged as an effective tool to both speed up and reduce data storage by taking into account the objective and carefully sampling from the full dataset in a way that any  algorithm run on the sampled set returns an approximate solution for the original objective with guaranteed bounds on approximation error \cite{harpeled2004on}. 
Coresets have been developed for both unsupervised clustering (e.g., $k$-means, Gaussian mixture models) and supervised machine learning (e.g., regression) methods for static data; for  surveys on coresets for static data, see \cite{bachem2017practical, feldman2020core}. 
A natural question is whether coresets can be also useful for addressing the time series clustering problem.  
Recently, there is  evidence that small coresets can be efficiently constructed for time series data, albeit for regressions with time series (panel) data \cite{huang2020coresets}.
However, as we explain in Section \ref{sec:related}, there are  challenges in extending coresets for static data clustering  and time series (panel) data regressions for  time series clustering.

\paragraph{Our contributions.}
We study coresets for a general class of time series clustering in which all entities are drawn from Gaussian mixtures  with autocorrelations (as described above; see Problem~\ref{def:GMM}). 
We first present a definition of coresets for the { log-likelihood $k$-clustering objective} for this mixture model.
One issue is that this objective cannot be decomposed into a summation of entity-time objectives due to the interaction of the $\ln$ term from log-likelihood and the $\exp$ term from  Gaussians (Problem~\ref{def:GMM}). 
To estimate this objective using a coreset, we introduce an analogous clustering objective on a subset of data (Definition~\ref{def:coreset_GMM}).
Our main result is an algorithm to construct coresets for this objective for the aforementioned mixture model (Theorem~\ref{thm:coreset_GMM}). 
Our algorithm is efficient (linear on both $N$ and $\sum_{i\in [N]}T_i$) and, assuming that the condition number of the covariance matrices of the Gaussians ($\Sigma^{(l)}$) are bounded and the eigenvalues of the autocorrelation matrices ($\Lambda^{(l)}$) are bounded away from $1$ (Assumption~\ref{assumption:parameter}), the size of our coreset is independent of the number of entities $N$ and the number of observations $T_i$s; it only polynomially depends on $k$, $d$ and $1/\eps$, where $\eps$ is the error parameter (Theorem~\ref{thm:coreset_GMM}).

Our coreset construction  (Algorithm~\ref{alg:GMM}) leverages the Feldman-Langberg (FL) framework~\cite{feldman2011unified}. 
Due to  multiple observations for each entity and associated autocorrelations, the  objective  is non-convex and quite complicated in contrast to the static setting where the objective  only contains one observation drawn from Gaussian mixtures for each entity.
Thus, bounding the ``sensitivity'' (Lemma~\ref{lemma:sen_main}) of each entity, which is a key step in coreset construction using the FL-framework, becomes challenging. 
We handle this technical difficulty by 1) upper-bounding the maximum effect of covariances and autocorrelations by using the observation that the gap between the clustering objectives with and without $\Sigma$ and $\Lambda$ is always constant, and  2) reducing the Gaussian mixture time series clustering problem to a certain $k$-means clustering problem whose total sensitivity is upper bounded. 

Empirically, we assess the performance of our coreset on synthetic data for a three cluster problem (Section~\ref{sec:empirical}), 
and compare the performance with two benchmarks: uniform sampling and a static coreset benchmark (\textbf{LFKF}~\cite{lucic2017training}) in which we do not consider autocorrelations and regard time series data as static.
We find that our coreset performs better relative to uniform sampling and \textbf{LFKF} on both data storage and computation speed for a range of accuracy guarantees:
To achieve a similar fit with the full data, our coreset needs fewer entity-time observations (<40\%);
the computation time for a given level of accuracy reduces by a 3x-22x factor when compared to uniform sampling and \textbf{LFKF}.
Moreover, our coreset speeds up the computation time relative to the full data by 14x-171x.
We note that the performance advantage of our coreset is greater when there is more time series data relative to entities ($T_i \gg N$).

\section{Related work}\label{sec:related}
Time series or panel data analysis are important in many fields-- biological \cite{cohen2014analyzing}, engineering \cite{akaike2012practice}, economics \cite{pesaran2015time} and the social sciences \cite{wei2006time}. 
Clustering has been a central problem in unsupervised learning and also has a long history of use across many fields; for a historical overview of applications across fields, see \cite{bock2007clustering}. 
While most standard treatments of clustering in machine learning tend to be on static data \cite{hennig2015handbook,blum2020foundations}, there is  by now a significant and rapidly growing literature on time series clustering; recent surveys include \cite{liao2005clustering,aghabozorgi2015time}.
While there are many coresets for algorithms on static data (including for clustering), there has been little work on coresets on time series data (see below).

\paragraph{Coresets for clustering.}
	Coresets for various clustering objectives on static data have been well studied, including $k$-median~\cite{harpeled2004on,chen2009on,sohler2018strong,huang2020coresets_a,Cohen-Addad2021new}, $k$-means~\cite{harpeled2004on,chen2009on,feldman2013turning,braverman2016new,becchetti2019oblivious,sohler2018strong,huang2020coresets_a,Cohen-Addad2021new}, and $k$-center~\cite{agarwal2002exact,har2004clustering}. 
	For surveys of static coresets, see~\cite{phillips2016coresets,feldman2020introduction}.
    \cite{lucic2017training,feldman2019coresets} present coreset constructions for clustering problems on static data generated using Gaussian mixture models. 
    \cite{lucic2017training} constructs a coreset of size $\poly(k,d, 1/\eps)$ under the same boundedness assumption  as ours.
	\cite{feldman2019coresets} remove this  assumption and construct a coreset whose size   additionally depends on $\poly\log N$, but require another assumption that all $x_{i}$s are integral and in a bounded range.
	In contrast, we consider a generalized problem where each entity has multiple observations over time and accounts for autocorrelations across successive observations.
	The new idea is to reduce the GMM time series clustering problem (Problem~\ref{def:GMM}) to a  $k$-means clustering problem whose points are linear combinations of those in $X_i$ (Lemma~\ref{lemma:sen_o}), and show that the effect of autocorrelation parameters $\Lambda$ on the entity objectives can be upper bounded (Lemma~\ref{lemma:gap}).

	\paragraph{Regression  with time series data.}
    \cite{huang2020coresets} construct coresets for a regression problem, called GLSE$_k$, for time-series data including autocorrelations and a  $k$-partition of entities.
    This is a supervised learning task that includes an additional label $y_{it}$ for each entity-time pair. 
    It aims to find a $k$-partition of entities and additional regression vectors $\beta^{(l)}\in \R^d$ for each partition $l\in [k]$ such that a certain linear combination of terms $\|y_{it}-(\beta^{(l)})^\top x_{it}\|_2^2$ is minimized.
	In contrast, our problem is an unsupervised learning task,  estimates points $x_{it}$ by Gaussian means $\mu$ instead of $\beta$, and includes covariances $\Sigma$ between different features.
	The definitions of coresets in \cite{huang2020coresets} and ours consist of entity-time pairs but our coresets need to include multiple weight functions ($w$ for entities and $w^{(i)}$ for time periods of selected entity $i$; see Definition~\ref{def:coreset_GMM}) instead of one weight function as in the regression problem. 
    This is to deal with the interaction of the $\ln$ term from the log-likelihood and the $\exp$ term from the Gaussians. 
	\cite{huang2020coresets}'s result relies on an assumption that the regression objective of each entity is in a bounded range over the parameter space.
    This may not be satisfied in our setting since the clustering objective of an entity may be unbounded when all Gaussian means $\mu^{(l)}$ are far away from observations of this entity.
	To bypass this, we reduce our problem to a  $k$-means clustering problem (Definition~\ref{def:clustering_cost}), whose total sensitivity is provably $O(k)$ (Lemma~\ref{lemma:sen_o}), by upper bounding the affect of covariance and autocorrelation parameters on the clustering objectives of entities (Lemma~\ref{lemma:gap}) under the assumption that the condition number of covariance matrix is bounded (Assumption~\ref{assumption:parameter}), which provides an upper bound for the total sensitivity of entities on our problem  (Lemma~\ref{lemma:sen_main}).

	Another related direction is to consider coreset construction for $k$-segmentation with time series data~\cite{rosman2014coresets,feldman2015idiary}, which aims to estimate the trace of an entity by a $k$-piecewise linear function ($k$-segment).
	Note that the case of $k=1$ is equivalent to the linear regression problem.
	\cite{rosman2014coresets,feldman2015idiary} proposed efficient coreset construction algorithms which accelerate the computation time of the $k$-segmentation problem.
	The main difference from our setting is that they consider a single entity observed at multiple time periods, and the objective is an additive function on all time periods.
	This enables them to relate the $k$-segmentation problem to the static setting.
	It is interesting to investigate coreset construction for more optimization problems with time series data.

\section{Clustering model and coreset definition}
	\label{sec:model}

	Given $\calX = \left\{X_i=\left(x_{i,1}, \ldots, x_{i,T_i}\right)\in \R^{d\times T_i}\mid i\in [N]\right\}$,
	we first model a general class of time series clustering problems.
	Then we specify our setting with Gaussian mixture time series data (Problem~\ref{def:GMM}), and define coresets accordingly (Definition~\ref{def:coreset_GMM}).

	\paragraph{Clustering with time series data.}
	Given an integer $k\geq 1$, let $\Delta_k\subset \R^k$ denote a probability simplex satisfying that for any $\alpha\in \Delta_k$, we have $\alpha_i\in [0,1]$ for $i\in [k]$ and $\sum_{l\in [k]} \alpha_i = 1$.
	Let $\calP$ denote a parameter space where each $\theta\in \calP$ represents a specific generative model of time series data.
	For each entity $i\in [N]$ and model $\theta\in \calP$, define $p_i(\calX\mid \theta) := \Pr\left[X_i\mid \theta\right]^{1/T_i}$ to be the average likelihood/realized probability of $X_i$ from model $\theta$.
	The ratio $1/T_i$ is used to normalize the contribution to the objective of entity $i$ due to different lengths $T_i$.
	The time series clustering problem is to compute $\alpha\in \Delta_k$ and $\theta = (\theta^{(1)}, \ldots, \theta^{(k)})\in \calP^k$ that minimizes the negative log-likelihood, i.e.,
	$
	-\sum_{i\in [N]} \ln \sum_{l\in [k]} \alpha_l \cdot p_{i}(\calX\mid \theta^{(l)}).
	$
	Here, for each $l\in [k]$, $\alpha_l$ represents the probability that each time series is generated from model $\theta^{(l)}$.
	Note that the clustering objective depends on the choice of model family $\calP$.
    In this paper, we consider the following specific model family $\calP$ that each time series is generated from Gaussian mixtures.

    \sloppy
    \paragraph{GMM clustering with time series data.} 
    Let $\alpha\in \Delta_k$ be a given probability vector.
	Let $\calS^d$ denote the collection of all symmetric positive definite matrices in $\R^{d\times d}$.
	For each $i\in [N]$, with probability $\alpha_l$ ($l\in [k]$), let $x_{it} := \mu^{(l)} + e_{it}$ for each $t\in [T_i]$
	where $\mu^{(l)}\in \R^d$ represents a Gaussian mean and $e_{it}\in \R^d$ represents the error vector drawn from the following normal distribution: $e_{it} := \Lambda^{(l)} e_{i, t-1} + N(0,\Sigma^{(l)})$,
	where $\Lambda^{(l)}\in \calS^d$ is an  AR(1) autocorrelation matrix 
	and $\Sigma^{(l)}\in \calS^d$ is the covariance matrix of a multivariate Gaussian distribution.
    Now we let $\calP := \R^d\times \calS^d\times \calS^{d}$ and note that each Gaussian generative model can be represented by a tuple $\theta = (\mu,\Sigma, \Lambda)\in \calP$. 
    Moreover, we have that the realized probability of each entity $i\in [N]$ is
	\begin{eqnarray}
	\label{eq:prob}
	\textstyle p_i(\calX\mid\theta) = p_i(\calX\mid\mu,\Sigma, \Lambda) := \frac{\exp(-\frac{1}{2T_i} \psi_i(\mu, \Sigma, \Lambda))}{(2\pi)^{d/2} |\Sigma|^{1/2}},
	\end{eqnarray}
	where $|\Sigma|$ is the determinant of $\Sigma$ and $\psi_i(\mu, \Sigma, \Lambda) :=  \sum_{t\in [T_i]} \psi_{it}(\Sigma, \Lambda)$ with
	\[
	\textstyle \psi_{i1}(\Sigma, \Lambda) := (x_{i,1}-\mu)^\top \Sigma^{-1} (x_{i,1}-\mu)- \left(\Lambda(x_{i,1}-\mu)\right)^\top \Sigma^{-1} \left(\Lambda(x_{i,1}-\mu)\right), \text{ and } 
	\]
	\[
	\psi_{it}(\Sigma, \Lambda) := \left((x_{it}-\mu)- \Lambda(x_{i,t-1}-\mu)\right)^\top \Sigma^{-1} \left((x_{it}-\mu)-\Lambda(x_{i,t-1}-\mu)\right), \forall \; \;  2\leq t\leq T_i.
	\]
	We note that each sub-function $\psi_{it}$ contains at most two entity-time pairs:  $x_{it}$ and $x_{i,t-1}$.
	Our Gaussian mixture time series model gives rise in the following clustering problem.

	\begin{problem}[\bf{Clustering with GMM time series data}]
		\label{def:GMM}
		Given a time series dataset $\calX = \left\{X_i=\left(x_{i,1}, \ldots, x_{i,T_i}\right)\in \R^{d\times T_i}\mid i\in [N]\right\}$ and integer $k\geq 1$, the GMM time series clustering problem is to compute $\alpha\in \Delta_k$ and
		$\theta=(\mu^{(l)},\Sigma^{(l)}, \Lambda^{(l)})_{l\in [k]}\in \calP^k$ that minimize
		\[
	\textstyle	f(\alpha,\theta):= \sum_{i\in [N]} f_i(\alpha,\theta) = -\sum_{i\in [N]} \ln \sum_{l\in [k]} \alpha_l \cdot p_i(\calX\mid\mu^{(l)}, \Sigma^{(l)}, \Lambda^{(l)}).
		\]
	\end{problem}

    From Equation~\eqref{eq:prob} it follows that the coefficient of each Gaussian before the $\exp$ operation is $\frac{\alpha_l }{(2\pi)^{d/2} |\Sigma^{(l)}|^{1/2}}$, whose summation $Z(\alpha, \theta) := \sum_{l\in [k]} \frac{\alpha_l }{(2\pi)^{d/2} |\Sigma^{(l)}|^{1/2}}$ is not a fixed value and depends on  $\Sigma^{(l)}$s.
    To remove this dependence on the coefficients, we define
    $\alpha'_l(\theta) := \frac{\alpha_l }{(2\pi)^{d/2} |\Sigma^{(l)}|^{1/2} Z(\alpha, \theta)}$ to be the normalized coefficient for $l\in [k]$, define offset function $\phi: \Delta_k\times \calP^k\rightarrow \R$ to be $\phi(\alpha, \theta) := -N \ln Z(\alpha, \theta)$, and define $	f'(\alpha, \theta):= -\sum_{i\in [N]} \ln \sum_{l\in [k]} \alpha_l \cdot \exp(-\frac{1}{2T_i}\psi_i(\mu^{(l)}, \Sigma^{(l)}, \Lambda^{(l)}))$ whose summation of coefficients before the $\exp$ operations is 1.
    This  idea of introducing $f'$ also appears in~\cite{feldman2011scalable,feldman2019coresets}, which is useful for coreset construction. This leads to the following observation.

\begin{observation}
	\label{observation:reduction}
	For any $\alpha\in \Delta_k$ and $\theta\in \calP^k$,
	$
	f(\alpha, \theta) = f'(\alpha'(\theta), \theta) + \phi(\alpha,\theta).
	$
\end{observation}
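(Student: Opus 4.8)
The plan is to verify the identity by direct substitution and algebraic rearrangement, since the statement is a purely algebraic reformulation of the objective rather than an approximation. First I would expand $f(\alpha,\theta)$ by plugging the closed form for $p_i(\calX\mid \mu^{(l)},\Sigma^{(l)},\Lambda^{(l)})$ from Equation~\eqref{eq:prob} into its definition, so that the inner sum over $l\in[k]$ inside each logarithm becomes
\[
\sum_{l\in[k]} \frac{\alpha_l}{(2\pi)^{d/2}|\Sigma^{(l)}|^{1/2}}\exp\!\Big(-\tfrac{1}{2T_i}\psi_i(\mu^{(l)},\Sigma^{(l)},\Lambda^{(l)})\Big).
\]
The key observation is that the prefactor $\alpha_l/((2\pi)^{d/2}|\Sigma^{(l)}|^{1/2})$ equals exactly $\alpha'_l(\theta)\cdot Z(\alpha,\theta)$, directly by the definitions of $\alpha'(\theta)$ and $Z(\alpha,\theta)$.

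Next I would pull the factor $Z(\alpha,\theta)$, which depends on neither the summation index $l$ nor the entity index $i$, out of the inner sum and then out of the logarithm via $\ln(Z\cdot s)=\ln Z+\ln s$. This splits each entity term into a constant piece $-\ln Z(\alpha,\theta)$ and a residual $-\ln\sum_{l\in[k]}\alpha'_l(\theta)\exp(-\tfrac{1}{2T_i}\psi_i(\mu^{(l)},\Sigma^{(l)},\Lambda^{(l)}))$. Summing over the $N$ entities, the constant pieces aggregate to $-N\ln Z(\alpha,\theta)=\phi(\alpha,\theta)$, while the residual pieces assemble exactly into $f'(\alpha'(\theta),\theta)$ by the definition of $f'$. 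This yields the claimed decomposition.

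The only point requiring a small check---rather than a genuine obstacle---is that $f'(\alpha'(\theta),\theta)$ is well-posed, i.e.\ that $\alpha'(\theta)$ is a legitimate coefficient vector whose entries before the $\exp$ terms sum to $1$, matching the normalization built into $f'$. This follows because $\sum_{l\in[k]}\alpha'_l(\theta)=\frac{1}{Z(\alpha,\theta)}\sum_{l\in[k]}\frac{\alpha_l}{(2\pi)^{d/2}|\Sigma^{(l)}|^{1/2}}=1$ and each $\alpha'_l(\theta)\geq 0$. I expect no real difficulty; the substance of the observation is simply that multiplying out the Gaussian normalizers contributes the entity-independent offset $\phi(\alpha,\theta)$, while the renormalized weights $\alpha'(\theta)$ absorb the remainder, which is precisely the sense in which $f'$ isolates the part of the objective that a coreset must approximate.
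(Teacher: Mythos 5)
Your proof is correct and follows essentially the same route the paper intends for Observation~\ref{observation:reduction}: the definitions of $Z(\alpha,\theta)$, $\alpha'(\theta)$, and $\phi(\alpha,\theta)$ are engineered precisely so that writing the prefactor as $\alpha'_l(\theta)\cdot Z(\alpha,\theta)$, factoring $Z(\alpha,\theta)$ out of each inner sum and logarithm, and summing the $-\ln Z(\alpha,\theta)$ terms over the $N$ entities yields the claimed decomposition. Your added check that $\alpha'(\theta)$ is a valid probability vector in $\Delta_k$ is the same normalization fact the paper uses implicitly, so there is nothing missing.
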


\paragraph{Our coreset definition.}
The clustering objective $f$ can only be decomposed into the summation of $f_i$s instead of sub-functions w.r.t. entity-time pairs. 
	A simple idea is to select a collection of weighted entities as a coreset. 
	However, in doing so, the coreset size will depend on
 $T_i$s and fail to be independent of both $N$ and $T_i$s. 
	To address this problem, we define our coreset as a collection of weighted entity-time pairs, which is similar to~\cite{huang2020coresets}.
	Let $P_\calX:=\left\{(i,t)\mid i\in [N], t\in [T_i]\right\}$ denote the collection of indices of all $x_{it}$.
	Given $S\subseteq P_\calX$, we let $I_S := \left\{i\in [N]\mid \exists t\in [T_i], s.t., (i,t)\in S \right\}$ denote the collection of entities that appear in $S$. 
	Moreover, for each $i\in I_S$, we let $J_{S,i}:=\left\{t\in [T_i]: (i,t)\in S\right\}$ denote the collection of observations for entity $i$ in $S$.

	\begin{definition}[\bf{Coresets for GMM time series clustering}]
		\label{def:coreset_GMM}
		Given a time series dataset $\calX = \left\{X_i=\left(x_{i,1}, \ldots, x_{i,T_i}\right)\in \R^{d\times T_i}\mid i\in [N]\right\}$, constant $\eps \in (0,1)$, integer $k\geq 1$, and parameter space $\Delta_k\times \calP^k$, an $\eps$-coreset for GMM time series clustering is a weighted set $S\subseteq P_\calX$ together with weight functions $w: I_S\rightarrow \R_{\geq 0}$ and $w^{(i)}: J_{S,i}\rightarrow \R_{\geq 0}$ for $i\in I_S$ such that for any $\alpha\in \Delta_k$ and $\theta=(\mu^{(l)},\Sigma^{(l)}, \Lambda^{(l)})_{l\in [k]}\in \calP^k$,
		$
		f'_S(\alpha,\theta) := -\sum_{i\in I_S} w(i) \cdot \ln \sum_{l\in [k]} \alpha_l \cdot \exp(-\frac{1}{2T_i} \sum_{t\in J_{S,i}} w^{(i)}(t) \cdot  \psi_{it}(\mu^{(l)},\Sigma^{(l)}), \Lambda^{(l)})) 
		\in  (1\pm \eps)\cdot f'(\alpha,\theta).
		$
	\end{definition}
	
	\noindent
	Combining the above definition with Observation~\ref{observation:reduction}, we note that for any $\alpha\in \Delta_k$ and $\theta\in \calP^k$,  
	\begin{align}
	\label{ineq:estimate}
	    f'_S(\alpha'(\theta), \theta) + \phi(\alpha, \theta) \in (1\pm \varepsilon)\cdot f'(\alpha,\theta) + \phi(\alpha,\theta) \in (1\pm \varepsilon)\cdot f(\alpha, \theta) \pm 2\varepsilon \phi(\alpha,\theta).
	\end{align}
	As $\eps$ tends to 0, $f'_S(\alpha'(\theta), \theta) + \phi(\alpha, \theta)$ converges to $f(\alpha, \theta)$.
	Moreover, given such a coreset $S$, if we additionally have that $\phi(\alpha, \theta)\lesssim f(\alpha, \theta)$, we can minimize $f'_S(\alpha'(\theta),\theta) + \phi(\alpha,\theta)$ to solve Problem~\ref{def:GMM}.
	Thus, if $\phi(\alpha, \theta)\lesssim f(\alpha, \theta)$, we conclude that $f'_S(\alpha'(\theta), \theta)+ \phi(\alpha,\theta) \in (1\pm 3\varepsilon)\cdot f(\alpha, \theta)$.
	Note that we introduce multiple weight functions ($w$ for entities and $w^{(i)}$ for time periods of selected entity $i$) in Definition~\ref{def:coreset_GMM} unlike the coreset for time series regression in~\cite{huang2020coresets} which uses only one weight function.
	This is because  $f'_i$ contains $\ln$ and $\exp$ operators instead of linear combinations of $\psi_{it}$;
    it is unclear how to use a single weight function to capture both the entity and time levels.
	%

\section{Theoretical results}
	\label{sec:conj2}
	
We present our coreset algorithm (Algorithm~\ref{alg:GMM}) and the main result on its performance (Theorem~\ref{thm:coreset_GMM}). 
	Theorem~\ref{thm:coreset_GMM} needs the following assumptions on covariance and autocorrelation parameters.

	\begin{assumption}
	    \label{assumption:parameter}
	    Assume  
	       (1)  that there exists constant $D\geq 1$ such that $\frac{\max_{l\in [k]} \lambda_{\max}(\Sigma^{(l)})}{\min_{l\in [k]} \lambda_{\min}(\Sigma^{(l)})}\leq {D}$ where $\lambda_{\max}(\cdot)$ represents the largest eigenvalue and $\lambda_{\min}(\cdot)$ represents the smallest eigenvalue, and 
	      (2) for $l\in [k]$, $\Lambda^{(l)}\in \calD^d_{\lambda}$ for some constant $\lambda\in (0,1)$. 
	        Here $\calD^d_{\lambda}$ is a collection of all diagonal matrix in $\R^{d\times d}$ whose diagonal elements are at most $1-\sqrt{\lambda}$.
	\end{assumption}
	
	\noindent
	The first assumption requires that the condition number of each covariance matrix is upper-bounded, which also appears in~\cite{won2013condition,li2020efficient,lucic2017training} that consider Gaussian mixture models with static data.
	The second assumption, roughly, requires that there exist autocorrelations only between the same features. 
	The upper bound for diagonal elements ensures that the autocorrelation degrades as time period increases, which is also assumed in~\cite{huang2020coresets}.
	Note that both the eigenvalues of $\Sigma^{(i)}$ and the positions of means $\mu^{(l)}$s affect cost function $\psi_i$s. 
	For instance, consider the case that all eigenvalues are 1 and all autocorrelations are 0, i.e., for all $l\in [k]$,  $\Sigma^{(l)}= I_d$ and $\Lambda^{(l)}=0_d$. 
	In this case, it is easy to see that  $\frac{\max_{\mu\in \mathcal{R}^d}\psi_i(\mu, I_d, 0_d)}{\min_{\mu\in \mathcal{R}^d}\psi_i(\mu, I_d, 0_d)} = \infty$, i.e., the value of $\psi_i(\mu, I_d, 0_d)$ is unbounded as $\mu$ changes. 
	Differently, the component's cost functions are bounded in~\cite{huang2020coresets} since $\mu^{(l)}$s do not appear in~\cite{huang2020coresets} that consider regression problems.
	
	Let $\calP_{\lambda}^k := (\R^d\times \calS^d\times \calD^d_{\lambda})^k$ denote  the parameter space.
    For preparation, we propose the following $k$-means clustering problem.

\begin{definition}[\bf{$k$-means clustering of entities}]
	\label{def:clustering_cost}
	Given an integer $k \geq 1$, the goal of the $k$-means clustering problem of entities of $X$ is to find a set $C^\star = \left\{c^\star_1,\ldots, c^\star_k \right\}\subset \R^d$ of $k$ centers that minimizes
	$
	\sum_{i\in [N]} \min_{l\in [k]} \|\frac{\sum_{t\in [T_i]}x_{it}}{T_i} - c_l\|_2^2
	$
	over all $k$ center sets $C=\left\{c_1, \ldots, c_k\right\}\subset \R^d$.
	Let $\OPT^{(O)}$ denote the optimal $k$-means value of $C^\star$.
\end{definition}

\noindent
Note that there exists an $O(2^{\poly(k)}Nd)$ time algorithm to compute a nearly optimal solution for both $\OPT^{(O)}$ and $C^\star$~\cite{kumarSS2004simple}.
Another widely used algorithm is called $k$-means++~\cite{arthur2007kmeans}, which provides an $O(\ln k)$-approximation in $O(Ndk \ln N \ln k)$ time but performs well in practice.
We also observe that $\psi_i(\mu, I_d, 0_d) = \|\frac{\sum_{t\in [T_i]}x_{it}}{T_i} - \mu\|_2^2 + \frac{1}{T_i}\sum_{t\in [T_i]} \|x_{it}\|_2^2 - \frac{\|\sum_{t\in [T_i]} x_{it}\|_2^2}{T_i^2}$ for any $\mu\in \R^d$.
This observation motivates us to consider a reduction from Problem~\ref{def:GMM} to Definition~\ref{def:clustering_cost}, which is useful for constructing $I_S$.

\subsection{Our coreset algorithm}
\label{sec:coreset_alg}

We first give a summary of  our algorithm (Algorithm~\ref{alg:GMM}).
The input to Algorithm~\ref{alg:GMM} is a time series dataset $\calX$, an error parameter of coreset $\eps\in (0,1)$,
an integer $k\geq 1$ of clusters, $\lambda\in (0,1)$ (vector norm bound), and a covariance eigenvalue gap $D\geq 1$. 
We develop a two-staged importance sampling framework which first samples a subset $I_S$ of entities (Lines 1-8) and then samples a subset of time periods $J_{S,i}$ for each selected entity $i\in I_S$ (Lines 9-14).

In the first stage, we first set $M$ as the number of selected entities $|I_S|$ (Line 1).
Then we solve the $k$-means clustering problem over entity means $b_i$ (Definition~\ref{def:clustering_cost}), e.g., by $k$-means++~\cite{arthur2007kmeans} (Lines 2-3), and obtain an (approximate) optimal clustering value $\OPT^{(O)}$, a $k$-center set $C^\star:=\left\{c^\star_1, \ldots, c^\star_k\right\}\subset \R^d$, and an offset value $a_i$ for each $i\in [N]$.
Next, based on  the distances of points $b_i$ to $C^\star$, we partition $[N]$ into $k$ clusters where entity $i$ belongs to cluster $c^\star_{p(i)}$ (Line 4).
Based on $b_i$, $c^\star_{p(i)}$ and $\OPT^{(O)}$, we compute an upper bound $s(i)$ for the sensitivity w.r.t. entity $i\in [N]$ (Lines 5-6).
Finally, we sample a weighted subset $I_S$ of entities as our coreset for the entity level by importance sampling based on $s(i)$ (Lines 7-8), which follows from the \FL\ framework (Theorem~\ref{thm:fl11}).

In the second stage, we first set $L$ as the number of time periods $|J_{S,i}|$ for each selected entity $i\in I_S$ (Line 9).
Then for each selected entity $i\in I_S$, we compute $\OPT_i^{(O)}$ as the optimal $1$-means clustering objective of $x_{it}$s (Line 10).
Next, we compute an upper bound $s_i(t)$ for the sensitivity w.r.t. time period $t\in [T_i]$ (Lines 11-12) based on $b_i$ and $\OPT^{(O)}_i$.
Finally, we sample a weighted subset $J_{S,i}$ of time periods by importance sampling based on $s_i(t)$ (Lines 13-14).

	\begin{algorithm}[!htp]
	\caption{$\CR$: Coreset construction for GMM time series clustering}
		\label{alg:GMM}
		\begin{algorithmic}[1]
			\REQUIRE $\calX = \left\{X_i=\left(x_{i,1}, \ldots, x_{i,T_i}\right)\in \R^{d\times T_i}\mid i\in [N]\right\}$, constant $\eps,\lambda \in (0,1)$, number of clusters $k\geq 1$, dimension of autocorrelation vectors $q\geq 0$, constant of the variance gap $D\geq 1$, and parameter \\ space $\calP_{\lambda}^k =  (\R^d\times \calS^d\times \calD^d_{\lambda})^k$. \\
			\ENSURE a subset $S\subseteq P$ together with weight functions $w: I_S\rightarrow \R_{\geq 0}$ and $w^{(i)}: J_{S,i}\rightarrow \R_{\geq 0}$ \\ for $i\in I_S$. \\
			\% {Constructing a subset of entities}
			\STATE $ M \leftarrow O\left(\frac{kD}{\lambda \eps^2}(k^4d^4+k^3d^8)\ln\frac{k}{\lambda}\right)$. 
			\STATE For each $i\in [N]$, compute point $b_i\in \R^d$ by $b_i\leftarrow\frac{\sum_{t\in [T_i]}x_{it}}{T_i}$ and value $a_i\geq 0$ by $a_i\leftarrow \frac{1}{T_i}\sum_{t\in [T_i]} \|x_{it}\|_2^2 - \frac{\|\sum_{t\in [T_i]} x_{it}\|_2^2}{T_i^2}$.
			Let $A\leftarrow \sum_{i\in [N]} a_i$.
			\STATE Compute (an $O(1)$-approximate for) $\OPT^{(O)}$ and $C^\star=\left\{c^\star_1, \ldots, c^\star_k\right\}\subset \R^d$ for the $k$-means \\ clustering problem over points $b_i$ (Definition~\ref{def:clustering_cost}),\footnotemark by~\cite{kumarSS2004simple,arthur2007kmeans}.
			\STATE For each $i\in [N]$, compute index $p(i)\in [k]$ by $p(i)\leftarrow \arg \min_{l\in [k]} \|b_i - c^\star_l\|_2^2$.
			\STATE For each $i\in [N]$, $s^c(i)\leftarrow \frac{1}{|\left\{i'\in [N]: p(i') = c^\star_{p(i)}\right\}|}$.
			\STATE For each $i\in [N]$, $s(i) \leftarrow \min\left\{1, 4D\left(\frac{4 \|b_i - c^\star_{p(i)}\|_2^2}{\OPT^{(O)}+A} + 3 s^c(i)\right)/\lambda\right\}$. 
			\STATE Pick a random sample $I_S\subseteq [N]$ of size $M$, where each $i\in I_S$ is selected w.p. $\frac{s(i)}{\sum_{i'\in [N]} s(i')}$			
			\STATE For each $i\in I_S$, $w(i)\leftarrow \frac{\sum_{i'\in [N]}s(i')}{M \cdot s(i)}$. \\
			\% {Constructing a subset of time periods for each selected entity}
			\STATE $L \leftarrow O\left(\frac{D d^8 \ln\frac{1}{\lambda}}{\lambda\eps^2}\right)$.
			\STATE For each $i\in I_S$, compute $\OPT^{(O)}_i \leftarrow \sum_{t\in [T_i]} \|x_{it} - b_i\|_2^2$. 
			\STATE For each $i\in I_S$ and $t\in [T_i]$, $s_i^c(t) \leftarrow  \frac{2 \|x_{it}- b_i\|_2^2}{\OPT_i^{(O)}} + \frac{6}{T_i}$.
			\STATE For each $i\in I_S$ and $t\in [T_i]$, $s_i(t)\leftarrow \min\left\{1, 4D\lambda^{-1}\left(s_i^c(t) + \sum_{j=1}^{\min\left\{t-1, 1\right\}}s^c_i(t-j)\right)\right\}$.
			\STATE For each $i\in I_S$, pick a random sample $J_{S,i}$ of $L$ points, where each $t\in J_{S,i}$ is selected w.p. \\ $\frac{s_i(t)}{\sum_{t'\in [T_i]}s_i(t')}$. 
			\STATE For each $i\in I_S$ and $t\in J_{S,i}$, $w^{(i)}(t)\leftarrow \frac{\sum_{t'\in [T_i]}s_i(t')}{L\cdot s_i(t)}$. \\
			\% {Output coreset $S$ of entity-time pairs}
			\STATE Let $S\leftarrow \left\{(i,t)\in P_\calX: i\in I_S, t\in J_{S,i}\right\}$. 
			\STATE Output $\left(S,w,\left\{w^{(i)}\right\}_{i\in I_S}\right)$.
		\end{algorithmic}
	\end{algorithm}
\footnotetext{Here, we slightly abuse the notation by  using $\OPT^{(O)}$ and $C^\star$ to represent the obtained approximate solution.}

\subsection{Our main theorem}
\label{sec:main_thm}

Our main theorem is as follows, which indicates that Algorithm~\ref{alg:GMM} provides an efficient coreset construction algorithm for GMM time series clustering.

\begin{theorem}[\bf{Main result}]
	\label{thm:coreset_GMM}
	Under Assumption~\ref{assumption:parameter}, with probability at least $0.9$, Algorithm~\ref{alg:GMM} outputs an $\eps$-coreset of size $O\left(\frac{D^2 k^5d^{16}\ln^2\frac{k}{\lambda}}{\lambda^2 \eps^4}\right)$ for GMM time-series clustering in $O(d\sum_{i\in [N]}T_i + Ndk\ln N \ln k)$ time.
\end{theorem}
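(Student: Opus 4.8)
The plan is to derive Theorem~\ref{thm:coreset_GMM} from two nested applications of the Feldman--Langberg framework (Theorem~\ref{thm:fl11}): an outer one that importance-samples a weighted set $I_S$ of entities, and an inner one that, for each selected entity, importance-samples a weighted set $J_{S,i}$ of time periods. By Observation~\ref{observation:reduction} it suffices to build a coreset for $f'=\sum_{i\in[N]}g_i$, where $g_i(\alpha,\theta):=-\ln\sum_{l\in[k]}\alpha_l\exp(-\tfrac{1}{2T_i}\psi_i(\mu^{(l)},\Sigma^{(l)},\Lambda^{(l)}))$. Since each summand $\psi_{it}$ with $t\ge2$ is a quadratic form $v^\top\Sigma^{-1}v\ge0$ and the boundary term $\psi_{i1}$ is lower order, $f'$ is (up to a negligible correction) a sum of non-negative query functions, so the FL machinery applies. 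I would write $f'_S=\sum_{i\in I_S}w(i)\,\tilde g_i$, where $\tilde g_i$ is $g_i$ with the inner sum $\psi_i=\sum_t\psi_{it}$ replaced by the reweighted subsample $\sum_{t\in J_{S,i}}w^{(i)}(t)\psi_{it}$, and reduce the theorem to two guarantees: (i) $\tilde g_i\in(1\pm\eps/3)g_i$ for every selected $i$ and every query, and (ii) $\sum_{i\in I_S}w(i)g_i\in(1\pm\eps/3)\sum_i g_i$; their product gives the $(1\pm\eps)$ bound of Definition~\ref{def:coreset_GMM}.

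The technical heart is to justify the sensitivity scores $s(i)$ and $s_i(t)$ that Algorithm~\ref{alg:GMM} samples from. The starting point is the identity $\psi_i(\mu,I_d,0_d)/(2T_i)=\tfrac12(\|b_i-\mu\|_2^2+a_i)$, so with identity covariance and no autocorrelation the entity cost is exactly the $k$-means cost of the entity mean $b_i$ plus the fixed offset $a_i$. Using the gap lemma (Lemma~\ref{lemma:gap}) I would show that switching on general $\Sigma$ and $\Lambda$ changes each $\psi_i$, and hence each $g_i$, by at most a multiplicative $O(D/\lambda)$ factor: the condition-number bound controls the eigenvalues of $\Sigma^{-1}$, and $\Lambda\in\calD^d_\lambda$ with $\|\Lambda\|\le1-\sqrt\lambda<1$ makes $(x_{it}-\mu)-\Lambda(x_{i,t-1}-\mu)$ comparable to $(x_{it}-\mu)$. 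Composing this factor with the $O(k)$ total sensitivity of the $k$-means reduction (Lemma~\ref{lemma:sen_o}) yields entity total sensitivity $O(kD/\lambda)$ (Lemma~\ref{lemma:sen_main}), which matches the shape $\tfrac{4D}{\lambda}\big(\tfrac{4\|b_i-c^\star_{p(i)}\|_2^2}{\OPT^{(O)}+A}+3s^c(i)\big)$ of $s(i)$. The same argument at the time level reduces to a $1$-means instance on $\{x_{it}\}_{t}$, with an extra charge to the neighbour $x_{i,t-1}$ that $\psi_{it}$ also involves, giving per-entity total sensitivity $O(D/\lambda)$ and explaining the $\sum_{j}s_i^c(t-j)$ term.

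To convert total sensitivities into the sample sizes $M$ (Line~1) and $L$ (Line~9) I would bound the pseudo-dimension of the two query classes. At the entity level the functions are $\ln\sum_l\alpha_l\exp(-\tfrac1{2T_i}(\cdot)^\top\Sigma_l^{-1}(\cdot))$ over $O(kd^2)$ parameters, and at the time level they are single quadratic forms $(\cdot)^\top\Sigma^{-1}(\cdot)$ over $O(d^2)$ parameters; counting the sign patterns of the associated polynomial sublevel sets gives pseudo-dimensions of order $k^4d^4+k^3d^8$ and $d^8$ respectively, and their product is the source of the $d^{16}$. Plugging $\mathfrak{S}=O(kD/\lambda)$ and $O(D/\lambda)$ together with these pseudo-dimensions into Theorem~\ref{thm:fl11} reproduces $M$ and $L$, and guarantee (ii) is the conclusion of the entity application. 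Guarantee (i) follows from the time application, which produces $\tilde\psi_i\in(1\pm\eps/3)\psi_i$ for every single model $\theta$, together with a transfer step showing that a relative perturbation of the exponents $\psi_i(\mu^{(l)})/2T_i$ induces a relative perturbation of the same order in $g_i$, because $g_i$ is governed up to an $O(\ln k)$ additive slack by $\min_l\psi_i(\mu^{(l)})/2T_i$. A union bound over the entity stage and the $M$ time stages (each run at failure probability $O(1/M)$, which turns $\ln\tfrac1\lambda$ into $\ln\tfrac{k}{\lambda}$) gives overall success probability at least $0.9$, and the coreset size is $M\cdot L=O\big(D^2k^5d^{16}\ln^2(k/\lambda)/(\lambda^2\eps^4)\big)$.

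For the running time, forming all $b_i$ and $a_i$ costs $O(d\sum_iT_i)$, the $k$-means$++$ call on the $N$ points $b_i$ costs $O(Ndk\ln N\ln k)$, and the sensitivity evaluations and both sampling passes (including computing each $\OPT_i^{(O)}$ and $s_i(t)$ for the selected entities) are dominated by these, giving the stated $O(d\sum_iT_i+Ndk\ln N\ln k)$. I expect the main obstacle to be the entity sensitivity bound: unlike the regression setting of~\cite{huang2020coresets}, the cost $g_i$ is unbounded over the parameter space (it grows as the means $\mu^{(l)}$ move away from $b_i$), so the ratio $\sup_{\alpha,\theta}g_i/\sum_jg_j$ cannot be controlled directly, and the whole point of the gap lemma plus the $k$-means reduction is to tame this supremum; obtaining the dependence $D/\lambda$ (rather than a worse power) requires combining the covariance conditioning and the contraction $\Lambda$ carefully. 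A secondary subtlety is the transfer step in guarantee (i): checking that a multiplicative error on the inner sums $\psi_i$ survives the nonlinear $\ln/\exp$ of $g_i$ is exactly why the coreset must carry two separate weight functions $w$ and $w^{(i)}$ rather than one.
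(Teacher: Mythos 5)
Your proposal follows the paper's proof almost exactly: the same two-stage nested Feldman--Langberg sampling (entities first, then time periods per selected entity), the same sensitivity analysis via the gap lemma (Lemma~\ref{lemma:gap}, factor $4D/\lambda$ controlling the effect of $\Sigma$ and $\Lambda$) composed with the reduction to $k$-means on the entity means $b_i$ (Lemmas~\ref{lemma:sen_o} and~\ref{lemma:sen_c}, giving total entity sensitivity $O(Dk/\lambda)$ as in Lemma~\ref{lemma:sen_main}), the same $1$-means reduction with the neighbor charge $s^c_i(t-1)$ at the time level, the same pseudo-dimension bounds $O(k^4d^4+k^3d^8)$ and $O(d^8)$ via counting parameters and arithmetic/exponential operations, the same product $ML$ for the coreset size, and the same running-time accounting.

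The one place you genuinely depart from the paper is the transfer step in your guarantee (i), and your sketch of it would fail as written. You argue that a relative perturbation of the exponents $\psi_i(\mu^{(l)})/2T_i$ passes through $g_i$ because $g_i$ is ``governed up to an $O(\ln k)$ additive slack by $\min_l \psi_i(\mu^{(l)})/2T_i$.'' Two problems: the slack is really $\ln(1/\alpha_{l^\star})$ for the minimizing component, which is unbounded since no lower bound on the mixture weights is assumed; and even a true $O(\ln k)$ \emph{additive} slack cannot be converted into a \emph{multiplicative} $(1\pm\eps)$ bound in the regime where $f'_i$ is small (e.g., all $\psi_i(\mu^{(l)})$ near $0$), which Definition~\ref{def:coreset_GMM} must cover. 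The paper's proof in Section~\ref{sec:proof_thm} transfers the error exactly instead: from $\sum_{t\in J_{S,i}} w^{(i)}(t)\psi_{it}\in(1\pm\eps)\psi_i$ it deduces $-\ln\sum_{l\in[k]}\alpha_l\exp\bigl(-\tfrac{1+\eps}{2T_i}\psi_i(\mu^{(l)},\Sigma^{(l)},\Lambda^{(l)})\bigr)\le(1+\eps)\cdot\bigl(-\ln\sum_{l\in[k]}\alpha_l\exp(-\tfrac{1}{2T_i}\psi_i(\mu^{(l)},\Sigma^{(l)},\Lambda^{(l)}))\bigr)$, using $\alpha_l\le 1$ so that $\alpha_l e^{-(1+\eps)x}\ge(\alpha_l e^{-x})^{1+\eps}$ --- equivalently, by concavity in $s$ of $s\mapsto-\ln\sum_l\alpha_l e^{-s\,\psi_i^{(l)}/2T_i}$ together with its vanishing at $s=0$ --- and symmetrically for $1-\eps$; the two levels then compose as $(1\pm\eps)^2$. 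This is a local, repairable defect rather than a wrong overall strategy, and it is precisely the phenomenon you correctly identify as the reason the coreset carries two weight functions $w$ and $w^{(i)}$; the remaining minor discrepancy (your explicit $O(1/M)$ failure probability per inner stage versus the paper folding the uniform-over-$I_S$ guarantee of Lemma~\ref{lemma:time_coreset} into its constants) is harmless bookkeeping.
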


\noindent
Note that the success probability in the theorem can be made $1-\delta$ for any $\delta \in (0,1)$ at the expense of an additional factor $\log^2 1/\delta$ in the coreset size.
The coreset size guaranteed by Theorem \ref{thm:coreset_GMM} has a polynomial dependence on factors $k,d,1/\eps,D$, and $1/\lambda$ and, importantly, does not depend on $N$ or $T_i$s.
Compared to the clustering problem with GMM static data~\cite{lucic2017training}, the coreset has an additional dependence on $1/\lambda$ due to autocorrelation parameters.
Compared to the regression problem (GLSE$_k$) with time series data~\cite[Theorem 5.2]{huang2020coresets}, the coreset does not contain the factor ``$M$'' that upper bounds the gap between the maximum and the minimum entity objective.
The construction time linearly depends on the total number of observations $\sum_{i\in [N]}T_i$, which is efficient.
The proof of Theorem~\ref{thm:coreset_GMM} can be found in Section~\ref{sec:proof_thm}.

\paragraph{Proof overview of Theorem~\ref{thm:coreset_GMM}.}
The proof  consists of three parts: 1) Bounding the size of the coreset, 2) proving the approximation guarantee of the coreset, and 3) bounding the running time.

\noindent{\em Parts 1 \& 2: Coreset size and correctness guarantee of Theorem~\ref{thm:coreset_GMM}.} 
We first note that the coreset size in Theorem~\ref{thm:coreset_GMM} is $|S| = ML$.
Here, $M$ is the number of sampled entities $|I_S|$ that guarantee that $I_S$ is a coreset for $f'$ at the entity level (Lemma~\ref{lemma:entity_coreset}).
$L$ is the number of sampled observations $|J_{S,i}|$ for each $i\in I_S$ that guarantees that $J_{S,i}$ is a coreset for $\psi_i$ at the time level (Lemma~\ref{lemma:time_coreset}).

\begin{lemma}[\bf{The 1st stage  outputs an entity-level coreset}]
    \label{lemma:entity_coreset}
    With probability at least $0.95$, the output $I_S$ of Algorithm~\ref{alg:GMM} with $M:=|I_S|=O\left(\frac{kD}{\lambda \eps^2}(k^4d^4+k^3d^8)\ln\frac{k}{\lambda}\right)$ satisfies that for any $\alpha\in \Delta_k$ and $\theta\in \calP_\lambda^k$,
    $
    -\sum_{i\in I_S} w(i)\cdot \ln \sum_{l\in [k]} \alpha_l \cdot \exp(-\frac{1}{2T_i}\psi_i(\mu^{(l)}, \Sigma^{(l)}, \Lambda^{(l)})) \in (1\pm \eps)\cdot f'(\alpha, \theta).
    $
\end{lemma}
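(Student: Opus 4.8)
The plan is to instantiate the \FL\ importance-sampling framework (Theorem~\ref{thm:fl11}) with the entities $i\in[N]$ as the ground set and the per-entity cost
\[
f'_i(\alpha,\theta) := -\ln \sum_{l\in[k]} \alpha_l \cdot \exp\Bigl(-\tfrac{1}{2T_i}\psi_i(\mu^{(l)},\Sigma^{(l)},\Lambda^{(l)})\Bigr)
\]
as the cost function, so that $f'(\alpha,\theta) = \sum_{i\in[N]} f'_i(\alpha,\theta)$. Because each $\psi_i\ge 0$, every term inside the logarithm lies in $(0,1]$, so $f'_i\ge 0$ and the framework applies to the nonnegative family $\{f'_i\}$ over the parameter space $\Delta_k\times\calP_\lambda^k$. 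The output $I_S$ of Lines~7--8, reweighted by $w(i)=\frac{\sum_{i'\in[N]}s(i')}{M\,s(i)}$, is exactly the importance sample that FL prescribes when each entity is drawn with probability proportional to $s(i)$. Hence it suffices to supply the three quantities FL needs: valid sensitivity upper bounds, a bound on their sum, and a bound on the pseudo-dimension of $\{f'_i\}$.

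First I would argue that the $s(i)$ of Line~6 satisfy $s(i)\ge \sigma(i):=\sup_{(\alpha,\theta)} f'_i(\alpha,\theta)/f'(\alpha,\theta)$ and that $\tau:=\sum_{i\in[N]} s(i) = O(kD/\lambda)$. Both are precisely the content of the total-sensitivity bound (Lemma~\ref{lemma:sen_main}): it first uses the gap estimate (Lemma~\ref{lemma:gap}) to sandwich each $\psi_i(\mu,\Sigma,\Lambda)$ between constant multiples of the covariance-free, autocorrelation-free cost $\psi_i(\mu,I_d,0_d)$ (this is where Assumption~\ref{assumption:parameter} enters, contributing the factors $D$ and $\lambda$), and then identifies the residual with the $k$-means cost of Definition~\ref{def:clustering_cost}, whose total sensitivity is $O(k)$ (Lemma~\ref{lemma:sen_o}); the explicit form of $s(i)$ in Line~6 (with the $\|b_i-c^\star_{p(i)}\|_2^2/(\OPT^{(O)}+A)$ and $s^c(i)$ terms) is the resulting upper bound. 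With $\tau=O(kD/\lambda)$, the FL sample-size formula $M=O\bigl(\tau\,\eps^{-2}(\dim\cdot\log\tau + \log(1/\delta))\bigr)$ reduces the claim to showing $\dim = O(k^4d^4+k^3d^8)$ and taking $\delta=0.05$, which yields the stated $M$ and the success probability $0.95$.

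The step I expect to be the main obstacle is bounding the pseudo-dimension $\dim$ of $\{f'_i\}$, since $f'_i$ is an intricate function of the $O(kd^2)$ real parameters in $(\alpha,\theta)$: it composes a logarithm with a sum of $k$ exponentials whose exponents are sums of quadratic forms in $\mu^{(l)}$ with coefficients depending rationally on $(\Sigma^{(l)})^{-1}$ and on $\Lambda^{(l)}$. I would control it through sign-pattern counting: for a fixed threshold $r$, the membership test $f'_i(\alpha,\theta)\le r$ can, after clearing the logarithm, the exponentials and the matrix inverse via a bounded number of auxiliary polynomial relations, be written as a Boolean combination of polynomial (in)equalities of degree $\poly(d)$ in the $O(kd^2)$ variables. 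A Warren/Goldberg--Jerrum-type bound on the number of realizable sign patterns of such systems then gives the polynomial pseudo-dimension bound $O(k^4d^4+k^3d^8)$. Once the sensitivity bound, its sum $\tau$, and the pseudo-dimension are in place, the correctness of $I_S$ as an $\eps$-coreset for $f'$ with probability at least $0.95$ follows directly from Theorem~\ref{thm:fl11}.
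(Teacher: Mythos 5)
Your overall architecture is exactly the paper's: instantiate the \FL\ framework (Theorem~\ref{thm:fl11}) on the query space $([N],1,\Delta_k\times\calP_\lambda^k,f')$, validate the Line~6 sensitivities via the sandwich of Lemma~\ref{lemma:gap} (Assumption~\ref{assumption:parameter} contributing $D$ and $1/\lambda$) followed by the reduction to the $k$-means instance of Definition~\ref{def:clustering_cost} (Lemmas~\ref{lemma:sen_o} and~\ref{lemma:sen_c}, giving total sensitivity $O(kD/\lambda)$), and then combine with a pseudo-dimension bound of $O(k^4d^4+k^3d^8)$. That part of your proposal matches Section~\ref{sec:proof_lm2} step for step.

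The genuine gap is in the pseudo-dimension step. You propose to rewrite the membership test $f'_i(\alpha,\theta)\le r$ as a Boolean combination of \emph{polynomial} inequalities ``after clearing the logarithm, the exponentials and the matrix inverse via a bounded number of auxiliary polynomial relations,'' and then invoke Warren/Goldberg--Jerrum sign-pattern counting. This step fails: $\exp$ is transcendental, so the relation $u_l=\exp\bigl(-\tfrac{1}{2T_i}\psi_i(\mu^{(l)},\Sigma^{(l)},\Lambda^{(l)})\bigr)$ is not a polynomial constraint, and Warren's theorem (and the Goldberg--Jerrum extension to rational computations) does not apply to systems containing it. Taking logarithms would dispose of a \emph{single} exponential, but here the test involves a sum $\sum_{l\in[k]}\alpha_l e^{-\psi_i(\cdot)/2T_i}\ge r$ of $k$ exponentials, which admits no algebraic reformulation for $k\ge 2$. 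The correct tool is precisely what the paper uses: Theorem 8.14 of Anthony--Bartlett (Lemma~\ref{lm:dim_bound}, resting on Karpinski--Macintyre-type bounds for computations that include exponential gates), combined with Lemma~\ref{lm:dim_range} to pass from real-valued $f'_i$ to the indicator functions $g_i$. With $m=O(kd^2)$ parameters, $q=k$ exponential evaluations, and $l=O(kd^6)$ arithmetic operations (each $\psi_i$ expands into $O(d^6)$ monomials in the entries of $\mu^{(l)}$, $\Lambda^{(l)}$, and $(\Sigma^{(l)})^{-1}$), the bound $O(m^2q^2+mq(l+\ln mq))$ yields exactly the claimed $O(k^4d^4+k^3d^8)$ — note the $k^4d^4$ term comes from the $m^2q^2$ contribution of the exponential gates, a dependence your polynomial sign-pattern route would not even produce. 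Everything else in your proposal (nonnegativity of $f'_i$, the sample-size formula with $\delta=0.05$) is sound as stated.
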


\begin{lemma}[\bf{The 2nd stage outputs time level coresets for each $i\in I_S$}]
\label{lemma:time_coreset}
    With probability at least $0.95$, for all $i\in I_S$, the output $J_{S,i}$ of Algorithm~\ref{alg:GMM} with $L:=|J_{S,i}|=O\left(\frac{D d^8 \ln\frac{1}{\lambda}}{\lambda\eps^2}\right)$ satisfies that for any $(\mu, \Sigma, \Lambda)\in \calP_{\lambda}$,
    $
    \sum_{t\in J_{S,i}} w^{(i)}(t) \cdot \psi_{it}(\mu, \Sigma,\Lambda) \in (1\pm \eps)\cdot \psi_i(\mu, \Sigma,\Lambda).
    $
\end{lemma}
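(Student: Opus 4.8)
The plan is to fix an entity $i\in I_S$ and view $\psi_i(\mu,\Sigma,\Lambda)=\sum_{t\in[T_i]}\psi_{it}(\Sigma,\Lambda)$ as a sum, over the ground set $[T_i]$, of functions of the parameter $\theta=(\mu,\Sigma,\Lambda)\in\calP_\lambda$, and then to apply the Feldman--Langberg sensitivity-sampling framework (Theorem~\ref{thm:fl11}) at the time level. Concretely, I would (i) show that the quantity $s_i(t)$ computed in Lines~11--12 upper-bounds the sensitivity $\sigma_i(t):=\sup_{\theta\in\calP_\lambda}\frac{|\psi_{it}(\Sigma,\Lambda)|}{\psi_i(\mu,\Sigma,\Lambda)}$ of time period $t$; (ii) bound the total sensitivity $\sum_{t\in[T_i]}s_i(t)=O(D/\lambda)$; and (iii) bound the pseudo-dimension $\dim$ of the induced range space. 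Plugging these into the Feldman--Langberg bound gives that $L=O\!\big(\tfrac{\sum_t s_i(t)}{\eps^2}(\dim+\log\tfrac1\delta)\big)$ samples make $J_{S,i}$ a multiplicative $\eps$-approximation of $\psi_i$ with probability $1-\delta$; taking $\delta=0.05/M$ and a union bound over the $M$ entities of $I_S$ yields the ``for all $i\in I_S$'' guarantee at confidence $0.95$.

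The crux is step (i), and here I would route the argument through the reduction to $1$-means clustering (Definition~\ref{def:clustering_cost}). Writing $z_{it}=x_{it}-\mu$ and using $\Sigma^{-1}\preceq\lambda_{\min}(\Sigma)^{-1}I$ together with $\|\Lambda\|_2\le 1-\sqrt\lambda<1$, I get per-term upper bounds $|\psi_{it}(\Sigma,\Lambda)|\le\frac{2}{\lambda_{\min}(\Sigma)}\big(\|z_{it}\|_2^2+\|z_{i,t-1}\|_2^2\big)$ for $t\ge 2$ and $|\psi_{i1}|\le\frac{1}{\lambda_{\min}(\Sigma)}\|z_{i1}\|_2^2$ (the subtraction in $\psi_{i1}$ makes it a signed quadratic form, which is why the sensitivity is defined with $|\psi_{it}|$). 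For the denominator I invoke the gap lemma (Lemma~\ref{lemma:gap}), which lower-bounds the AR(1) quadratic form by the plain Euclidean cost, $\psi_i(\mu,\Sigma,\Lambda)\gtrsim\frac{\lambda}{\lambda_{\max}(\Sigma)}\sum_{t'\in[T_i]}\|x_{it'}-\mu\|_2^2$. Dividing, and using the per-cluster condition-number bound $\lambda_{\max}(\Sigma)/\lambda_{\min}(\Sigma)\le D$ from Assumption~\ref{assumption:parameter}, reduces $\sigma_i(t)$ to $O(D/\lambda)$ times the $1$-means sensitivities of $x_{it}$ and $x_{i,t-1}$. A standard bias--variance decomposition $\sum_{t'}\|x_{it'}-\mu\|_2^2=\OPT_i^{(O)}+T_i\|b_i-\mu\|_2^2$ together with $\|x_{it}-\mu\|_2^2\le 2\|x_{it}-b_i\|_2^2+2\|b_i-\mu\|_2^2$ shows the $1$-means sensitivity of $x_{it}$ is at most $\frac{2\|x_{it}-b_i\|_2^2}{\OPT_i^{(O)}}+\frac{2}{T_i}\le s_i^c(t)$; this matches $s_i(t)=\min\{1,4D\lambda^{-1}(s_i^c(t)+\sum_{j=1}^{\min\{t-1,1\}}s_i^c(t-j))\}$, where the inner sum contributes $s_i^c(t-1)$ precisely because $x_{i,t-1}$ also appears in $\psi_{it}$ and is absent for $t=1$.

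For step (ii), summing the $1$-means bound gives $\sum_{t\in[T_i]}s_i^c(t)=\sum_t\big(\frac{2\|x_{it}-b_i\|_2^2}{\OPT_i^{(O)}}+\frac6{T_i}\big)=2+6=8$, so each $s_i^c(t-1)$ is charged at most once more and $\sum_t s_i(t)\le \frac{4D}{\lambda}(8+8)=O(D/\lambda)$. For step (iii), after reparametrizing by $A=\Sigma^{-1}$ each $\psi_{it}$ is a constant-degree polynomial in the $O(d^2)$ free parameters (the entries of $A$ and the diagonal of $\Lambda$), so the pseudo-dimension of the range space is polynomial in $d$; this is the factor that produces the $d^8$ in the stated bound. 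Combining (i)--(iii) in Theorem~\ref{thm:fl11} and absorbing the $\log(1/\delta)=O(\log M)$ and $\log(\sum_t s_i(t))$ terms into the stated polynomial and logarithmic factors gives $L=O\!\big(\frac{Dd^8\ln(1/\lambda)}{\lambda\eps^2}\big)$.

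I expect the main obstacle to be the lower bound supplied by the gap lemma, i.e.\ showing that $\lambda_{\min}$ of the block-tridiagonal AR(1) precision matrix $\Omega$ (whose quadratic form is $\psi_i$) is $\Omega(\lambda/\lambda_{\max}(\Sigma))$. This is exactly where the calibration $\Lambda\in\calD^d_\lambda$ (diagonal entries $\le 1-\sqrt\lambda$, hence $(1-\|\Lambda\|_2)^2\ge\lambda$) is used, and it also requires controlling the signed first-period term $\psi_{i1}$ so that $\psi_i$ stays positive and comparable to $\sum_{t}\|x_{it}-\mu\|_2^2$ uniformly over $\calP_\lambda$. The $1$-means sensitivity bound and the Feldman--Langberg plug-in are routine by comparison.
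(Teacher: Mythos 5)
Your proposal follows essentially the same route as the paper's proof: fix $i\in I_S$, apply the Feldman--Langberg framework (Theorem~\ref{thm:fl11}) to the query space $([T_i],1,\calP_\lambda,\psi_i)$, show $s_i(t)$ from Lines~11--12 dominates the time-level sensitivity by reducing to the covariance/autocorrelation-free cost $\psi_i^{(O)}$ (factor $D$ for $\Sigma$, factor $1/\lambda$ for $\Lambda$, with the extra $s_i^c(t-1)$ term accounting for the appearance of $x_{i,t-1}$ in $\psi_{it}$), bound the total sensitivity by $O(D/\lambda)$ via the $1$-means sensitivities with $\sum_t s_i^c(t)=8$, and bound the pseudo-dimension by $O(d^8)$ via polynomial parameter counting as in Lemma~\ref{lemma:dimension_time}. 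The only cosmetic differences are that you prove the $1$-means sensitivity bound directly (bias--variance decomposition) where the paper cites \cite{varadarajan2012sensitivity} for the $k=1$ case, and you make explicit the union bound over $i\in I_S$ (and the sign issue in $\psi_{i1}$) that the paper leaves implicit.
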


From Lemmas~\ref{lemma:entity_coreset} and~\ref{lemma:time_coreset}, we can conclude that $S$ is an $O(\eps)$-coreset for GMM time-series clustering, since the errors $\eps$ for the entity level and the time level are additively accumulated, and that the size of the coreset $ML$  as claimed; see Section~\ref{sec:proof_thm} for a complete proof.

Both lemmas rely on  the \FL\ framework~\cite{feldman2011unified,braverman2016new} (Theorem~\ref{thm:fl11}).
The key is to upper bound both the ``pseudo-dimension'' $\dim$ (Definition~\ref{def:dim}) that measures the complexity of the parameter space, and the total sensitivity $\calG$ (Definition~\ref{def:sen}) that measures the sum of maximum influences of all sub-functions ($f'_i$ for Lemma~\ref{lemma:entity_coreset} and $\psi_{it}$ for Lemma~\ref{lemma:time_coreset}).
The  coreset size guaranteed by the \FL\ framework then is $O(\eps^{-2} \calG \dim \ln \calG)$.

For Lemma~\ref{lemma:time_coreset}, we can  upper bound the pseudo-dimension by $\dim = O(d^8)$ (Lemma~\ref{lemma:dimension_time}).
For the total sensitivity, a key step is to verify that $s_i$ (Line 12) is a sensitivity function of $\psi_i$.
This step uses a similar idea as in~\cite{huang2020coresets} and leads to showing a $\calG = O(D/\lambda)$ bound for the total sensitivity (Lemma~\ref{lemma:sen_i}).
Then we verify that $L=O(\eps^{-2} \calG \dim \ln \calG)=O\left(\frac{D d^8 \ln\frac{1}{\lambda}}{\lambda\eps^2}\right)$ is enough for Lemma~\ref{lemma:time_coreset} by the \FL\ framework (Theorem~\ref{thm:fl11}).
The proof can be found in Section~\ref{sec:proof_lm3}.

Lemma~\ref{lemma:entity_coreset} is the most technical.
The proof can be found in Section~\ref{sec:proof_lm2} and we provide a proof sketch.

\begin{proofsketch}
By~\cite{anthony2009neural,vidyasagar2002theory}, the pseudo-dimension is determined by the number of parameters that is upper bounded by $O(kd^2)$ and the number of operations on parameters that is upper bounded by $O(kd^6)$ including $O(k)$ exponential functions.
Then we can upper bound the pseudo-dimension by $\dim = O(k^4 d^4 + k^3 d^8)$ (Lemma~\ref{lemma:dimension_individual} in Section~\ref{sec:dimension_individual}).
For the total sensitivity, the key is to verify that $s$ (Line 6) is a sensitivity function w.r.t. $f'$; this is established by Lemma~\ref{lemma:sen_main} below.
Then the total sensitivity is at most $\calG=(16D+12Dk)/\lambda$. 
This completes the proof of Lemma~\ref{lemma:entity_coreset} since $M=O(\eps^{-2} \calG \dim \ln \calG)=O\left(\frac{kD}{\lambda \eps^2}(k^4d^4+k^3d^8)\ln\frac{k}{\lambda}\right)$ is enough by the \FL\ framework (Theorem~\ref{thm:fl11}).

\begin{lemma}[\bf{$s$ is a sensitivity function w.r.t. $f'$}]
	\label{lemma:sen_main}
	For each $i\in [N]$, we have
	$
	s(i) \geq  \max_{\alpha\in \Delta_k, \theta \in \calP_\lambda^k} \frac{f'_i(\alpha, \theta)}{f'(\alpha,\theta)}.
	$
	Moreover, $\sum_{i\in [N]} s(i) \leq (16D+12Dk)/\lambda$.
\end{lemma}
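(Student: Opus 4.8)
The plan is to separate the statement into the per-entity bound $s(i)\ge\max_{\alpha,\theta}f'_i/f'$ and the total bound, and to reduce the former (the hard part) to a plain $k$-means sensitivity estimate for the entity means $b_i$. Throughout I write $z_{il}:=\tfrac{1}{2T_i}\psi_i(\mu^{(l)},\Sigma^{(l)},\Lambda^{(l)})$, so that $f'_i(\alpha,\theta)=-\ln\sum_{l\in[k]}\alpha_l e^{-z_{il}}$. Two elementary facts come first: every $z_{il}\ge 0$ (the per-sequence precision form is positive semidefinite), hence every $f'_{i'}\ge 0$ and $f'\ge f'_i$, which already gives the trivial bound $f'_i(\alpha,\theta)/f'(\alpha,\theta)\le 1$.

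First I would establish the gap lemma (Lemma~\ref{lemma:gap}) controlling the effect of $\Sigma$ and $\Lambda$: writing $\sigma_{\max}:=\max_l\lambda_{\max}(\Sigma^{(l)})$ and $\sigma_{\min}:=\min_l\lambda_{\min}(\Sigma^{(l)})$,
\[
\tfrac{\lambda}{\sigma_{\max}}\,\psi_i(\mu,I_d,0_d)\ \le\ \psi_i(\mu,\Sigma,\Lambda)\ \le\ \tfrac{4}{\sigma_{\min}}\,\psi_i(\mu,I_d,0_d).
\]
The $\Sigma$ factor is immediate from $\lambda_{\min}(\Sigma^{-1})\ge 1/\sigma_{\max}$ and $\lambda_{\max}(\Sigma^{-1})\le 1/\sigma_{\min}$. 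The $\Lambda$ factor is the substantive one: since $\Lambda$ is diagonal, $\psi_i(\mu,I_d,\Lambda)$ splits coordinatewise into an AR(1) precision form whose tridiagonal matrix has diagonal $(1,1+\rho^2,\dots,1+\rho^2,1)$ and off-diagonal $-\rho$ (the boundary correction in $\psi_{i1}$ is exactly what turns the first diagonal entry into $1$); with $|\rho|\le 1-\sqrt\lambda$ its smallest eigenvalue is at least $(1-|\rho|)^2\ge\lambda$ and its largest at most $(1+|\rho|)^2\le 4$. Combined with the identity $\tfrac{1}{T_i}\psi_i(\mu,I_d,0_d)=\|b_i-\mu\|_2^2+a_i$ (the observation preceding Definition~\ref{def:clustering_cost}), this yields $z_{il}\in[\tfrac{\lambda}{2\sigma_{\max}}h_{il},\tfrac{2}{\sigma_{\min}}h_{il}]$ with $h_{il}:=\|b_i-\mu^{(l)}\|_2^2+a_i$; the ratio of the two coefficients is exactly $\tfrac{4\sigma_{\max}}{\lambda\sigma_{\min}}\le\tfrac{4D}{\lambda}$, which is the source of the constant in $s(i)$.

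The heart of the argument is then a reduction showing, for every $(\alpha,\theta)$,
\[
\frac{f'_i(\alpha,\theta)}{f'(\alpha,\theta)}\ \le\ \frac{4D}{\lambda}\cdot\max_{C=\{c_1,\dots,c_k\}\subset\R^d}\frac{\min_l\|b_i-c_l\|_2^2+a_i}{\sum_{i'\in[N]}\big(\min_l\|b_{i'}-c_l\|_2^2+a_{i'}\big)}.
\]
To handle the mixing of $\ln$ and $\exp$ together with the weights $\alpha$, I would rewrite $f'_i=-\ln\sum_l e^{-\tilde z_{il}}$ with $\tilde z_{il}:=z_{il}-\ln\alpha_l$, so that $\min_l\tilde z_{il}-\ln k\le f'_i\le\min_l\tilde z_{il}$; thus $f'_i$ behaves, up to an additive $\ln k$, like the cost of $b_i$ in a $k$-means instance whose centers carry per-center penalties $-\ln\alpha_l$. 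Summing this sandwich lower-bounds $f'$ by the total penalized cost and upper-bounds $f'_i$ by $i$'s penalized cost, and it remains to show that this penalized sensitivity is dominated — up to the $\tfrac{4D}{\lambda}$ factor and a maximum over center configurations $C$ — by the ordinary (unpenalized) augmented $k$-means sensitivity. The witnessing $C$ is $\{\mu^{(l)}\}$ with the low-weight centers dropped: heavy centers contribute only an $O(\ln k)$ penalty absorbed by the denominator, whereas a light center used by $i$ effectively isolates $b_i$, a configuration that already charges sensitivity $\Omega(1)$ through the $\max_C$. This is the step I expect to be the main obstacle: the adversary can drive some $\alpha_l\to 0$, and one must show that every resulting blow-up of $f'_i$ is matched by a proportional blow-up of $f'$, so that the scale-free $\ln(1/\alpha_l)$ penalties and the additive $\ln k$ slack are reconciled with the $\Sigma$-dependent scaling without losing any factor beyond $\tfrac{4D}{\lambda}$.

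Given the reduction, I would invoke the $k$-means sensitivity bound (Lemma~\ref{lemma:sen_o}), which bounds the displayed maximum by $\bar s(i):=\tfrac{4\|b_i-c^\star_{p(i)}\|_2^2}{\OPT^{(O)}+A}+3\,s^c(i)$ using the fixed $O(1)$-approximate solution $C^\star$, and combine it with the trivial bound $f'_i/f'\le 1$ to obtain $\max_{\alpha,\theta}f'_i/f'\le\min\{1,\tfrac{4D}{\lambda}\bar s(i)\}=s(i)$, the first claim. For the ``Moreover'' part I would simply sum: since $s(i)\le\tfrac{4D}{\lambda}\bar s(i)$, $\sum_i\|b_i-c^\star_{p(i)}\|_2^2=\OPT^{(O)}\le\OPT^{(O)}+A$, and $\sum_i s^c(i)=\sum_{l\in[k]}|\{i:p(i)=l\}|\cdot\tfrac{1}{|\{i:p(i)=l\}|}=k$, we get $\sum_i s(i)\le\tfrac{4D}{\lambda}(4+3k)=(16D+12Dk)/\lambda$, as claimed.
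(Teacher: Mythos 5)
Your outer scaffolding is sound and matches the paper: the two-sided bound $\frac{\lambda}{\sigma_{\max}}\psi_i(\mu,I_d,0_d)\le\psi_i(\mu,\Sigma,\Lambda)\le\frac{4}{\sigma_{\min}}\psi_i(\mu,I_d,0_d)$ is exactly the claim inside the paper's proof of Lemma~\ref{lemma:gap} (your tridiagonal eigenvalue bounds $(1-|\rho|)^2\ge\lambda$ and $(1+|\rho|)^2\le 4$ are the paper's coordinatewise computation), and your summation for the ``Moreover'' part is identical to the paper's. But the heart of the lemma --- converting the pointwise band on $\psi_i$ into a sensitivity bound for the log-sum-exp objective --- is a genuine gap, and you flagged it yourself. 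The sandwich $\min_l\tilde z_{il}-\ln k\le f'_i\le\min_l\tilde z_{il}$ carries additive slack ($\ln k$ plus the scale-free penalties $\ln(1/\alpha_l)$) that cannot be absorbed into a multiplicative ratio bound: when every $b_{i'}$ is close to its nearest $\mu^{(l)}$, the denominator $f'$ can be arbitrarily small while the slack stays $\Theta(\ln k)$, so no inequality of the form $f'_i/f'\le\frac{4D}{\lambda}\max_C(\cdots)$ follows from the sandwich, and your heavy/light-center argument is precisely the unproven step. Two further mismatches: (i) even the $\frac{4D}{\lambda}$ factor does not transfer from exponents to objectives for free, since a constant multiplying the exponent does not commute with $-\ln\sum_l\alpha_l e^{-(\cdot)}$; the paper needs the Jensen/power inequalities $\sum_l\alpha_l x_l^p\le(\sum_l\alpha_l x_l)^p$ for $p\in(0,1]$ (and the reverse for $p\ge 1$) in the chain proving Lemma~\ref{lemma:gap}, a step your sketch skips; (ii) Lemma~\ref{lemma:sen_o} is not a hard-min $k$-means sensitivity bound --- it bounds $s^{(O)}(i)$, a supremum of ratios of the \emph{soft} objectives $f^{(O)}_i/f^{(O)}$ --- so you cannot invoke it for your $\max_C$ quantity as stated (the analogous hard-min bound is classical, but it is not what the paper proves, nor would it suffice without your missing reduction).

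The paper's route avoids your obstacle entirely: it never reduces to a hard min. Instead it shows the soft-min itself satisfies a relaxed triangle inequality, $\pi_{a,\theta,L}(y)\le\frac{2}{L}\|y-y'\|_2^2+\pi_{a,\theta,L}(y')$ where $\pi_{a,\theta,L}(y)=-\ln\sum_{l\in[k]}\alpha_l\exp\bigl(-\frac{1}{2L}(\|y-\mu^{(l)}\|_2^2+a)\bigr)$ (Lemma~\ref{lemma:triangle_inequality}). Writing $f^{(O)}_i(\theta)=\pi_{a_i,\theta,\beta}(b_i)$ and applying this twice --- projecting $b_i$ to $c^\star_{p(i)}$ and transporting each $c^\star_{p(j)}$ back to $b_j$ --- reduces everything to the projected functions $f^c_i$, which take at most $k$ distinct values, so $s^c(i)=1/|\{i':p(i')=p(i)\}|$ is a sensitivity function for them with total mass $k$ (Lemma~\ref{lemma:sen_c}); combining with the lower bound $\min_\theta f^{(O)}(\theta)\ge\frac{1}{2\beta}(\OPT^{(O)}+A)$ gives $s^{(O)}(i)\le\frac{4\|b_i-c^\star_{p(i)}\|_2^2}{\OPT^{(O)}+A}+3s^c(i)$ (Lemma~\ref{lemma:sen_o}). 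Crucially, the mixture weights $\alpha_l$ ride along inside $\pi$ throughout, so no $\ln(1/\alpha_l)$ penalties ever arise and no additive $\ln k$ slack has to be reconciled. Your proposal, as written, rests on the one step it concedes it cannot close.
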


\noindent
{\em Proof sketch of Lemma~\ref{lemma:sen_main}.}
We first define $\psi^{(O)}_i(\mu) := \sum_{t\in [T_i]} \|x_{it}-\mu\|_2^2$ for any $\mu\in \R^d$, and define $f_i^{(O)}(\alpha, \theta^{(O)})= - \ln \sum_{l\in [k]} \alpha_l \cdot \exp\left(-\frac{1}{2T_i\cdot \min_{l'\in [k]}\lambda_{\min}(\Sigma^{(l')})}\psi^{(O)}_i(\mu^{(l)})\right)$.
Based on $f_i^{(O)}$, we define another sensitivity function $	s^{(O)}(i) := \max_{\alpha\in \Delta_k, \theta^{(O)}\in \R^{d\times k}} \frac{f^{(O)}_i(\alpha, \theta^{(O)})}{f^{(O)}(\alpha, \theta^{(O)})}$ and want to reduce $s$ to $s^{(O)}_i$.
Note that $\psi^{(O)}_i(\mu) = \psi_i(\mu, I_d, 0_d)$, i.e., $\psi^{(O)}_i$ removes the covariance and autocorrelation parameters from $\psi_i$.
Due to this observation, we have that $s^{(O)}(i) \leq \max_{\alpha\in \Delta_k, \theta \in \calP_\lambda^k} \frac{f'_i(\alpha, \theta)}{f'(\alpha,\theta)} \leq 4D\cdot s^{(O)}(i)/\lambda$ by upper bounding the affect of covariance parameters (controlled by factor $D$) and autocorrelation parameters (controlled by factor $1/\lambda$), summarized as Lemma~\ref{lemma:gap}. 
Then to prove Lemma~\ref{lemma:sen_main}, it suffices to prove that $s^{(O)}(i) \leq \frac{4 \|b_i - c^\star_{p(i)}\|_2^2}{\OPT^{(O)}+A} + 3 s^c(i)$ which implies that the total sensitivity of $s^{(O)}$ is upper bounded by $4+ 3k$ (Lemma~\ref{lemma:sen_o}), due to fact that $\sum_{i\in [N]} s^c(i) \leq k$ (Lemma~\ref{lemma:sen_c}).
Finally, the proof of Lemma~\ref{lemma:sen_o} is based on a reduction from $\psi^{(O)}$ to the $k$-means clustering problem of entities (Definition~\ref{def:clustering_cost}) by rewriting $\frac{1}{T_i}\psi^{(O)}_i(\mu) = \|b_i - \mu\|_2^2 + a_i$ and then projecting each $b_i$ to its closest center in $C^\star$.
The full proof of this lemma can be found in Section~\ref{sec:sen_individual}. 

\noindent{\em Part 3: Running time in Theorem~\ref{thm:coreset_GMM}.}
The first stage costs $O(d\sum_{i\in [N]}T_i + Ndk\ln N \ln k)$ time.
The dominating steps are 1) to compute $b_i$ in Line 2 which costs $O(d\sum_{i\in [N]}T_i)$ time; 2) to solve the $k$-means clustering problem of entities (Line 3) which costs $Ndk\ln N \ln k)$ time.
The second stage costs at most $O(d\sum_{i\in [N]}T_i)$ time.
The dominating step is to compute $\OPT_i^{(O)}$ for all $i\in I_S$ in Line 10, where it costs $O(d T_i)$ time to compute each $\OPT_i^{(O)}$.

\end{proofsketch}

\begin{remark}[\bf{Technical comparison with prior works}]
    \label{remark:comparison}
    Note that our entity coreset construction uses some ideas from~\cite{feldman2011scalable,lucic2017training} and also develops novel technical ideas. 
    A generalization of~\cite{feldman2011scalable,lucic2017training} to time series data is to treat all observations $x_{it}$ independent and compute a sensitivity for each $x_{it}$ directly for importance sampling. 
    However, this idea cannot capture the property that multiple observations $x_{i,1},\ldots,x_{i,T_i}$ are drawn from the same Gaussian model (certain $l\in [k]$). 
    To handle multiple observations, we show that although each $\psi_i$ consists of $T_i$ sub-functions $\psi_{it}$, it can be approximated by a single function on the average observation $b_i = \frac{\sum_{t\in [T_i]}x_{it}}{T_i}$, specifically, we have $\psi_i(\mu, I_d, 0_d) = d(b_i, \mu)^2 + O(1)$. 
    This property enables us to "reduce" the representative complexity of $\psi_i$, and motivate two key steps of our construction: 1) For the sensitivity function, we give a reduction to a certain $k$-means clustering problem on average observations $b_i = \frac{\sum_{t\in [T_i]}x_{it}}{T_i}$ of entities (Definition 4.2) by upper bounding the maximum effect of covariances and autocorrelations and applying the fact that $\psi_i(\mu, I_d, 0_d) = d(b_i, \mu)^2 + O(1)$. 
    2) For the pseudo-dimension, we prove that there are only $\mathrm{poly}(k,d)$ intristic operators in $\psi_i$ between parameters $\theta$ and observations $x_{it}$, based on the reduction of the representative complexity of $\psi_i$.

    Generalizing our results in the same manner as~\cite{feldman2019coresets} does over~\cite{feldman2011scalable,lucic2017training} would be an interesting future direction since we may get rid of Assumption~\ref{assumption:parameter}. 
    Currently, it is unclear how to generalize the approach of~\cite{feldman2019coresets} to time series data since~\cite{feldman2019coresets} assumes that each point is an integral point within a bounded box, while we consider a continuous GMM generative model~\eqref{eq:prob}, and hence, each coordinate of an arbitrary observation $x_{itr}$ is drawn from a certain continuous GMM distribution which is not integral with probability $\approx 1$ and can be unbounded. 
\end{remark}

\begin{remark}[Discussion on lower bounds]
    \label{remark:lower}
    There is no provable lower bound result for our GMM coreset with time series data. 
    We conjecture that without the first condition in Assumption~\ref{assumption:parameter}, the coreset size should depend exponentially in $k$ and logarithmic in $n$. 
    The motivation is from a simple setting that all $T_i=1$ (GMM with static data), in which~\cite{feldman2019coresets} reduces the problem to projective clustering whose coreset size depends exponentially in $k$ and logarithmic in $n$. 
    Moreover,~\cite{feldman2019coresets} believes that these dependencies are unavoidable for GMM coreset.
\end{remark}


\section{Empirical results} 
\label{sec:empirical}

We compare the performance of our coreset algorithm $\CR$, with uniform sampling as a benchmark using synthetic data.
The experiments are conducted with PyCharm IDE on a computer with 8-core CPU and 32 GB RAM.

\paragraph{Datasets.}
We generate two sets of \textbf{synthetic data} all with 250K observations with different number of entities $N$ and observations per individual $T_i$: (i) $N=T_i = 500$ for all $i\in [N]$ and (ii) $N=200$, $T_i  = 1250$ for all $i\in [N]$. As cross-entity variance is greater than within entity variance, this helps to assess coreset performance sensitivity when there are more entities (high $N$) vs. more frequent measurement/ longer measurement period (high $T_i$). 

We fix $d=2$, $k=3$, $\lambda = 0.01$ and
generate multiple datasets with model parameters randomly varied as follows: (i) Draw $\alpha$ from a uniform distribution over $\Delta_k$. (ii) For each $l\in [k]$, draw $\mu^{(l)}\in \R^d$ from $N(0, I_d)$. (iii) Draw $\Sigma^{(l)}$ by first generating a random matrix $A\in [0,1]^{d\times d}$ and then let $\Sigma^{(l)} = (AA^\top)^{-1}$, and (iv) draw all diagonal elements of $\Lambda^{(l)}\in \calD^d$ from a uniform distribution over [0, $1-\sqrt{\lambda}$].
Given these draws of parameters, we generate a GMM time-series dataset as follows: 
For each $i\in [N]$,  draw $l\in [k]$ given $\alpha$. Then, for all $t\in [T_i]$ draw $e_{it}\in \R^d$  with covariance matrix $\Sigma^{(l)}$ and autocorrelation matrix $\Lambda^{(l)}$ and compute $x_{it}=\mu^{(l)} + e_{it}\in \R^{d}$.

\paragraph{Baseline and metrics.}
We use coresets based on uniform sampling (\textbf{Uni}) and based on~\cite{lucic2017training} (\textbf{LFKF}) that constructs a coreset for Gaussian mixture model with static data as the baseline.
Given an integer $\Gamma$, uniformly sample a collection $S$ of $\Gamma$ entity-time pairs $(i, t) \in P_\calX$, let $w(i) = \frac{N}{|I_S|}$ for each $i\in I_S$, and let $w^{(i)}(t) = \frac{T_i}{|J_{S,i}|}$ for each $t\in J_{S,i}$; and \textbf{LFKF} regard all entity-time pairs as independent points and sample a collection $S$ of $\Gamma$ entity-time pairs $(i, t) \in P_\calX$ via importance sampling. 
For comparability, we set $\Gamma$ to be the same as our coreset size. 
The goal of selecting \textbf{LFKF} as a baseline is to see the effect of autocorrelations to the objective and show the difference between static clustering and time series clustering.

Let $V^\star$ (negative log-likelihood) denote the objective of Problem~\ref{def:GMM} with full data.
Given a weighted subset $S\subseteq [N]\times [T]$, we first compute $(\alpha^\star,\theta^\star):=\arg\min_{\alpha\in \Delta_k, \theta\in \calP_\lambda^k} f'_S(\alpha'(\theta), \theta) + \phi(\alpha, \theta)$ as the maximum likelihood solution over $S$.\footnote{We solve this optimization problem using an EM algorithm similar to \cite{arcidiacono2003finite}. The M step in each iteration is based on  IRLS~\cite{jorgensen2006iteratively} and the E-step involves an individual level Bayesian update for $\alpha$.}
Then $V^\star_S = f(\alpha^\star, \theta^\star)$ over the full dataset serves as a metric of model fit given model estimates from a weighted subset $S$.
We use the likelihood ratio, i.e., $\gamma_S := 2(V^\star_S - V^\star)$  as a measure of the quality of $S$ ~\cite{buse1982likelihood}.
The running time for GMM clustering with full data ($V^\star$) and coreset ($V^\star_S$) are $T_\calX$ and $T_S$ respectively. 
$T_C$ is the coreset $S$ construction time.

\paragraph{Empirical setup.}
We vary $\eps = 0.1, 0.2, 0.3, 0.4, 0.5$.
For each $\eps$, we run $\CR$ and \textbf{Uni} to generate 5 coresets each.
We estimate the model with the full dataset $\calX$ and the coresets $S$ and record $V^\star$, $V^\star_S$ and the run times.

\FloatBarrier
\begin{table}[htbp]
	\centering
	\caption{
	\small Performance of $\eps$-coresets for $\CR$ w.r.t. varying $\eps$. 
	We report model fit as the mean of $V^\star_S$ (negative log-likelihood) w.r.t. 5 repetitions for our algorithm $\CR$ and \textbf{Uni} on two synthetic datasets. ``Synthetic 1'' ($N=500$ entities and $T_i=500$ observations for each entity), and ``Synthetic 2'' ($N=200$, $T_i=1250$);
	Size ($\Gamma$): number of sampled entity-time pairs for $\CR$, \textbf{Uni}, and \textbf{LFKF};
	$T_C$: construction time of coresets, which is at most $2$s in our experiments.
	$T_S$ and $T_\calX$: computation time for GMM clustering time over coresets and full data respectively.$\\$
	Model fit $V^\star_S$ of our coreset is very close relative to the optimal $V^\star$ with full data; 
	and much better than \textbf{Uni} and \textbf{LFKF}. Our coreset fit also does not decay as much as \textbf{Uni} and \textbf{LFKF} with larger error guarantees (0.1 to 0.5). 
	Our coreset achieves a 3x-22x acceleration in solving Problem~\ref{def:GMM} compared to \textbf{Uni} and \textbf{LFKF} using same sample size and a 14x-171x acceleration relative to the full data.
	}
	\label{tab:GMM}
	\footnotesize
	\begin{tabular}{c|c|ccc|c|ccc|c|ccc|c}
		\toprule
		& $\eps$ & \multicolumn{3}{c|}{$V^\star_S$} & $V^\star$ & \multicolumn{3}{c|}{\footnotesize{$\gamma_S$}} & \footnotesize{size} &  \multicolumn{3}{c|}{$T_C + T_S$ (s)} & $T_\calX$ (s) \\ 
		& & $\CR$ & \textbf{Uni} & \textbf{LFKF} & & $\CR$ & \textbf{Uni} &\textbf{LFKF} & & $\CR$ &  \textbf{Uni} &\textbf{LFKF} & \\
		\midrule 
		\multirow{5}{*}{\rotatebox[origin=c]{90}{Synthetic 1}} & 0.1 & 2050  & 2058 & 2069 &	2041 & 22 & 34 & 56 &	1514 &	109 &	2416  & 1355 &	3436 \\
		& 0.2 & 2093 &	2210&	2264&	&	104&	342&	446&	404  & 41&	1054&	652	&\\
		& 0.3 & 2194&	2398&	2384&	&	306&	714&	686&	191 & 62&	419&	392& \\
		& 0.4 & 2335&	3963&	2705&	&	588&	3844&	1328&	93 & 38	&621&	260& \\
		& 0.5 & 2383& 3304&	3461&	&	684&	2526&	2840&	72 & 47&	132&	147& \\ 
		\midrule
		\multirow{5}{*}{\rotatebox[origin=c]{90}{Synthetic 2}} & 0.1 & 811&	825&	841&	812&	2&	26&	58&	1718& 694&	1859&	1687&	9787\\
		& 0.2 & 824	&895&	871&	&	24&	166&	118&	447  &  139&	1991&	1484&\\
		& 0.3 & 864&	958&	994&	&	104&	292&	364&	199  &51&	527&	832& \\
		& 0.4 & 860&	1190&	1114&	&	96&	756&	604&	98 & 43&	408&	450&\\
		& 0.5 & 910&	1361&	1284&	&	196&	1098&	944&	71  & 57&	389&	277&\\
		\bottomrule
	\end{tabular}
	\normalsize

\end{table}

\paragraph{Results.}
Table~\ref{tab:GMM} summarizes the quality-size-time trade-offs of our coresets for different error guarantees $\eps$. 
The $V^\star_S$ achieved by our coreset is always close to the optimal value $V^\star$ and is always smaller than that of \textbf{Uni} and \textbf{LFKF}. 
We next assess coreset size. For the $N=T_i=500$ case, the coreset achieves a likelihood ratio $\gamma_S=684$ with only 72 entity-time pairs (0.03\%), but \textbf{Uni}/\textbf{LFKF} achieves the closest (but worse) likelihood ratio of $\gamma_S=714/686$, with 191 entity-time pairs (0.08\%). Thus, our coreset achieves a better likelihood than uniform and \textbf{LFKF} with less than 40\% of observations.
Further, from Figure \ref{fig:likelihoodratio}, we see that not only is the quality of our coreset ($\gamma_S$) superior, it has lower standard deviations suggesting lower variance in performance.
In terms of total computation time relative to the full data ($\frac{T_\calX}{T_C+T_S}$), our coresets speed up by 14x-171x. Also, the computation time ($T_C+T_S$) of \textbf{Uni} and \textbf{LFKF} is always larger than that of our coreset (3x-22x).\footnote{A possible explanation is that our coreset selects $I_S$ of entities that are more representative that \textbf{Uni}, and hence, achieves a better convergence speed.}
Finally, $V^\star_S$ is lower for Synthetic 2 given that there are fewer entities and cross-entity variance is greater than within entity variance. From Figure \ref{fig:likelihoodratio}, we also see that both the fit and std in performance for coresets is much lower for Synthetic 2. 
Thus, overall our coreset performs better when there is more time series data relative to entities ($T_i \gg N$)---a feature of sensor based measurement that motivates this paper.

\begin{figure}
	\includegraphics[width = 0.48\textwidth]{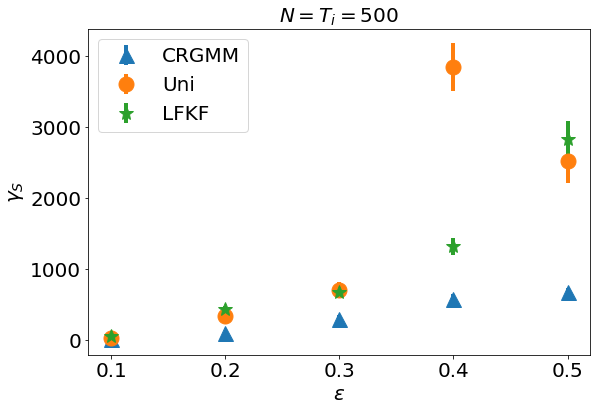}
	\quad
	\includegraphics[width = 0.48\textwidth]{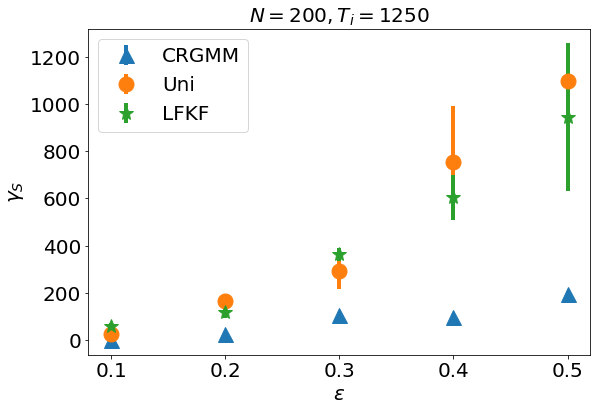}
	\vspace{-2mm}
	\caption{\small Mean +/- std. of $\gamma_S$, w.r.t. varying $\eps$. $\gamma_S$ is the likelihood ratio (LR) for estimates from coresets relative to full data. $\CR$ not only has better fit than \textbf{Uni} and \textbf{LFKF}, but much small std. in performance. The std. for our coreset is even smaller for Synthetic 2 with fewer entities.}
	\label{fig:likelihoodratio}
\end{figure}


\section{Proof of Lemma~\ref{lemma:entity_coreset}: The first stage of Algorithm~\ref{alg:GMM} outputs an entity-level coreset}
\label{sec:proof_lm2}

For preparation, we first introduce an importance sampling framework for coreset construction, called the \FL\ framework~\cite{feldman2011unified,braverman2016new}.

\subsection{The \FL\  framework}
\label{sec:FL}

We first give the definition of query space and the corresponding coresets.

\begin{definition}[\bf{Query space~\cite{feldman2011unified,braverman2016new}}]
	\label{def:query_space}
	Let $\calX$ be a finite set together with a weight function $u: \calX\rightarrow \R_{\geq 0}$.
	Let $\calP$ be a set called queries, and $f_x:\calP\rightarrow \R_{\geq 0}$ be a given loss function w.r.t. $x\in \calX$.
	The total cost of $\calX$ with respect to a query $\theta\in \calP$ is
	$
	f(\theta) := \sum_{x\in \calX} u(x)\cdot f_x(\theta).
	$
	The tuple $(\calX,u,\calP,f)$ is called a query space.
	Specifically, if $u(x)=1$ for all $x\in \calX$, we use $(\calX,\calP,f)$ for simplicity.
\end{definition}

\noindent
Intuitively, $f$ represents a linear combination of weighted functions indexed by $\calX$, and $\calP$ represents the ground set of $f$.
Due to the separability of $f$, we have the following coreset definition.

\begin{definition}[\bf{Coresets of a query space~\cite{feldman2011unified,braverman2016new}}]
	\label{def:coreset_query}
	Let $(\calX,u,\calP,f)$ be a query space and $\eps \in (0,1)$ be an error parameter.
	An $\eps$-coreset of $(\calX,u,\calP,f)$ is a weighted set $S\subseteq \calX$ together with a weight function $w: S\rightarrow \R_{\geq 0}$ such that for any $\theta\in \calP$,
	$
	\sum_{x\in S} w(x)\cdot f_x(\theta) \in (1\pm \eps)\cdot f(\theta).
	$
\end{definition}

\begin{remark}
For instance, we set $\calX = [N]$, $u=1$, $\calP = \Delta_k\times \calP_\lambda^k$ and $f=f'$ in the GMM time-series clustering problem.
Then by Definition~\ref{def:coreset_query}, Lemma~\ref{lemma:entity_coreset} represents that $(I_S, w)$ is an $\eps$-coreset for the query space $([N], 1, \Delta_k\times \calP_\lambda^k, f')$.

Another example is to set $\calX = [T_i]$, $u = 1$, $\calP = \calP_\lambda$ and $f = \psi_i$.
Then Lemma~\ref{lemma:time_coreset} represents that $(J_{S,i}, w^{(i)})$ is an $\eps$-coreset for the query space $([T_i], 1, \calP_\lambda, \psi_i)$.
\end{remark}

\noindent
Now we are ready to give the \FL\ framework.

\paragraph{The Feldman-Langberg framework.}
Feldman and Langberg~\cite{feldman2011unified} show how to construct coresets by importance sampling and the coreset size has been improved by~\cite{braverman2016new}.
For preparation, we first give the notion of sensitivity which measures the maximum influence for each point $x\in \calX$.

\begin{definition}[\bf{Sensitivity~\cite{feldman2011unified,braverman2016new}}]
	\label{def:sen}
	Given a query space $(\calX,u,\calP,f)$, the sensitivity of a point $x\in \calX$ is
	$
	s(x):= \sup_{\theta\in \calP} \frac{u(x)\cdot f_x(\theta)}{f(\calX,u,\theta)}.
	$
	The total sensitivity of the query space is $\sum_{x\in \calX} s(x)$.
\end{definition}

\noindent
We also introduce a notion which measures the combinatorial complexity of a query space. 

\begin{definition}[\bf{Pseudo-dimension~\cite{feldman2011unified,braverman2016new}}]
	\label{def:dim}
	For a query space $(\calX,u,\calP,f)$, we define
	$
	\range(\theta, r) = \left\{x\in \calX: u(x)\cdot f_x(\theta)\leq r\right\}
	$
	for every $\theta\in \calP$ and $r\geq 0$.
	The (pseudo-)dimension of $(\calX,u,\calP,f)$ is the largest integer $t$ such that there exists a subset $A\subseteq \calX$ of size $t$ satisfying that
	$
	|\left\{A\cap \range(\theta,r): \theta\in \calP, r\geq 0\right\}| = 2^{|A|}.
	$
\end{definition}

\noindent
Pseudo-dimension plays the same role as VC-dimension~\cite{vapnik1971uniform}.
Specifically, if the range of $f$ is $\left\{0,1\right\}$ and $u=1$, pseudo-dimension can be regarded as a generalization of VC-dimension to function spaces.
Now we are ready to describe the Feldman-Langberg framework.

\begin{theorem}[\bf{Feldman-Langberg framework~\cite{feldman2011unified,braverman2016new}}]
	\label{thm:fl11}
	Let $(\calX,u,\calP,f)$ be a given query space and $\eps,\delta\in (0,1)$.
	Let $\dim$ be an upper bound of the pseudo-dimension of every query space $(\calX,u',\calP,f)$ over $u'$.
	Suppose $s:\calX\rightarrow \R_{\geq 0}$ is a function satisfying that for any $x\in \calX$,
	$
	s(x) \geq \sup_{\theta\in \calP} \frac{u(x)\cdot f_x(\theta)}{f(\calX,u,\theta)},
	$
	and define $\calG := \sum_{x\in \calX} s(x)$ to be the total sensitivity.
	Let $S\subseteq \calX$ be constructed by taking 
	$
	O\left(\eps^{-2} \calG (\dim \cdot \ln \calG +\ln(1/\delta))\right)
	$
	samples, where each sample $x\in \calX$ is selected with probability $\frac{s(x)}{\calG}$ and has weight $w(x):= \frac{\calG}{|S|\cdot s(x)}$.
	Then, with probability at least $1-\delta$, $S$ is an $\eps$-coreset of $(\calX,u,\calP,f)$.
\end{theorem}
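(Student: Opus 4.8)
The plan is to recognize this as the standard importance-sampling coreset argument and to prove it by reducing the multiplicative coreset guarantee to a \emph{uniform} deviation bound for a rescaled, $[0,1]$-valued function class, which I then control with a variance-sensitive concentration inequality together with the pseudo-dimension. For concreteness I take $u\equiv 1$ as in the two applications (Lemmas~\ref{lemma:entity_coreset} and~\ref{lemma:time_coreset}); the general case follows by absorbing $u(x)$ into $f_x$. First I would normalize: for each query $\theta\in\calP$ define $g_\theta:\calX\to\R$ by $g_\theta(x) := \frac{f_x(\theta)}{s(x)\, f(\theta)}$ (with $g_\theta(x):=0$ when $s(x)=0$). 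The sensitivity hypothesis $s(x)\ge \sup_\theta f_x(\theta)/f(\theta)$ guarantees $g_\theta(x)\in[0,1]$, which is the crucial boundedness. Sampling $x$ with probability $p(x)=s(x)/\calG$ gives $\E_{x\sim p}[g_\theta(x)] = \frac{1}{\calG f(\theta)}\sum_{x\in\calX} f_x(\theta) = \frac{1}{\calG}$, and a direct computation shows that the estimator $\sum_{x\in S} w(x) f_x(\theta)$ with $w(x)=\calG/(|S|\,s(x))$ equals $\calG f(\theta)\cdot \frac{1}{|S|}\sum_{x\in S} g_\theta(x)$. Hence the desired conclusion $\sum_{x\in S} w(x) f_x(\theta)\in(1\pm\eps)f(\theta)$ is \emph{equivalent} to the empirical mean of $g_\theta$ lying within $\eps/\calG$ of its expectation $1/\calG$. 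The theorem thus reduces entirely to the uniform bound $\sup_{\theta\in\calP}\big|\frac{1}{|S|}\sum_{x\in S} g_\theta(x) - \frac{1}{\calG}\big|\le \eps/\calG$ holding with probability $\ge 1-\delta$.

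Next I would establish concentration for a single fixed $\theta$, and this is where the linear-in-$\calG$ sample size comes from. Because $g_\theta\in[0,1]$ and $\E[g_\theta]=1/\calG$, its variance satisfies $\mathrm{Var}(g_\theta)\le \E[g_\theta^2]\le \E[g_\theta]=1/\calG$. Applying Bernstein's inequality to the i.i.d.\ sample, the deviation $\eps/\calG$ fails with probability at most $2\exp\big(-\Omega(m\eps^2/\calG)\big)$, so $m=O(\eps^{-2}\calG\ln(1/\delta))$ samples suffice for any one query. A Hoeffding bound would instead use only the range and force $m=\Omega(\calG^2/\eps^2)$; exploiting the variance bound $1/\calG$ is exactly what keeps the dependence on $\calG$ linear.

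The remaining, and most delicate, step is to upgrade this to hold simultaneously over all $\theta\in\calP$, and here I would invoke the pseudo-dimension hypothesis. The key observation is that the sublevel sets $\{x : g_\theta(x)\le r\}$ are exactly the ranges $\range(\theta, r\, f(\theta))$ of the reweighted query space $(\calX, 1/s, \calP, f)$; since $r$ ranges over all of $\R_{\ge 0}$ the factor $f(\theta)$ is immaterial, so the family of ranges, and hence the pseudo-dimension, of the class $\calF=\{g_\theta:\theta\in\calP\}$ coincides with that of $(\calX,1/s,\calP,f)$, which is at most $\dim$ by hypothesis. This is precisely why the theorem assumes $\dim$ bounds the pseudo-dimension of $(\calX,u',\calP,f)$ \emph{for every} weight $u'$: the analysis works with the reweighting $u'=1/s$. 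I would then apply a variance-sensitive (relative) uniform-convergence theorem for classes of bounded pseudo-dimension, which promotes the per-query Bernstein estimate to a uniform $\eps/\calG$-deviation bound while replacing $\ln(1/\delta)$ by $\dim\cdot\ln\calG+\ln(1/\delta)$, giving $m=O\big(\eps^{-2}\calG(\dim\ln\calG+\ln(1/\delta))\big)$.

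The main obstacle is this last uniform-convergence step: a naive route (a union bound over an $\eps$-net of $\calP$, or a variance-blind uniform deviation inequality) reintroduces a factor of $\calG$ and yields the wrong, quadratic sample size. The careful part is combining the small-variance structure $\E[g_\theta^2]\le 1/\calG$ with a VC-type covering argument so that both the linear $\calG$ factor and the $\dim\ln\calG$ factor appear simultaneously; this is the content of the relative $\eps$-approximation machinery of~\cite{feldman2011unified,braverman2016new}, and verifying that the rescaled class genuinely inherits the pseudo-dimension bound is the point most in need of care.
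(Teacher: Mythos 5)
The paper offers no proof of Theorem~\ref{thm:fl11} at all---it is imported as a black box from~\cite{feldman2011unified,braverman2016new}---and your reconstruction is precisely the argument of those cited works: rescaling by sensitivities to get the $[0,1]$-valued class $g_\theta$ with mean $1/\calG$ and variance at most $1/\calG$, a Bernstein bound for a fixed query yielding the linear-in-$\calG$ sample size, and the relative $\eps$-approximation step whose applicability to the rescaled class is exactly what the ``for every weight $u'$'' pseudo-dimension hypothesis (instantiated at $u'=1/s$) is designed to guarantee. Your key identities (unbiasedness of the weighted estimator, the equivalence of the multiplicative coreset guarantee with an additive $\eps/\calG$ deviation for $g_\theta$, and the coincidence of the sublevel sets of $\{g_\theta\}$ with the ranges of $(\calX,1/s,\calP,f)$ since $r\mapsto r\,f(\theta)$ is a bijection of $\R_{\geq 0}$) all check out, so the proposal is correct and takes essentially the same route as the source the paper relies on.
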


\subsection{Bounding the pseudo-dimension of $f'$}
\label{sec:dimension_individual}

Our proof idea is similar to that in~\cite{lucic2017training}.
For preparation, we need the following lemma which is proposed to bound the pseudo-dimension of feed-forward neural networks.

\begin{lemma}[\bf{Restatement of Theorem 8.14 of~\cite{anthony2009neural}}]
	\label{lm:dim_bound}
	Let $(\calX,u,\calP,f)$ be a given query space where $f_x(\theta)\in \left\{0,1\right\}$ for any $x\in \calX$ and $\theta\in \calP$, and $\calP\subseteq \R^{m}$.
	Suppose that $f$ can be computed by an algorithm that takes as input the pair $(x,\theta)\in \calX\times \calP$ and returns $f_x(\theta)$ after no more than $l$ of the following operations: 
	\begin{itemize}
	    \item the exponent function $a \rightarrow e^a$ on real numbers.
		\item the arithmetic operations $+,-,\times$, and $/$ on real numbers.
		\item jumps conditioned on $>,\geq,<,\leq,=$, and $\neq$ comparisons of real numbers, and
		\item output 0,1.
	\end{itemize}
	If the $l$ operations include no more than $q$ in which the exponential function is evaluated, then the pseudo-dimension of $(\calX,u,\calP,f)$ is at most $O(m^2q^2 + mq(l+\ln mq))$.
\end{lemma}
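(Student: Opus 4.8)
The plan is to treat this as a reduction to the cited Theorem 8.14 of~\cite{anthony2009neural}, whose conclusion bounds the VC-dimension of a family of Boolean functions computed by an algorithm with a controlled budget of arithmetic operations, comparisons, and exponentials. The only gap to close is to reconcile the two notions of combinatorial complexity: the pseudo-dimension of Definition~\ref{def:dim}, phrased through the sub-level sets $\range(\theta, r)$ and a generic weight function $u$, against the plain VC-dimension of the concept class $\calC := \{x \mapsto f_x(\theta) : \theta \in \calP\}$ that is bounded in~\cite{anthony2009neural}.

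First I would observe that because $f_x(\theta) \in \{0,1\}$, the weighted product $u(x) f_x(\theta)$ takes only the two values $0$ and $u(x)$. Hence, for a fixed $\theta$, as $r$ ranges over $[0,\infty)$ the sets $\range(\theta, r) \cap A$ form a nested chain governed by the ordering of the weights $u(x)$ on those $x \in A$ with $f_x(\theta) = 1$, while every $x$ with $f_x(\theta)=0$ lies in all such sets. This is exactly the sub-graph (sub-level-set) structure whose shattering number defines the pseudo-dimension of the real-valued class $\{x \mapsto u(x) f_x(\theta)\}$. Invoking the standard equivalence between pseudo-dimension and the VC-dimension of the sub-graph class then lets me pass to a Boolean class: I augment the parameter vector by one extra coordinate $r$ and append to the computation the single comparison $u(x) f_x(\theta) \le r$, producing a $\{0,1\}$-valued function that captures membership in $\range(\theta, r)$ precisely.

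Next I would apply Theorem 8.14 of~\cite{anthony2009neural} to this augmented Boolean class. The augmentation raises the number of real parameters from $m$ to $m+1$ and the number of primitive operations from $l$ to $l + O(1)$ (one multiplication by $u(x)$ and one comparison against $r$), while leaving the number of exponential evaluations equal to $q$, since neither the extra multiplication nor the comparison invokes $e^{(\cdot)}$. Substituting $m+1$ for $m$ and $l + O(1)$ for $l$ into the bound $O(m^2 q^2 + m q (l + \ln m q))$ of~\cite{anthony2009neural} leaves its asymptotic form unchanged, giving the claimed $O(m^2 q^2 + m q(l + \ln m q))$.

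The step I expect to require the most care is verifying that the computational model of the hypothesis matches the model underlying Theorem 8.14 of~\cite{anthony2009neural} operation-for-operation, and in particular that the exponential is the sole transcendental primitive. This matters because the separate accounting of $q$ is not cosmetic: the bound ultimately rests on Khovanskii/Pfaffian-type estimates for the number of connected components of the solution set of a system mixing polynomials with $q$ exponentials, and it is the presence of these transcendental operations that forces the $m^2 q^2$ term rather than a bound polynomial in $m$ and $l$ alone, as would hold when $q=0$. Once the primitives of the hypothesis are checked to lie within the repertoire allowed by~\cite{anthony2009neural}, and the weight $u$ and threshold $r$ are absorbed as above, the restatement follows directly.
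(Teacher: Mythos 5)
Your proposal is correct and matches the paper's treatment: the paper states this lemma as a black-box restatement of Theorem 8.14 of~\cite{anthony2009neural} and handles exactly the reconciliation you perform---absorbing the weight $u(x)$ and threshold $r$ via the indicator function $I\left[u(x)\cdot f_x(\theta)\geq r\right]$ with one extra parameter and $O(1)$ extra operations---through the adjacent Lemma~\ref{lm:dim_range} (Lemma 4.1 of~\cite{vidyasagar2002theory}). Your accounting (parameters $m\to m+1$, operations $l\to l+O(1)$, exponentials $q$ unchanged, asymptotic bound preserved) is sound, and your remark about the Pfaffian-type component counts being the reason $q$ is tracked separately is accurate background.
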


\noindent
Note that the above lemma requires that the range of functions $f_x$ is $[0,1]$.
We have the following lemma which can help extend this range to $\R$.

\begin{lemma}[\bf{Restatement of Lemma 4.1 of~\cite{vidyasagar2002theory}}]
	\label{lm:dim_range}
	Let $(\calX,u,\calP,f)$ be a given query space.
	Let $g_x:\calP\times \R\rightarrow \left\{0,1\right\}$ be the indicator function satisfying that for any $x\in \calX$, $\theta\in \calP$ and $r\in \R$,
	\[
	g_x(\theta,r) = I\left[u(x)\cdot f(x,\theta)\geq r\right].
	\]
	Then the pseudo-dimension of $(\calX,u,\calP,f)$ is precisely the pseudo-dimension of the query space $(\calX,u,\calP\times \R,g_f)$.
\end{lemma}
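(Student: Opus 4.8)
The plan is to unfold both pseudo-dimensions into Vapnik--Chervonenkis dimensions of explicit range spaces on the common ground set $\calX$, and then to show that these two range spaces shatter exactly the same finite subsets $A\subseteq \calX$; since the pseudo-dimension is by definition the size of the largest shattered set, equality is then immediate. Concretely, by Definition~\ref{def:dim} the pseudo-dimension of $(\calX,u,\calP,f)$ is the VC-dimension of $\calR_f := \{\{x\in\calX: u(x)f_x(\theta)\le r\} : \theta\in\calP,\, r\ge 0\}$, whereas the pseudo-dimension of $(\calX,u,\calP\times\R,g_f)$ is the VC-dimension of $\calR_g := \{\{x\in\calX: u(x)g_x(\theta,\rho)\le r\} : (\theta,\rho)\in\calP\times\R,\, r\ge 0\}$, where I abbreviate $g_x(\theta,\rho)=I[u(x)f_x(\theta)\ge \rho]$.

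The bridge between the two is the elementary identity $g_x(\theta,\rho)=0 \iff u(x)f_x(\theta)<\rho$. Thus the $0$-level sets of $g_f$ are precisely the strict sublevel sets $\{x: u(x)f_x(\theta)<\rho\}$ of the real-valued function, and the extra real coordinate $\rho$ of a $g_f$-query plays exactly the role that the threshold $r$ plays in $\calR_f$. To isolate this $0$-class inside $\calR_g$ one chooses the range threshold $r$ in the interval $[0,\min_{x\in A}u(x))$, which forces $u(x)g_x(\theta,\rho)\le r$ to hold iff $g_x(\theta,\rho)=0$; hence $\{x\in A: u(x)g_x(\theta,\rho)\le r\}=\{x\in A: u(x)f_x(\theta)<\rho\}$. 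Finally, on a finite set $A$ the values $\{u(x)f_x(\theta)\}_{x\in A}$ are finite, so an infinitesimal perturbation of the threshold converts ``$<\rho$'' into ``$\le r$'' and back without changing the trace on $A$; consequently the strict sublevel sets and the non-strict sublevel sets range over the same family of subsets of $A$ as the parameters vary.

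Putting these together yields the correspondence in the direction we actually invoke: if $A$ is shattered by $\calR_f$, then every $x\in A$ must have $u(x)>0$ (a zero-weight point satisfies $u(x)f_x(\theta)=0\le r$ for all $r\ge 0$, so it lies in every range and can never be excluded), and any target $B\subseteq A$ realized by $(\theta,r_0)$ in $\calR_f$ is realized in $\calR_g$ by taking $\rho=r_0+\delta$ with $\delta>0$ small and range threshold $r\in[0,\min_{x\in A}u(x))$; this gives $\mathrm{Pdim}(f)\le \mathrm{Pdim}(g_f)$, which is exactly the inequality combined with Lemma~\ref{lm:dim_bound}. The reverse containment is obtained from the same level-set correspondence, isolating the $0$-class of $g_f$ and recognizing it as a sublevel set realizable in $\calR_f$. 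I expect the main obstacle to lie entirely at the boundaries: reconciling the strict inequality coming from $g_f$ with the non-strict inequality in $\calR_f$ (handled by the finiteness/perturbation argument above), and the sign restriction $r\ge 0$ in $\calR_f$ versus $\rho\in\R$ in the enlarged query set (which affects only the degenerate constant case and not the largest shattered size). Since the statement is a restatement of~\cite[Lemma~4.1]{vidyasagar2002theory}, the substantive content is the level-set correspondence, and the remaining boundary bookkeeping can be deferred to the cited source.
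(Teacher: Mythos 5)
First, note that the paper never actually proves this lemma: it is invoked verbatim as a restatement of Lemma 4.1 of \cite{vidyasagar2002theory}, so there is no internal proof to compare against. Your reconstruction of the direction the paper actually uses is correct: in Lemma~\ref{lemma:dimension_individual} the lemma only serves to transfer the bound of Lemma~\ref{lm:dim_bound} on the indicator class back to the pseudo-dimension of $f'$, i.e.\ only $\mathrm{Pdim}(f)\le \mathrm{Pdim}(g_f)$ is needed, and your argument for that inequality is sound: zero-weight points lie in every $\range(\theta,r)$ and hence cannot occur in a set shattered by $\calR_f$; and on a finite $A$ the perturbation $\rho=r_0+\delta$ converts the non-strict sublevel set $\{x\in A: u(x)f_x(\theta)\le r_0\}$ into the strict one realized by the $0$-class of $g_f$ once the range threshold satisfies $r<\min_{x\in A}u(x)$.

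The reverse inequality, however, is where the substance lies, and your sketch glosses over it. A range of $(\calX,u,\calP\times\R,g_f)$ traces on $A$ as $\{x\in A: u(x)f_x(\theta)<\rho\}\cup\{x\in A: u(x)\le r\}$ --- the union of a strict sublevel set of the loss with a sublevel set of the \emph{weights} --- and an adversarial shattering may exploit the second part, so ``isolating the $0$-class'' does not directly exhibit each realized subset as a single sublevel set in $\calR_f$. Moreover, your parenthetical claim that the $\rho\in\R$ versus $r\ge 0$ mismatch ``affects only the degenerate constant case and not the largest shattered size'' is false under the paper's literal Definition~\ref{def:dim}: take $\calX=A=\{x_1,x_2\}$ with $u(x_2)<u(x_1)$, $f_{x_1}\equiv 0$, and $f_{x_2}$ attaining both $0$ and some positive value as $\theta$ varies. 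Then all four subsets of $A$ are realized by $g_f$-ranges (in particular $\{x_2\}$ via $\rho\le 0$ together with $r\in[u(x_2),u(x_1))$), while every $f$-range contains $x_1$ because $u(x_1)f_{x_1}(\theta)=0\le r$ for all $r\ge 0$; so the two pseudo-dimensions are $2$ and $1$ respectively, and the claimed equality fails. To be fair, this corner case is a defect of the paper's restatement itself --- Vidyasagar's Lemma 4.1 concerns per-point witnesses on the enlarged ground set $\calX\times\R$, not a common per-query threshold --- and both you and the paper ultimately defer to the citation. The honest summary: your forward direction is a correct, self-contained proof of everything the paper uses, but your sketch does not establish the stated equality, and in the degenerate situations above the equality is not even true as restated.
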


\noindent
Now we are ready to prove bound the pseudo-dimension of $f'$ by the following lemma.

\begin{lemma}[\bf{Pseudo-dimension of $f'$}]
    \label{lemma:dimension_individual}
    The pseudo-dimension of $([N], u, \Delta_k\times \calP_\lambda^k, f')$ over weight functions $u: [N]\rightarrow \R_{\geq 0}$ is at most $O(k^4 d^4 + k^3 d^8)$.
\end{lemma}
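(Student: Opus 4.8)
The plan is to apply Lemma~\ref{lm:dim_bound} to the query space $([N], u, \Delta_k\times \calP_\lambda^k, f')$, after first reducing to the indicator (zero-one) setting via Lemma~\ref{lm:dim_range}. By Lemma~\ref{lm:dim_range}, the pseudo-dimension of $([N], u, \Delta_k\times \calP_\lambda^k, f')$ equals that of the associated indicator query space, so it suffices to bound the number of arithmetic and exponential operations needed to evaluate the predicate $I[u(i)\cdot f'_i(\alpha,\theta)\geq r]$ on input $(i, (\alpha,\theta,r))$. The two quantities feeding Lemma~\ref{lm:dim_bound} are the parameter count $m$ and the total operation count $l$, of which $q$ involve the exponential.

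First I would count the parameters. A query is $(\alpha,\theta,r)$ with $\alpha\in\Delta_k$ ($k$ reals), each $\theta^{(l)}=(\mu^{(l)},\Sigma^{(l)},\Lambda^{(l)})$ contributing $O(d)$ for the mean, $O(d^2)$ for the covariance, and $O(d)$ for the diagonal autocorrelation matrix, plus the scalar $r$; summing over $l\in[k]$ gives $m=O(kd^2)$. Next I would count the operations needed to compute $f'_i(\alpha,\theta)=-\ln\sum_{l\in[k]}\alpha_l\exp(-\frac{1}{2T_i}\psi_i(\mu^{(l)},\Sigma^{(l)},\Lambda^{(l)}))$. The crucial observation — and this is where the reduction of representative complexity flagged in Remark~\ref{remark:comparison} does the work — is that $\psi_i$ need not be evaluated as a sum of $T_i$ terms $\psi_{it}$. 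Instead, $\psi_i$ can be precomputed as a fixed quadratic form in the parameters whose coefficients depend only on sufficient statistics of $X_i$ (sums such as $\sum_t x_{it}$, $\sum_t x_{it}x_{it}^\top$, and cross-time terms $\sum_t x_{it}x_{i,t-1}^\top$), which are constants not counted against $l$. Evaluating such a quadratic form in $\theta$ requires $\Sigma^{-1}$, costing $O(d^6)$ arithmetic operations for the matrix inverse (e.g.\ via Gaussian elimination on a $d\times d$ matrix), dominating the cost. Doing this for each of the $k$ clusters gives $O(kd^6)$ arithmetic operations, while the $\exp$ is applied once per cluster, so $q=O(k)$; the final $\ln$ and the comparison with $r$ add only $O(1)$. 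Hence $l=O(kd^6)$ and $q=O(k)$.

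Substituting $m=O(kd^2)$, $q=O(k)$, and $l=O(kd^6)$ into the bound $O(m^2q^2+mq(l+\ln mq))$ from Lemma~\ref{lm:dim_bound} yields $O(k^4d^4) + O(kd^2\cdot k\cdot(kd^6+\ln(k^2d^2)))=O(k^4d^4+k^3d^8)$, as claimed, since the $k^2d^2\cdot kd^6=k^3d^8$ term absorbs the logarithmic contribution. Note that Lemma~\ref{lm:dim_bound} requires the dimension bound to hold uniformly over all weight functions $u$, which is automatic here because rescaling by $u(i)$ is a single extra multiplication that does not change the asymptotic operation counts, and the statement of the lemma we are proving already quantifies over $u$.

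I expect the main obstacle to be the operation-counting argument for $\psi_i$, specifically justifying that the dependence on $T_i$ can be eliminated. A naive evaluation summing $\psi_{it}$ over $t\in[T_i]$ would make $l$ scale with $T_i$ and destroy the $N$- and $T_i$-independence that the whole construction aims for. The fix is to argue that for the purpose of Lemma~\ref{lm:dim_bound} the algorithm is allowed to take the data point $x=i$ together with any fixed precomputed statistics of $X_i$ as input, so that $\psi_i(\mu^{(l)},\Sigma^{(l)},\Lambda^{(l)})$ is a fixed-degree polynomial expression in the $O(d^2)$ parameters of cluster $l$ (after substituting the constant sufficient statistics), whose evaluation cost is independent of $T_i$. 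I would state this reduction carefully, making explicit that expanding each $\psi_{it}$ and collecting terms produces coefficient matrices that are constants of the query space, and that the dominant genuinely parameter-dependent cost is inverting $\Sigma^{(l)}$, giving the $O(kd^6)$ bound with $O(k)$ exponentials.
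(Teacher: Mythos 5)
Your proposal is correct and follows essentially the same route as the paper's proof: reduce to the indicator query space via Lemma~\ref{lm:dim_range}, count $m=O(kd^2)$ parameters, observe that $\psi_i$ collapses (via precomputed data statistics, independent of $T_i$) to a fixed low-complexity expression in the parameters evaluable with $l=O(kd^6)$ arithmetic operations and $q=O(k)$ exponentials, and invoke Lemma~\ref{lm:dim_bound} to get $O(m^2q^2+mq(l+\ln mq))=O(k^4d^4+k^3d^8)$. The only cosmetic difference is your accounting of the $O(d^6)$ budget: the paper counts $O(d^6)$ monomials of the form $\mu_{c_1}^{b_1}\mu_{c_2}^{b_2}\Lambda_{c_3,c_3}^{b_3}\Lambda_{c_4,c_4}^{b_4}(\Sigma^{-1})_{c_5,c_6}^{b_5}$ directly, whereas you attribute it to computing $\Sigma^{-1}$ (which in fact costs only $O(d^3)$ by Gaussian elimination, so your figure is loose but still a valid upper bound and leaves the conclusion unchanged).
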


\begin{proof}
    Our argument is similar to that in~\cite[Lemma 5.9]{huang2020coresets}. 
	Fix a weight function $u: [N]\rightarrow \R_{\geq 0}$.
	We only need to consider the following indicator function $g_i: \Delta_k\times\calP_\lambda^k\times \R_{\geq 0}\rightarrow \left\{0,1\right\}$ where for any $\alpha\in \Delta_k$, $\theta\in \calP_\lambda^k$ and $r\in \R_{\geq 0}$,
	\[
	g_i(\alpha,\theta,r):= I\left[\sum_{l\in [k]} \alpha_l \cdot \exp(-\frac{1}{2T_i}\psi_i(\mu^{(l)}, \Sigma^{(l)}, \Lambda^{(l)})) \geq r\right].
	\]
    Note that the parameter space is $\Delta_k\times \calP_\lambda^k$ which consists of at most $m=O(kd^2)$ parameters.
    For any $(\mu,\Sigma,\Lambda)\in \calP_\lambda$, function $\psi_i(\mu, \Sigma, \Lambda)$ can be represented as a multivariate  polynomial that consists of $O(d^6)$ terms $\mu_{c_1}^{b_1} \mu_{c_2}^{b_2} \Lambda_{c_3,c_3}^{b_3}\Lambda_{c_4,c_4}^{b_4}\left(\Sigma^{-1}\right)_{c_5, c_6}^{b_5}$ where $c_1,c_2,c_3,c_4,c_5,c_6\in [d]$, and $b_1,b_2,b_3,b_4,b_5\in \left\{0,1\right\}$.
    Thus, $g_i$ consists of $l=O(kd^6)$ arithmetic operations, $q=k$ exponential functions, and $k$ jumps. 
    By Lemmas~\ref{lm:dim_bound} and~\ref{lm:dim_range}, we complete the proof. 
\end{proof}

\subsection{Bounding the total sensitivity of $f'$}
\label{sec:sen_individual}

Next, we prove that function $s$ (Line 6 of Algorithm~\ref{alg:GMM}) is a sensitivity function w.r.t. $f'$; summarized as follows.

\begin{lemma}[\bf{$s$ is a sensitivity function w.r.t. $f'$}]
	\label{lemma:sen}
	For each $i\in [N]$, we have
	\[
		s(i) \geq  \max_{\alpha\in \Delta_k, \theta \in \calP_\lambda^k} \frac{f'_i(\alpha, \theta)}{f'(\alpha,\theta)}.
	\]
	Moreover, $\sum_{i\in [N]} s(i) \leq (16D+12Dk)/\lambda$.
\end{lemma}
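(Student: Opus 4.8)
The plan is to establish the two claims of Lemma~\ref{lemma:sen} separately, reducing the analysis of the complicated function $f'$ to a clean $k$-means instance. First I would introduce the simplified cost $\psi_i^{(O)}(\mu) := \sum_{t\in [T_i]} \|x_{it}-\mu\|_2^2 = \psi_i(\mu, I_d, 0_d)$, which strips away the covariance and autocorrelation parameters, and the associated objective $f_i^{(O)}(\alpha, \theta^{(O)}) = -\ln \sum_{l\in [k]} \alpha_l \exp(-\frac{1}{2T_i \cdot \min_{l'} \lambda_{\min}(\Sigma^{(l')})} \psi_i^{(O)}(\mu^{(l)}))$. The central reduction is to relate the true sensitivity ratio $\max_{\alpha,\theta} f'_i(\alpha,\theta)/f'(\alpha,\theta)$ to the simplified sensitivity $s^{(O)}(i) := \max_{\alpha,\theta^{(O)}} f_i^{(O)}(\alpha,\theta^{(O)})/f^{(O)}(\alpha,\theta^{(O)})$. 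I would invoke Lemma~\ref{lemma:gap} to sandwich the ratio: $s^{(O)}(i) \leq \max_{\alpha,\theta} f'_i(\alpha,\theta)/f'(\alpha,\theta) \leq 4D \cdot s^{(O)}(i)/\lambda$, where the factor $D$ absorbs the effect of the covariance eigenvalue gap (Assumption~\ref{assumption:parameter}(1)) and the factor $1/\lambda$ absorbs the effect of the autocorrelation matrices (Assumption~\ref{assumption:parameter}(2)).

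Given this sandwich, it suffices to show the $k$-means-style bound $s^{(O)}(i) \leq \frac{4\|b_i - c^\star_{p(i)}\|_2^2}{\OPT^{(O)}+A} + 3 s^c(i)$, since multiplying by $4D/\lambda$ exactly recovers the formula for $s(i)$ in Line~6 of Algorithm~\ref{alg:GMM}. To prove this bound I would first rewrite $\frac{1}{T_i}\psi_i^{(O)}(\mu) = \|b_i - \mu\|_2^2 + a_i$ using the bias-variance decomposition, where $b_i = \frac{\sum_t x_{it}}{T_i}$ is the entity mean and $a_i$ is the within-entity scatter term defined in Line~2. This turns the entity-level problem into a weighted $k$-means instance on the points $\{b_i\}$ with additive offsets $\{a_i\}$. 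Applying the standard sensitivity bound for $k$-means (projecting each $b_i$ onto its nearest center $c^\star_{p(i)}$ in the approximate solution $C^\star$, and splitting the contribution into a ``point-to-center'' term and a ``cluster-size'' term $s^c(i)$) yields the claimed inequality; this is encapsulated in Lemma~\ref{lemma:sen_o}.

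For the total sensitivity bound, I would sum $s^{(O)}(i)$ over $i\in [N]$. The first term telescopes: $\sum_i \frac{4\|b_i - c^\star_{p(i)}\|_2^2}{\OPT^{(O)}+A} \leq 4$ because the numerator sums to exactly $\OPT^{(O)}$ (the optimal $k$-means value) and the denominator dominates it. The second term gives $\sum_i 3 s^c(i) \leq 3k$ via Lemma~\ref{lemma:sen_c}, which bounds $\sum_i s^c(i) \leq k$ since $s^c(i)$ is the reciprocal of the size of $i$'s cluster and there are $k$ clusters. Hence $\sum_i s^{(O)}(i) \leq 4 + 3k$, and the $4D/\lambda$ scaling gives total sensitivity at most $(16D + 12Dk)/\lambda$, matching the claim.

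The main obstacle I anticipate is Lemma~\ref{lemma:gap}, the sandwiching step that controls how much the covariance and autocorrelation parameters can distort the sensitivity ratio. Proving the lower bound $s^{(O)}(i) \leq \max f'_i/f'$ is delicate because the minimization is over all of $\calP_\lambda^k$, so one must exhibit a specific $\theta$ (presumably with $\Sigma^{(l)}$ proportional to identity scaled by $\min_{l'}\lambda_{\min}(\Sigma^{(l')})$ and $\Lambda^{(l)}=0$) realizing $f_i^{(O)}$, while the upper bound requires arguing that for \emph{every} admissible $\theta$ the quadratic forms $\psi_{it}(\mu,\Sigma,\Lambda)$ are comparable to $\|x_{it}-\mu\|_2^2$ up to the $D/\lambda$ factor. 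The $1/\lambda$ factor in particular stems from bounding the ratio $\psi_i(\mu,\Sigma,\Lambda)/\psi_i^{(O)}(\mu)$ uniformly, where the AR(1) structure $e_{it} = \Lambda e_{i,t-1} + N(0,\Sigma)$ couples successive observations; establishing that the diagonal bound $\Lambda^{(l)} \in \calD^d_\lambda$ (diagonal entries at most $1-\sqrt{\lambda}$) suffices to keep this ratio bounded is the technically hardest piece, and it is precisely where the assumption that means $\mu^{(l)}$ may be arbitrary (making $\psi_i$ unbounded over $\mu$, unlike the regression setting of~\cite{huang2020coresets}) forces the reduction to ratios rather than absolute bounds.
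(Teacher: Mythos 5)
Your proposal is correct and takes essentially the same route as the paper's proof: the same reduction to the covariance/autocorrelation-free objective $f^{(O)}$ built from $\psi_i^{(O)}(\mu)=\psi_i(\mu,I_d,0_d)$, the same sandwich $s^{(O)}(i) \le \max_{\alpha,\theta} f'_i(\alpha,\theta)/f'(\alpha,\theta) \le 4D\, s^{(O)}(i)/\lambda$ via Lemma~\ref{lemma:gap}, the same $k$-means projection bound $s^{(O)}(i) \le \frac{4\|b_i-c^\star_{p(i)}\|_2^2}{\OPT^{(O)}+A}+3s^c(i)$ using $\frac{1}{T_i}\psi_i^{(O)}(\mu)=\|b_i-\mu\|_2^2+a_i$ (Lemmas~\ref{lemma:sen_o} and~\ref{lemma:sen_c}), and the same $4+3k$ total scaled by $4D/\lambda$ to get $(16D+12Dk)/\lambda$. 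Two cosmetic quibbles only: the first sum does not ``telescope''---it is bounded directly since $\sum_{i\in[N]}\|b_i-c^\star_{p(i)}\|_2^2 = \OPT^{(O)} \le \OPT^{(O)}+A$---and the lower-bound direction of the sandwich is the easy one in the paper (restrict to $\Sigma^{(l)}=I_d$, $\Lambda^{(l)}=0_d$), while the coordinate-wise estimate $\lambda\,\psi_i^{(O)}(\mu) \le \lambda_{\max}(\Sigma)\,\psi_i(\mu,\Sigma,\Lambda) \le \frac{4\lambda_{\max}(\Sigma)}{\lambda_{\min}(\Sigma)}\,\psi_i^{(O)}(\mu)$ behind the upper bound is the real work, exactly as you anticipated.
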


\noindent
To prove the lemma, we will use a reduction from general $\Sigma$ to $I_d$ and from $\Lambda$ to $0_d$ (without both covariances and autocorrelations), which upper bounds the affect of the covariance matrix and autocorrelation matrices.
We define $\psi^{(O)}_i: \R^d \rightarrow \R_{\geq 0}$ to be 
\begin{eqnarray}
\label{eq:psio}
\psi^{(O)}_i(\mu) := \sum_{t\in [T_i]} \|x_{it}-\mu\|_2^2
\end{eqnarray}
for any $\mu\in \R^d$, and define $f^{(O)}: \Delta_k\times \R^{d\times k} \rightarrow \R_{\geq 0}$ to be
\begin{eqnarray*}
\begin{split}
f^{(O)}(\alpha, \theta^{(O)})& := &&\sum_{i\in [N]}f_i^{(O)}(\alpha, \theta^{(O)})\\
&=&& - \sum_{i\in [N]}\ln \sum_{l\in [k]} \alpha_l \cdot \exp\left(-\frac{1}{2T_i\cdot \min_{l'\in [k]}\lambda_{\min}(\Sigma^{(l')})}\psi^{(O)}_i(\mu^{(l)})\right)
\end{split}
\end{eqnarray*}
for any $\alpha\in \Delta_k$ and $\theta^{(O)} = (\mu^{(l)})_{l\in [k]}\in \R^{d\times k}$.
Compared to $f'$, we note that $f^{(O)}$ does not contain covariance and autocorrelation matrices. 

\paragraph{Clustering cost of entities.}
By the definition of $\psi_i^{(O)}$, we have that for any $\mu\in \R^d$,
\begin{eqnarray}
\label{eq:psi_rewrite}
    \frac{1}{T_i}\psi^{(O)}_i(\mu) = \|\frac{\sum_{t\in [T_i]}x_{it}}{T_i} - \mu\|_2^2 + \frac{1}{T_i}\sum_{t\in [T_i]} \|x_{it}\|_2^2 - \frac{\|\sum_{t\in [T_i]} x_{it}\|_2^2}{T_i^2}.
\end{eqnarray}

Next, we introduce another function $s^{(O)}: [N]\rightarrow \R_{\geq 0}$ as a sensitivity function w.r.t. $f^{(O)}$, i.e., for any $i\in [N]$,
\[
	s^{(O)}(i) := \max_{\alpha\in \Delta_k, \theta^{(O)}\in \R^{d\times k}} \frac{f^{(O)}_i(\alpha, \theta^{(O)})}{f^{(O)}(\alpha, \theta^{(O)})}.
\]
Define $\calG^{(O)} := \sum_{i\in [N]} s^{(O)}(i)$ to be the total sensitivity w.r.t. $f^{(O)}$.
We first have the following lemma.

	\begin{lemma}[\bf{Relation between sensitivities w.r.t. $f_i^{(O)}$ and $f'_i$}]
		\label{lemma:gap}
		For each $i\in [N]$, we have
		\[
		s^{(O)}(i) \leq \max_{\alpha\in \Delta_k, \theta \in \calP_\lambda^k} \frac{f'_i(\alpha, \theta)}{f'(\alpha,\theta)} \leq 4D\cdot s^{(O)}(i)/\lambda.
		\]
	\end{lemma}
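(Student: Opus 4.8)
The plan is to prove the two inequalities separately, with the central device being a pointwise two-sided comparison between the autocorrelated quadratic form $\psi_i(\mu,\Sigma,\Lambda)$ and its covariance/autocorrelation-free counterpart $\psi_i^{(O)}(\mu)=\psi_i(\mu,I_d,0_d)$, which is then transferred from the $\psi$-level to the $f$-level by exploiting the log-sum-exp structure of $f'_i$ and $f_i^{(O)}$. The left inequality $s^{(O)}(i)\le\max_{\alpha,\theta}\frac{f'_i}{f'}$ is the easy direction and follows from an embedding argument: for the normalizing constant $\beta:=\min_{l'}\lambda_{\min}(\Sigma^{(l')})$, the configuration $\Sigma^{(l)}=\beta I_d$, $\Lambda^{(l)}=0_d$ lies in $\calP_\lambda^k$ (indeed $0_d\in\calD^d_{\lambda}$ and $\beta I_d\in\calS^d$) and satisfies $\psi_i(\mu,\beta I_d,0_d)=\frac1\beta\psi_i^{(O)}(\mu)$, so that $f'_i$ and $f'$ reduce exactly to $f_i^{(O)}$ and $f^{(O)}$. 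Restricting the maximization to this family therefore recovers the ratio defining $s^{(O)}(i)$, and a maximum over a larger set is only larger.

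For the right inequality I would first record an elementary fact about $g(c):=-\ln\sum_{l}\alpha_l e^{-c_l}$: if nonnegative reals satisfy $\kappa_1 a_l\le b_l\le\kappa_2 a_l$ for all $l$ with $\kappa_1\le 1\le\kappa_2$, then $\kappa_1\,g(a)\le g(b)\le\kappa_2\,g(a)$, with $g(a)\ge 0$ since $a_l\ge 0$. This follows from monotonicity of $g$ in each $c_l$ together with $\sum_l\alpha_l X_l^{\rho}\ge(\sum_l\alpha_l X_l)^{\rho}$ (Jensen, $\rho\ge 1$) applied to $X_l=e^{-a_l}$. Next I would establish the pointwise estimates
\[
\frac{\lambda}{\lambda_{\max}(\Sigma)}\,\psi_i^{(O)}(\mu)\ \le\ \psi_i(\mu,\Sigma,\Lambda)\ \le\ \frac{4}{\lambda_{\min}(\Sigma)}\,\psi_i^{(O)}(\mu)
\]
for every $(\mu,\Sigma,\Lambda)\in\calP_\lambda$. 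Writing $v_t=x_{it}-\mu$ and $P=\Sigma^{-1}$, the upper bound uses $\psi_{it}\le\frac1{\lambda_{\min}(\Sigma)}\|v_t-\Lambda v_{t-1}\|_2^2$, the contraction $\|\Lambda\|_2\le 1-\sqrt\lambda<1$, and $\|v_t-\Lambda v_{t-1}\|_2^2\le 2\|v_t\|_2^2+2\|v_{t-1}\|_2^2$, summed over $t$. For the lower bound I would pass to the innovation coordinates $w_1=v_1$, $w_t=v_t-\Lambda v_{t-1}$, invert the recursion to $v_t=\sum_{s\le t}\Lambda^{\,t-s}w_s$, and use $\sum_{r\ge 0}(1-\sqrt\lambda)^r=1/\sqrt\lambda$ to obtain $\psi_i^{(O)}(\mu)=\sum_t\|v_t\|_2^2\le\frac1\lambda\sum_t\|w_t\|_2^2$, which combined with $\sum_t w_t^\top P w_t\ge\frac1{\lambda_{\max}(\Sigma)}\sum_t\|w_t\|_2^2$ gives the $\lambda$-factor.

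Finally I would combine these ingredients. Assumption~\ref{assumption:parameter}(1) gives $\beta\le\lambda_{\min}(\Sigma^{(l)})\le\lambda_{\max}(\Sigma^{(l)})\le D\beta$ for all $l$, so the $\psi$-estimates translate into $\frac\lambda D a_l\le b_l\le 4 a_l$ with $a_l=\frac1{2T_i\beta}\psi_i^{(O)}(\mu^{(l)})$ and $b_l=\frac1{2T_i}\psi_i(\mu^{(l)},\Sigma^{(l)},\Lambda^{(l)})$. Applying the log-sum-exp fact with $\kappa_1=\lambda/D$ and $\kappa_2=4$ yields $f'_i\le 4\,f_i^{(O)}$ and, for every $j$, $f'_j\ge\frac\lambda D f_j^{(O)}$; summing the latter over $j$ (all summands are nonnegative) gives $f'\ge\frac\lambda D f^{(O)}$, whence $\frac{f'_i}{f'}\le\frac{4D}\lambda\cdot\frac{f_i^{(O)}}{f^{(O)}}\le\frac{4D}\lambda\,s^{(O)}(i)$.

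I expect the main obstacle to be precisely the lower bound on $\psi_i$. Autocorrelation together with the initialization term $\psi_{i1}$, which subtracts $(\Lambda(x_{i,1}-\mu))^\top\Sigma^{-1}(\Lambda(x_{i,1}-\mu))$, can shrink the quadratic form relative to $\psi_i^{(O)}$, and the only thing preventing unbounded shrinkage is that the diagonal entries of $\Lambda$ are bounded away from $1$ (Assumption~\ref{assumption:parameter}(2)). Making the whitening/geometric-series estimate rigorous while correctly absorbing the boundary term $\psi_{i1}$ is the delicate part, and it is exactly what pins down the $1/\lambda$ factor in the statement; the covariance half is comparatively routine and contributes the factor $D$ through the condition-number bound.
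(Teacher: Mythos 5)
Your architecture is essentially the paper's: the left inequality by an embedding argument (the paper restricts to $\Sigma^{(l)}=I_d$, $\Lambda^{(l)}=0_d$; your $\beta I_d$ variant is the same idea and in fact meshes a bit more cleanly with the scaling in the definition of $f^{(O)}$), and the right inequality by the identical two-sided pointwise claim $\frac{\lambda}{\lambda_{\max}(\Sigma)}\psi^{(O)}_i(\mu)\le\psi_i(\mu,\Sigma,\Lambda)\le\frac{4}{\lambda_{\min}(\Sigma)}\psi^{(O)}_i(\mu)$, transferred to the $f$-level by exactly the power/Jensen manipulation the paper performs (it raises the inner sum to the exponent $\lambda\beta/\lambda_{\max}(\Sigma^{(l)})\ge\lambda/D$, which is your log-sum-exp fact with $\kappa_1=\lambda/D$, $\kappa_2=4$). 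Your proof of the upper pointwise bound also matches the paper's: drop the nonnegative subtracted term in $\psi_{i1}$, use $\|v_t-\Lambda v_{t-1}\|_2^2\le 2\|v_t\|_2^2+2\|v_{t-1}\|_2^2$ and $\|\Lambda\|_2\le 1$, and sum.

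The gap is exactly where you predicted, and your whitening route does not close it. Your geometric-series/Young estimate lower-bounds $\sum_t w_t^\top\Sigma^{-1}w_t$, but $\psi_i$ is not that sum: since $w_1=v_1$, one has $\psi_i=\sum_t w_t^\top\Sigma^{-1}w_t-(\Lambda v_1)^\top\Sigma^{-1}(\Lambda v_1)$, and operator-norm bounds alone cannot absorb the subtracted boundary term. It can be as large as $(1-\sqrt{\lambda})^2\lambda_{\min}(\Sigma)^{-1}\|v_1\|_2^2$, while your retained sum only guarantees $\lambda_{\max}(\Sigma)^{-1}\|v_1\|_2^2$ from the $t=1$ innovation, so once the condition number of $\Sigma$ exceeds $(1-\sqrt{\lambda})^{-2}$ the estimate degenerates; with $T_i=1$, a non-diagonal ill-conditioned $\Sigma$ and $\Lambda$ with one diagonal entry equal to $1-\sqrt{\lambda}$, one can even drive $\psi_i$ negative, so no argument that treats $\Lambda$ only through $\|\Lambda\|_2\le 1-\sqrt{\lambda}$ can succeed. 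What the paper uses instead, and what your sketch is missing, is the \emph{diagonality} of $\Lambda$ from Assumption~\ref{assumption:parameter}(2): after replacing the $\Sigma^{-1}$-forms by eigenvalue bounds, it decouples the coordinates $r\in[d]$ and verifies the scalar AR(1) inequality $(1-\Delta_r^2)y_1^2+\sum_{t\ge 2}(y_t-\Delta_r y_{t-1})^2\ge(1-\Delta_r)^2\sum_t y_t^2\ge\lambda\sum_t y_t^2$, in which the boundary term $(1-\Delta_r^2)y_1^2$ is precisely what makes the completion of squares work. (Your instinct that this boundary term is the delicate point is sound even as a critique of the paper: applying the eigenvalue sandwich to the $t=1$ \emph{difference} of quadratic forms tacitly requires $\Sigma^{-1}$ and $\Lambda$ to interact coordinatewise, i.e., a commutation-type structure; granting that reduction, the per-coordinate computation closes the argument, whereas the vector-level geometric series does not.) So: your left inequality, upper bound, and $f$-level transfer are correct and essentially as in the paper; to finish, replace the whitening step by the per-coordinate estimate above.
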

	\begin{proof}
		It is easy to verify $s^{(O)}(i) \leq \max_{\alpha\in \Delta_k, \theta \in \calP_\lambda^k} \frac{f'_i(\alpha, \theta)}{f'(\alpha,\theta)}$ since
		\begin{align*}
		& && \max_{\alpha\in \Delta_k, \theta \in \calP_\lambda^k} \frac{f'_i(\alpha, \theta)}{f'(\alpha,\theta)} & \\
		& \geq && \max_{\alpha\in \Delta_k, \theta \in (\R^d \times I_d\times 0_d)^k} \frac{f'_i(\alpha, \theta)}{f'(\alpha,\theta)} & (0_d \in \calD^d_{\lambda}, I_d\in \calS^d) \\
		& = && \max_{\alpha\in \Delta_k, \theta^{(O)}\in \R^{d\times k}} \frac{f^{(O)}_i(\alpha, \theta^{(O)})}{f^{(O)}(\alpha, \theta^{(O)})} & (\text{Defn. of $f_i^{(O)}$})\\ 
		& = && s^{(O)}(i).
		\end{align*}
		For the other side, we have the following claim that for any $i\in [N]$ and $\theta = (\mu,\Sigma, \Lambda)\in \calP_\lambda$,
		\begin{eqnarray}
		\label{eq_1:lemma:gap}
		\frac{\lambda}{\lambda_{\max}(\Sigma)}\cdot \psi^{(O)}_i(\mu)\leq \psi_i(\mu, \Sigma,\Lambda) \leq \frac{4}{\lambda_{\min}(\Sigma)}\cdot \psi^{(O)}_i(\mu).
		\end{eqnarray}
		Then for any $\alpha\in \Delta_k$ and $\theta=(\mu^{(l)},\Sigma^{(l)},\Lambda^{(l)})_{l\in [k]}\in \calP_\lambda^k$, letting $\theta^{(O)} = (\mu^{(l)})_{l\in [k]}\in \R^{d\times k}$ and $\beta = \min_{l'\in [k]}\lambda_{\min}(\Sigma^{(l')})^2$, we have
		\begin{align*}
		\frac{f'_i(\alpha, \theta)}{f'(\alpha,\theta)}
		& = && \frac{-\ln \sum_{l\in [k]} \alpha_l \cdot \exp(-\frac{1}{2T_i}\psi_i(\mu^{(l)}, \Sigma^{(l)}, \Lambda^{(l)}))}{-\sum_{j\in [N]} \ln \sum_{l\in [k]} \alpha_l \cdot \exp(-\frac{1}{2T_i}\psi_j(\mu^{(l)}, \Sigma^{(l)}, \Lambda^{(l)}))} & (\text{by definition})\\
		& \leq && \frac{-\ln \sum_{l\in [k]} \alpha_l \cdot \exp(-\frac{2}{T_i \lambda_{\min}(\Sigma^{(l)})}\psi^{(O)}_i(\mu^{(l)}))}{-\sum_{j\in [N]} \ln \sum_{l\in [k]} \alpha_l \cdot \exp(-\frac{\lambda}{2T_i \lambda_{\max}(\Sigma^{(l)})}\cdot \psi^{(O)}_j(\mu^{(l)}))} & (\text{Ineq.~\eqref{eq_1:lemma:gap}}) \\
		& \leq && \frac{4f^{(O)}_i(\alpha, \theta^{(O)})}{-\sum_{j\in [N]} \ln \left(\sum_{l\in [k]} \alpha_l \cdot \exp(-\frac{1}{2T_i\cdot \beta}\cdot \psi^{(O)}_j(\mu^{(l)}))\right)^{\frac{\lambda\beta}{ \lambda_{\max}(\Sigma^{(l)})}}} & (\text{Defn. of $f_i^{(O)}$}) \\
		& \leq && \frac{4f^{(O)}_i(\alpha, \theta^{(O)})}{-\sum_{j\in [N]} \ln \left(\sum_{l\in [k]} \alpha_l \cdot \exp(-\frac{1}{2T_i\cdot \beta}\cdot \psi^{(O)}_j(\mu^{(l)}))\right)^{\lambda/D}} & (\text{Assumption~\ref{assumption:parameter}}) \\
		& = && \frac{4D\cdot f^{(O)}_i(\alpha, \theta^{(O)})}{\lambda\cdot f^{(O)}(\alpha, \theta^{(O)})}. & (\text{by definition})
		\end{align*}
		Consequently, we have $\max_{\alpha\in \Delta_k, \theta \in \calP_\lambda^k} \frac{f'_i(\alpha, \theta)}{f'(\alpha,\theta)}\leq 4D\cdot s^{(O)}(i)/\lambda$, which completes the proof.

		\paragraph{Proof of Claim~\eqref{eq_1:lemma:gap}.} It remains to prove Claim~\eqref{eq_1:lemma:gap}.
		\sloppy
		For any $i\in [N]$ and $\theta = (\mu,\Sigma, \Lambda)\in \calP_\lambda$, we have
		\begin{eqnarray*}
		\begin{split}
		    & && \psi_i(\mu, \Sigma, \Lambda) \\
		    & = && (x_{i,1}-\mu)^\top \Sigma^{-1} (x_{i,1}-\mu)- \left(\Lambda(x_{i,1}-\mu)\right)^\top \Sigma^{-1} \left(\Lambda(x_{i,1}-\mu)\right) \\
	        & && + \sum_{t=2}^{T_i}  \left((x_{it}-\mu)- \Lambda(x_{i,t-1}-\mu)\right)^\top \Sigma^{-1} \left((x_{it}-\mu)-\Lambda(x_{i,t-1}-\mu)\right) \\
	        & \in && [\frac{1}{\lambda_{\max}(\Sigma)},\frac{1}{\lambda_{\min}(\Sigma)}]\cdot \left((x_{i,1}-\mu)^\top  (x_{i,1}-\mu)- \left(\Lambda(x_{i,1}-\mu)\right)^\top  \left(\Lambda(x_{i,1}-\mu)\right)\right) \\
	        & && + [\frac{1}{\lambda_{\max}(\Sigma)},\frac{1}{\lambda_{\min}(\Sigma)}]\cdot \sum_{t=2}^{T_i}  \left((x_{it}-\mu)- \Lambda(x_{i,t-1}-\mu)\right)^\top  \left((x_{it}-\mu)-\Lambda(x_{i,t-1}-\mu)\right).
		\end{split}
		\end{eqnarray*}
		Hence, it suffices to prove that
		\begin{eqnarray*}
		    \begin{split}
		    & && (x_{i,1}-\mu)^\top \Sigma^{-1} (x_{i,1}-\mu)- \left(\Lambda(x_{i,1}-\mu)\right)^\top \Sigma^{-1} \left(\Lambda(x_{i,1}-\mu)\right) \\
		    & && + \sum_{t=2}^{T_i}  \left((x_{it}-\mu)- \Lambda(x_{i,t-1}-\mu)\right)^\top  \left((x_{it}-\mu)-\Lambda(x_{i,t-1}-\mu)\right) \\
		    &\in && [\lambda, 4]\cdot \psi_i^{(O)}(\mu)
		    \end{split}
		\end{eqnarray*}
		Since $\Lambda\in \calD^d_\lambda$, we suppose $\Lambda = \left(\Delta_1, \ldots, \Delta_d\right)$.
		Then we have
		\begin{eqnarray*}
		    \begin{split}
		    & && (x_{i,1}-\mu)^\top  (x_{i,1}-\mu)- \left(\Lambda(x_{i,1}-\mu)\right)^\top  \left(\Lambda(x_{i,1}-\mu)\right) \\
		    & && + \sum_{t=2}^{T_i}  \left((x_{it}-\mu)- \Lambda(x_{i,t-1}-\mu)\right)^\top  \left((x_{it}-\mu)-\Lambda(x_{i,t-1}-\mu)\right) \\
		    & = && \sum_{r\in [d]} (1-\Delta_r^2)(x_{i1r}-\mu_r)^2  \\
		    & && + \sum_{t=2}^{T_i}  \left((x_{itr}-\mu_r)- \Delta_r(x_{i,t-1,r}-\mu_r)\right)^\top  \left((x_{itr}-\mu_r)-\Delta_r(x_{i,t-1,r}-\mu_r)\right).
		    \end{split}
		\end{eqnarray*}
		On one hand, we have
		\begin{eqnarray*}
		    \begin{split}
		    & && (1-\Delta_r^2)(x_{i1r}-\mu_r)^2  + \sum_{t=2}^{T_i}  \left((x_{itr}-\mu_r)- \Delta_r(x_{i,t-1,r}-\mu_r)\right)^2 & \\ 
		    & \leq && (1-\Delta_r^2)(x_{i1r}-\mu_r)^2 + \sum_{t=2}^{T_i} 2\left((x_{itr}-\mu_r)^2 + \Delta_r^2(x_{i,t-1,r}-\mu_r)^2\right) & (\text{Arithmetic Ineq.}) \\
		    &  \leq && 4 \sum_{t\in [T_i]} (x_{itr}-\mu_r)^2. & (\Delta_r\leq 1)
		    \end{split}
		\end{eqnarray*}
		On the other hand, we have
		\begin{eqnarray*}
		    \begin{split}
		    & && (1-\Delta_r^2)(x_{i1r}-\mu_r)^2  + \sum_{t=2}^{T_i}  \left((x_{itr}-\mu_r)- \Delta_r(x_{i,t-1,r}-\mu_r)\right)^2 &\\ 
		    & = && (x_{i1r}-\mu_r)^2 + \sum_{t=2}^{T_i} (1+\Delta_r^2)(x_{itr}-\mu_r)^2 - 2\Delta_r(x_{itr}-\mu_r)(x_{i,t-1,r}-\mu_r) &\\
		    &  \geq && (1-\Delta_r)^2 \sum_{t\in [T_i]} (x_{itr}-\mu_r)^2 & (\text{Arithmetic Ineq.}) \\
		    & \geq && \lambda\sum_{t\in [T_i]} (x_{itr}-\mu_r)^2. & (\text{Assumption~\ref{assumption:parameter}})
		    \end{split}
		\end{eqnarray*}
	    This completes the proof.
	\end{proof}
	
\noindent
By the definition of $s$ and the above lemma, it suffices to prove the following lemma that provides an upper bound for $s^{(O)}$.

	\begin{lemma}[\bf{Sensitivities w.r.t. $f^{(O)}$}]
		\label{lemma:sen_o}
		The following holds:
		\begin{enumerate}
			\item For each $i\in [N]$, we have 
			\[
			s^{(O)}(i) \leq \frac{4 \|b_i - c^\star_{p(i)}\|_2^2}{\OPT^{(O)}+A} + 3 s^c(i)
			\]
			\item $\calG^{(O)} \leq 4+3k$.
		\end{enumerate}
	\end{lemma}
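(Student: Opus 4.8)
The plan is to establish the per-entity bound (item~1) by a $k$-means-style sensitivity argument and then to deduce the total bound (item~2) by a one-line summation. Write $g_i(\mu):=\|b_i-\mu\|_2^2+a_i$, so that by~\eqref{eq:psi_rewrite} we have $\tfrac{1}{T_i}\psi^{(O)}_i(\mu)=g_i(\mu)$, and abbreviate the variance scale as $\sigma_0:=\min_{l\in[k]}\lambda_{\min}(\Sigma^{(l)})>0$. Since the target bound is independent of $\sigma_0$, I will verify that the estimate holds for every fixed value of $\sigma_0$. Every configuration $(\alpha,\theta^{(O)})$ will be compared against the fixed optimal centers $C^\star$ of Definition~\ref{def:clustering_cost}; the novelties relative to plain $k$-means are the additive offsets $a_i$ (aggregated into $A$), the soft-min produced by $-\ln\sum_l\exp(\cdot)$, and the mixture weights $\alpha_l$.

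Two elementary estimates drive the argument. For the denominator, keeping only the largest summand and using $\sum_l\alpha_l=1$ and $g_j\ge0$ gives $f^{(O)}(\alpha,\theta^{(O)})\ge\tfrac{1}{2\sigma_0}\sum_{j\in[N]}\min_l g_j(\mu^{(l)})\ge\tfrac{1}{2\sigma_0}\bigl(\OPT^{(O)}+A\bigr)$, where the last step uses that the $k$-means cost of the centers $\{\mu^{(l)}\}$ is at least $\OPT^{(O)}$ and that $\sum_j a_j=A$. For the numerator, retaining a single summand shows that for every $l$, $f^{(O)}_i(\alpha,\theta^{(O)})\le\tfrac{g_i(\mu^{(l)})}{2\sigma_0}-\ln\alpha_l$, hence $f^{(O)}_i\le\min_l\bigl(\tfrac{g_i(\mu^{(l)})}{2\sigma_0}-\ln\alpha_l\bigr)$; that is, the numerator is controlled by a \emph{penalized} nearest-center cost in which component $l$ carries the nonnegative penalty $-\ln\alpha_l$.

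The heart of item~1 is a charging argument against $C^\star$. Let $l_0:=\arg\min_l\bigl(\tfrac{\|c^\star_{p(i)}-\mu^{(l)}\|_2^2}{2\sigma_0}-\ln\alpha_l\bigr)$ be the penalized-closest component to $c^\star_{p(i)}$ and put $\Phi:=\tfrac{\|c^\star_{p(i)}-\mu^{(l_0)}\|_2^2}{2\sigma_0}-\ln\alpha_{l_0}\ge0$. Evaluating the numerator at $l_0$ and using $\|b_i-\mu^{(l_0)}\|_2^2\le2\|b_i-c^\star_{p(i)}\|_2^2+2\|c^\star_{p(i)}-\mu^{(l_0)}\|_2^2$ bounds $f^{(O)}_i$ by $\tfrac{2\|b_i-c^\star_{p(i)}\|_2^2+a_i}{2\sigma_0}+2\Phi$ (the penalty $-\ln\alpha_{l_0}\le\Phi$ is absorbed into $2\Phi$). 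For the denominator I restrict the sum to the fixed cluster $P_{p(i)}:=\{j:p(j)=p(i)\}$ and apply the reverse inequality $\|b_j-\mu^{(l)}\|_2^2\ge\tfrac12\|c^\star_{p(i)}-\mu^{(l)}\|_2^2-\|b_j-c^\star_{p(i)}\|_2^2$; keeping the penalty inside the scale-invariant $\Phi$ yields, per entity, a lower bound of the form $\tfrac12\Phi-\tfrac{\|b_j-c^\star_{p(i)}\|_2^2}{2\sigma_0}$. Summing over $P_{p(i)}$ and combining with the global bound $(\OPT^{(O)}+A)/(2\sigma_0)$, a split into the cases ``$\Phi$ small'' (the distance term dominates and is charged to $\tfrac{\|b_i-c^\star_{p(i)}\|_2^2}{\OPT^{(O)}+A}$) and ``$\Phi$ large'' (charged to the cluster-size term $s^c(i)=1/|P_{p(i)}|$) produces the constants $4$ and $3$ claimed in item~1.

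The main obstacle is the bookkeeping around the mixture weights: because the numerator is only a penalized nearest-center cost while the denominator is a sum of soft-minima, the additive penalties $-\ln\alpha_l$ and the slack between $-\ln\sum_l\exp(\cdot)$ and $\min_l(\cdot)$ must be controlled by a single estimate valid for all $\sigma_0$. The naive slack $\min_l(\cdot)-\ln k$ is too lossy for large $\sigma_0$, where the squared-distance terms vanish and the penalties dominate; the fix is to never detach $-\ln\alpha_l$ from the squared distance but to carry it inside the scale-invariant quantity $\Phi$, so that the same relaxed and reverse triangle inequalities that handle the geometry also handle the weights, keeping the final estimate $\sigma_0$-free. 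Granting item~1, item~2 is immediate: summing over $i\in[N]$ and using $\sum_i\|b_i-c^\star_{p(i)}\|_2^2=\OPT^{(O)}$ gives $\sum_i\tfrac{4\|b_i-c^\star_{p(i)}\|_2^2}{\OPT^{(O)}+A}\le4$, while $\sum_i s^c(i)\le k$ by Lemma~\ref{lemma:sen_c}, whence $\calG^{(O)}\le4+3k$.
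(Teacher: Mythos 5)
Your overall scaffolding agrees with the paper's proof: you compare against the fixed $k$-means solution $C^\star$, you lower-bound the denominator by $f^{(O)}(\alpha,\theta^{(O)})\ge\frac{1}{2\sigma_0}(\OPT^{(O)}+A)$ exactly as in the paper's Inequality~\eqref{eq_2:lemma:sen_o}, and your summation for item~2 is the paper's. But the middle charging step contains a genuine error. The soft-min $-\ln\sum_l\alpha_l e^{-c_l}=-\ln\sum_l e^{-(c_l-\ln\alpha_l)}$ is bounded \emph{above} by the penalized minimum $\min_l(c_l-\ln\alpha_l)$, never below; the only valid lower bounds are the \emph{unpenalized} minimum $\min_l c_l$ (which is what $\sum_l\alpha_l=1$ gives you) or the penalized minimum minus an additive $\ln k$. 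Hence your claimed per-entity denominator bound $f^{(O)}_j\ge\frac12\Phi-\frac{\|b_j-c^\star_{p(i)}\|_2^2}{2\sigma_0}$ is false: take $\mu^{(l)}=c^\star_{p(i)}$ for all $l\in[k]$, $\alpha_l=1/k$, and an entity $j$ with $b_j=c^\star_{p(i)}$ and $a_j=0$; then $f^{(O)}_j=0$ while $\Phi=\ln k$, so the bound would assert $0\ge\frac12\ln k$. The same configuration shows your ``$\Phi$ large'' branch cannot close: your numerator bound carries a surplus $2\Phi=2\ln k$ that must be charged to $s^c(i)\cdot f^{(O)}$, i.e., you would need $f^{(O)}\gtrsim|P_{p(i)}|\cdot\Phi$, yet with all means at the cluster centers and all data at the centers, $f^{(O)}$ is arbitrarily small while $\Phi=\ln k$ stays bounded away from zero. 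So carrying the penalty ``inside the scale-invariant $\Phi$'' is precisely the move that fails — the mixture weights cannot be detached from the soft-min without paying $\Theta(\ln k)$ per entity, which destroys the constants $4$ and $3$. (Separately, your numerator bound also leaves an unabsorbed $\frac{a_i}{2\sigma_0}$ term that never gets charged, but this is secondary to the main issue.)

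The paper avoids this trap by never converting the soft-min into a hard penalized minimum. It proves a relaxed triangle inequality directly for the soft-min function $\pi_{a,\theta,L}(y)=-\ln\sum_{l\in[k]}\alpha_l\exp\bigl(-\frac{1}{2L}(\|y-\mu^{(l)}\|_2^2+a)\bigr)$ (Lemma~\ref{lemma:triangle_inequality}, whose proof absorbs the factor from the relaxed triangle inequality via the power trick on $\bigl(\sum_l\alpha_l\exp(\cdot)\bigr)^2$ rather than by detaching the weights $\alpha_l$), then charges $\pi_{a_i,\theta,\beta}(c^\star_{p(i)})$ against the $|P_{p(i)}|$ \emph{identical} soft-min terms $\pi_{a_i,\theta,\beta}(c^\star_{p(j)})$ for $j\in P_{p(i)}$ — this is where $s^c(i)$ enters, via Lemma~\ref{lemma:sen_c}, and it is valid because equal cluster centers give literally equal functions, with no penalty separation needed — and finally moves back from $c^\star_{p(j)}$ to $b_j$ by the triangle inequality again, yielding $f^{(O)}_i(\theta)\le\frac{2}{\beta}\|b_i-c^\star_{p(i)}\|_2^2+s^c(i)\bigl(\frac{2\OPT^{(O)}}{\beta}+f^{(O)}(\theta)\bigr)$, after which your global lower bound on $f^{(O)}$ finishes the proof. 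To repair your write-up you should replace the penalized-min bookkeeping and the $\Phi$-case analysis with this soft-min-to-soft-min charging; the rest of your argument (numerator triangle inequality, denominator bound via $\OPT^{(O)}+A$, and the item~2 summation) can stand as is.
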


\noindent
For preparation, we introduce some notations related to the clustering problem (Definition~\ref{def:clustering_cost}).
For any $\mu\in \R^d$, we define
\[
h^c_i(\mu) := \|c^\star_{p(i)}-\mu\|_2^2, 
\]
and for any $\alpha\in \Delta_k, \theta^{(O)}\in \R^{d\times k}$,
\[
f^c_i(\alpha,\theta^{(O)}) := -\ln \sum_{l\in [k]} \alpha_k \cdot \exp(-\frac{1}{2T_i\cdot \beta}h^c_i(\mu^{(l)})),
\]
where $\beta = \min_{l'\in [k]}\lambda_{\min}(\Sigma^{(l')})^2$.
Let $f^c := \sum_{i\in [N]} f^c_i$.
Then similarly, we can prove that $s^c$ (Line 5 of Algorithm~\ref{alg:GMM}) is a sensitivity function w.r.t. $f^c$; summarized as follows.

\begin{lemma}[\bf{$s^c$ is a sensitivity function w.r.t. $f^c$}]
    \label{lemma:sen_c}
    For each $i\in [N]$,
    \[
        s^c(i) \geq  \max_{\alpha\in \Delta_k, \theta^{(O)}} \frac{f^c_i(\alpha, \theta^{(O)})}{f^c(\alpha, \theta^{(O)})}.
    \]
    Moreover, $\sum_{i\in [N]} s^c(i) \leq k$.
\end{lemma}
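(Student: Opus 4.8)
The plan is to exploit the fact that $f^c_i$ is a pure cluster-membership cost: since $h^c_i(\mu)=\|c^\star_{p(i)}-\mu\|_2^2$ depends on $i$ only through the assigned center $c^\star_{p(i)}$, the value $f^c_i(\alpha,\theta^{(O)})$ is the same for all entities lying in the same cluster. This reduces the lemma to the elementary sensitivity bound for a collapsed $k$-means instance in which every entity is identified with its center $c^\star_{p(i)}$, and the per-entity sensitivity is just the reciprocal of the cluster size.

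For the first (per-entity) claim I would record two basic facts. Each summand is nonnegative: because $\sum_{l\in[k]}\alpha_l=1$, $\beta>0$, and every factor $\exp(-\tfrac{1}{2T_i\beta}h^c_i(\mu^{(l)}))\le 1$, we get $\sum_l \alpha_l\exp(\cdots)\le 1$, hence $f^c_i(\alpha,\theta^{(O)})=-\ln(\cdots)\ge 0$. Second, writing $n_{p(i)}:=|\{i'\in[N]:p(i')=p(i)\}|$ for the size of $i$'s cluster, cluster-constancy gives $f^c_j(\alpha,\theta^{(O)})=f^c_i(\alpha,\theta^{(O)})$ for every $j$ in the same cluster as $i$. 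Combining these and discarding the (nonnegative) contributions of entities outside $i$'s cluster,
\[ f^c(\alpha,\theta^{(O)})=\sum_{j\in[N]} f^c_j(\alpha,\theta^{(O)})\ \ge\ \sum_{j:\,p(j)=p(i)} f^c_j(\alpha,\theta^{(O)})\ =\ n_{p(i)}\cdot f^c_i(\alpha,\theta^{(O)}). \]
Dividing (and treating the degenerate case $f^c=0$, in which all $f^c_j=0$, as ratio $0$) yields $\tfrac{f^c_i}{f^c}\le \tfrac{1}{n_{p(i)}}=s^c(i)$ for every $\alpha\in\Delta_k$ and $\theta^{(O)}\in\R^{d\times k}$; taking the supremum proves the first claim.

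For the total-sensitivity bound I would simply group entities by cluster:
\[ \sum_{i\in[N]} s^c(i)=\sum_{i\in[N]}\frac{1}{n_{p(i)}}=\sum_{c}\ \sum_{i:\,p(i)=c}\frac{1}{n_c}=\sum_{c} n_c\cdot\frac{1}{n_c}=\#\{\text{nonempty clusters}\}\le k, \]
where $c$ ranges over the (at most $k$) cluster indices produced in Line~4. This gives $\sum_{i\in[N]}s^c(i)\le k$, as required; note the argument never uses optimality of $C^\star$, so it holds verbatim for the approximate center set returned by the algorithm.

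The main obstacle is the first step, namely verifying cluster-constancy of $f^c_i$: this is precisely what forces the reduction to collapse each entity onto its assigned center $c^\star_{p(i)}$ rather than keep the empirical mean $b_i$, and it is the reason the exponent must be normalized so that $f^c_i$ is genuinely a function of $p(i)$ alone (i.e. the per-entity length $T_i$ cancels, paralleling the cancellation $\tfrac{1}{T_i}\psi^{(O)}_i(\mu)=\|b_i-\mu\|_2^2+a_i$ used for $f^{(O)}$). Once constancy and nonnegativity are in hand, both claims are immediate, and the only remaining care is the degenerate case $f^c(\alpha,\theta^{(O)})=0$.
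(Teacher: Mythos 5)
Your proof is correct and takes essentially the same route as the paper's: partition $[N]$ into the at most $k$ clusters $A_l$, use cluster-constancy of $f^c_i$ together with nonnegativity of every $f^c_j$ to conclude $\frac{f^c_i(\alpha,\theta^{(O)})}{f^c(\alpha,\theta^{(O)})}\leq \frac{1}{|A_l|}=s^c(i)$, and sum over clusters to bound the total sensitivity by the number of nonempty clusters, which is at most $k$. The details you make explicit --- nonnegativity of each $f^c_j$ (the argument of the logarithm is at most $1$), the degenerate case $f^c=0$, and the observation that cluster-constancy requires the $T_i$-dependence in the exponent to cancel (which the paper's literal definition of $f^c_i$, containing the factor $\frac{1}{2T_i\beta}$, glosses over when it simply asserts $f^c_i=f^c_j$ for $i,j\in A_l$) --- are left implicit in the paper's one-line proof.
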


\begin{proof}
    This lemma is a direct corollary by the fact that there are only $k$ different centers $C^\star_i$, which implies that there are at most $k$ different functions $f^c_i$ accordingly.
	We partition $[N]$ into at most $k$ groups $A_l$ where each element $i\in A_l$ satisfies that $c^\star_{p(i)}= l$.
	Then we observe that $f^c_i = f^c_j$ if $i,j\in A_l$.
	Then for any $\alpha, \theta^{(O)}\in \Delta_k\times \R^{d\times k}$,
	\[
	\frac{f^c_i(\alpha, \theta^{(O)})}{f^c(\alpha, \theta^{(O)})} \leq \frac{f^c_i(\alpha, \theta^{(O)})}{\sum_{j\in A_l} f^c_j(\alpha, \theta^{(O)})} = \frac{1}{|A_l|} \leq s^c(i),
	\]
	which implies the lemma.
\end{proof}

\noindent 
To prove Lemma~\ref{lemma:sen_o}, the main idea is to relate $s^{(O)}(i)$ to $s^c(i)$. 	
The idea is similar to~\cite{lucic2017training}.
For preparation, we also need the following key observation.
	
	\begin{lemma}[\bf{Upper bounding the projection cost}]
		\label{lemma:triangle_inequality}
		For a fixed number $L>0$ and a fixed $\theta = (\alpha, \theta^{(O)})\in \Delta_k\times \R^{d\times k}$ and a fixed value $a\geq 0$, define $\pi_{a,\theta}: \R^d \rightarrow \R_{\geq 0}$ as
		\[
		\pi_{a,\theta, L}(y) = -\ln \sum_{l\in [k]} \alpha_l \cdot \exp\left(-\frac{1}{2L}\left(\|y - \mu^{(l)}\|_2^2 + a\right)\right).
		\]
		Then, for every $y, y'\in \R^d$ it holds that
		\[
		\pi_{a,\theta}(y) \leq \frac{2}{L}\|y-y'\|_2^2 + \pi_{a,\theta, L}(y').
		\]
	\end{lemma}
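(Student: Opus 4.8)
The plan is to prove the inequality by exploiting the concavity of the soft-min (log-sum-exp) map, which is precisely what forces the coefficient on $\pi_{a,\theta,L}(y')$ to be exactly $1$, and then to reduce the whole statement to a single scalar estimate on squared distances. First I would dispose of the constant $a$: since $a$ enters every summand only through the common factor $\exp(-a/(2L))$, it adds the identical quantity $\tfrac{a}{2L}$ to both $\pi_{a,\theta,L}(y)$ and $\pi_{a,\theta,L}(y')$, so it cancels from the two sides of the claimed inequality and it suffices to treat $a=0$. Writing $s_l(y):=\|y-\mu^{(l)}\|_2^2$ and $F(s):=-\ln\sum_{l\in[k]}\alpha_l\exp(-s_l/(2L))$ for $s\in\R^k$, we then have $\pi_{a,\theta,L}(y)=F(s(y))$. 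The map $\ln\sum_l\alpha_l\exp(-s_l/(2L))$ is a log-sum-exp of affine functions of $s$ and hence convex, so $F$ is concave on $\R^k$.

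Next I would apply the first-order (tangent) inequality for the concave $F$ at the point $s(y')$, namely $F(s(y))\le F(s(y'))+\nabla F(s(y'))\cdot(s(y)-s(y'))$. A direct computation gives $\nabla F(s(y'))_l=\tfrac{1}{2L}\,p_l(y')$, where $p_l(y'):=\alpha_l\exp(-s_l(y')/(2L))/\sum_{m}\alpha_m\exp(-s_m(y')/(2L))$ is the softmax weight at $y'$ and $\sum_l p_l(y')=1$. Substituting $s_l(y)-s_l(y')=\|y-\mu^{(l)}\|_2^2-\|y'-\mu^{(l)}\|_2^2$ yields
\[
\pi_{a,\theta,L}(y)\le \pi_{a,\theta,L}(y')+\frac{1}{2L}\sum_{l\in[k]}p_l(y')\big(\|y-\mu^{(l)}\|_2^2-\|y'-\mu^{(l)}\|_2^2\big).
\]
The coefficient on $\pi_{a,\theta,L}(y')$ is now exactly $1$, so to reach the stated bound $\pi_{a,\theta,L}(y)\le \tfrac{2}{L}\|y-y'\|_2^2+\pi_{a,\theta,L}(y')$ it suffices to prove the single weighted inequality $\sum_{l}p_l(y')\big(\|y-\mu^{(l)}\|_2^2-\|y'-\mu^{(l)}\|_2^2\big)\le 4\|y-y'\|_2^2$.

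The main obstacle is precisely this last weighted estimate. Expanding each term via $\|y-\mu^{(l)}\|_2^2-\|y'-\mu^{(l)}\|_2^2=2\langle y'-\mu^{(l)},\,y-y'\rangle+\|y-y'\|_2^2$ and using $\sum_l p_l(y')=1$, the weighted sum equals $2\langle y'-\bar\mu,\,y-y'\rangle+\|y-y'\|_2^2$, where $\bar\mu:=\sum_l p_l(y')\mu^{(l)}$ is the softmax-weighted center. The pure quadratic term is harmless, so the crux is to dominate the cross term $2\langle y'-\bar\mu,\,y-y'\rangle$ by $3\|y-y'\|_2^2$. This is the delicate point, since $\|y'-\bar\mu\|_2$ carries no a priori bound in terms of $\|y-y'\|_2$ alone.

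I would attack the cross term by quantifying how sharply the softmax weights $p_l(y')$ concentrate on the centers nearest to $y'$ (a concentration governed by $L$), so that $\bar\mu$ is forced to lie within a controlled distance of $y'$, together with the projection structure in which the lemma is invoked (with $y'$ taken to be the nearest center to $y$). Making this cross-term bound rigorous, and verifying that the resulting constant is compatible with the coefficient $\tfrac{2}{L}$ on $\|y-y'\|_2^2$, is where essentially all the content of the lemma lies; the concavity step and the reduction above are routine by comparison.
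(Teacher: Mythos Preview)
Your concavity-of-log-sum-exp setup is clean, but the final step you isolate—the weighted estimate $\sum_{l}p_l(y')\big(\|y-\mu^{(l)}\|_2^2-\|y'-\mu^{(l)}\|_2^2\big)\le 4\|y-y'\|_2^2$—is actually false for general $y,y'$, so no amount of softmax-concentration analysis will rescue it. Take $k=1$, $a=0$, $\alpha_1=1$; then $\pi_{a,\theta,L}(y)=\tfrac{1}{2L}\|y-\mu\|_2^2$ and the claimed inequality becomes $\|y-\mu\|_2^2\le 4\|y-y'\|_2^2+\|y'-\mu\|_2^2$, which fails e.g.\ for $d=1$, $\mu=100$, $y'=0$, $y=-1$ (left side $10201$, right side $10004$). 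In your language, $p_1(y')=1$ and $\bar\mu=\mu$, so $\|y'-\bar\mu\|$ can be arbitrarily large compared with $\|y-y'\|$; invoking ``the projection structure in which the lemma is invoked'' does not help either, since the lemma is asserted for all $y,y'\in\R^d$.

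The paper's route avoids this entirely: it bounds each summand directly by the relaxed triangle inequality $\|y-\mu^{(l)}\|_2^2\le 2\|y-y'\|_2^2+2\|y'-\mu^{(l)}\|_2^2$, pulls out the common factor $\exp(-\tfrac{1}{L}\|y-y'\|_2^2)$, and then uses Jensen on $x\mapsto x^2$ (the ``$x^{1+\eps}$ is convex'' step) to pass from the doubled inner exponent to the square of the original sum. This yields a bound of the form $\pi_{a,\theta,L}(y)\le \tfrac{1}{L}\|y-y'\|_2^2+2\,\pi_{a,\theta,L}(y')$; the printed statement with coefficient $1$ on $\pi_{a,\theta,L}(y')$ is a typo (as the $k=1$ counterexample above shows), and the downstream use in Lemma~\ref{lemma:sen_o} goes through unchanged up to absolute constants. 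In short, the paper trades your sharp concavity linearization for a cruder termwise bound precisely because the latter does not produce an uncontrolled cross term; your approach would need the extra factor on $\pi_{a,\theta,L}(y')$ as well, at which point the simple triangle-inequality argument is both shorter and sufficient.
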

	\begin{proof}
		Use the relaxed triangle inequality for $l_2^2$-norm, we have
		\begin{align*}
		\pi_{a,\theta, L}(y) & = && -\ln \sum_{l\in [k]} \alpha_l \cdot \exp\left(-\frac{1}{2L}\left(\|y-\mu^{(l)}\|_2^2 + a\right)\right) \\
		& \leq && -\ln \sum_{l\in [k]} \alpha_l \cdot \exp\left(-\frac{2}{L}\|y'-\mu^{(l)}\|_2^2+ \frac{1}{2L}\left(\|y-y'\|_2^2 + a\right) \right) \\
		& && (\text{relaxed triangle ineq.}) \\
		& \leq &&  -\ln \left(\exp\left(-\frac{2}{L}\|y-y'\|_2^2\right)\cdot \sum_{l\in [k]} \alpha_l \cdot \exp\left(-\frac{1}{2L}\left(\|y'-\mu^{(l)}\|_2^2 +a \right) \right) \right) \\
		& \leq && \frac{2}{L} \|y-y'\|_2^2 -\ln \sum_{l\in [k]} \alpha_l \cdot \exp\left(-\frac{1}{2L}\left(\|y'-\mu^{(l)}\|_2^2 +a \right) \right) \\
		& \leq &&  \frac{2}{L}\|y-y'\|_2^2 -\ln \left(\sum_{l\in [k]} \alpha_l \cdot \exp\left(-\frac{1}{2L}\left(\|y'-\mu^{(l)}\|_2^2 + a\right)\right) \right)^2 \\ 
		& = &&  \frac{2}{L}\|y-y'\|_2^2 + \pi_{a,\theta, L}(y').
		\end{align*}
	\end{proof}
	
	\noindent Recall that $b_i\leftarrow\frac{\sum_{t\in [T_i]}x_{it}}{T_i}$.
	Now we are ready to prove Lemma~\ref{lemma:sen_o}.

	\begin{proof}[Proof of Lemma~\ref{lemma:sen_o}]
		For each $i\in [N]$ and $\theta = (\alpha, \theta^{(O)})\in \Delta_k\times \R^{d\times k}$, letting $\beta = \min_{l'\in [k]}\lambda_{\min}(\Sigma^{(l')})^2$, we have
		\begin{eqnarray}
		\label{eq_1:lemma:sen_o}
		\begin{split}
		& && f^{(O)}_i(\theta) &\\
		& = && \pi_{a_i,\theta, \beta}(b_i) &\\
		& \leq && \frac{2}{\beta} \|b_i-c^\star_{p(i)}\|_2^2 + \pi_{a_i,\theta, \beta}(c^\star_{p(i)}) & (\text{Lemma~\ref{lemma:triangle_inequality}})\\
		& \leq && \frac{2}{\beta} \|b_i-c^\star_{p(i)}\|_2^2 + s^c(i) \cdot \sum_{j\in [N]} \pi_{a_i,\theta, \beta}(c^\star_{p(j)}) & (\text{Defns. of $s^c$}) \\
		& \leq && \frac{2}{\beta} \|b_i-c^\star_{p(i)}\|_2^2 + s^c(i) \cdot \sum_{j\in [N]} \left(\frac{2}{\beta}\cdot \|b_j-c^\star_{p(j)}\|_2^2  + \pi_{a_i,\theta, \beta}(b_j)\right) & (\text{Lemma~\ref{lemma:triangle_inequality}}) &\\
		& \leq && \frac{2}{\beta} \|b_i-c^\star_{p(i)}\|_2^2 + s^c(i) \cdot ( \frac{2\cdot \OPT^{(O)}}{\beta} + f^{(O)}(\theta)).& (\text{Defns. of $\OPT^{(O)}$})
		\end{split}
		\end{eqnarray}
		Let $\OPT:=\min_{\theta} \sum_{i\in [N]} f_i^{(O)}(\theta)$.
		We have that
		\begin{eqnarray}
		   \label{eq_2:lemma:sen_o}
		   \begin{split}
		       & && \OPT \\
		       & = && \min_{\theta} \sum_{i\in [N]} f_i^{(O)}(\theta) &\\
		       & = && \min_{\theta} \sum_{i\in [N]}  -\ln \sum_{l\in [k]} \alpha_l \cdot \exp\left(-\frac{1}{2 \beta}\left(\|b_i - \mu^{(l)}\|_2^2 +a_i\right)\right) &\\
		       & \geq && \min_{\theta} \sum_{i\in [N]} -\ln \sum_{l\in [k]} \alpha_l \cdot \exp\left(-\frac{1}{2 \beta} \left(\min_{l'\in [k]}\|b_i - \mu^{(l')}\|_2^2 +a_i\right)\right) &\\
		       & \geq && \min_{\theta} \sum_{i\in [N]} -\ln \sum_{l\in [k]} \alpha_l \cdot \exp\left(-\frac{1}{2 \beta} \left(\min_{l'\in [k]}\|b_i - \mu^{(l')}\|_2^2 + a_i\right)\right)  &\\
		       & = && \frac{1}{2\beta} \left(\min_{\theta} \sum_{i\in [N]} \min_{l'\in [k]}\|b_i - \mu^{(l')}\|_2^2 +a_i\right) &\\
		       & \geq && \frac{1}{2\beta}\cdot \left(\OPT^{(O)} + A\right). & (\text{Defns. of $\OPT^{(O)}$ and $A$})
		   \end{split}
		\end{eqnarray}
		Hence, we have
		\begin{align*}
		s^{(O)}(i) & = && \max_{\theta \in \Delta_k\times \R^{d\times k}} \frac{f^{(O)}_i(\theta)}{f^{(O)}(\theta)} & \\
		& \leq && \frac{ 2\cdot \|b_i-c^\star_{p(i)}\|_2^2}{\beta \cdot \OPT} + \frac{ s^c(i)\cdot \OPT^{(O)}}{\beta \cdot \OPT} + s^c(i) & (\text{Ineq.~\eqref{eq_1:lemma:sen_o}})\\
		& \leq && \frac{4 \cdot \|b_i-c^\star_{p(i)}\|_2^2}{\OPT^{(O)}+A} + 3 s^c(i). & (\text{Ineq.~\eqref{eq_2:lemma:sen_o}})
		\end{align*}
		The second property is a direct conclusion.
	\end{proof}
	
\noindent
Now we are ready to prove Lemma~\ref{lemma:sen}.

\begin{proof}[Proof of Lemma~\ref{lemma:sen}]
	Lemma~\ref{lemma:sen} is a direct corollary of Lemmas~\ref{lemma:gap} and~\ref{lemma:sen_o} since for each $i\in [N]$, 
	\begin{eqnarray*}
	    \begin{split}
	    \max_{\alpha\in \Delta_k, \theta \in \calP_\lambda^k} \frac{f'_i(\alpha, \theta)}{f'(\alpha,\theta)} & \leq && s^{(O)}(i)/\lambda & (\text{Lemma~\ref{lemma:gap}}) \\
	    & \leq && 4D\left(\frac{4 \cdot \|b_i-c^\star_{p(i)}\|_2^2}{\OPT^{(O)} + A} + 3 s^c(i)\right)/\lambda & (\text{Lemma~\ref{lemma:sen_o}}) \\
	    & = && s(i). & (\text{Line 6 of Algorithm~\ref{alg:GMM}})
	   \end{split}
	\end{eqnarray*}
\end{proof}

\noindent
By the \FL\ framework (Theorem~\ref{thm:fl11}), we note that Lemma~\ref{lemma:entity_coreset} is a direct corollary of Lemmas~\ref{lemma:dimension_individual} and~\ref{lemma:sen}.
%

\section{Proof of Lemma~\ref{lemma:time_coreset}: The second stage of Algorithm~\ref{alg:GMM} outputs a time-level coreset}
\label{sec:proof_lm3}

The proof idea is similar to that in Lemma~\ref{lemma:entity_coreset}, i.e., to bound the pseudo-dimension and the total sensitivity for the query space $([T_i], 1, \calP_\lambda, \psi_i)$.
%

\subsection{Bounding the pseudo-dimension of $\psi_i$}
\label{sec:dimension_time}

We have the following lemma.

\begin{lemma}[\bf{Pseudo-dimension of $\psi_i$}]
    \label{lemma:dimension_time}
    The pseudo-dimension of $([T_i], 1, \calP_\lambda, \psi_i)$ over weight functions $u: [N]\rightarrow \R_{\geq 0}$ is at most $O(d^8)$.
\end{lemma}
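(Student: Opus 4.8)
The plan is to reproduce the pseudo-dimension argument of Lemma~\ref{lemma:dimension_individual}, specialized to a single Gaussian model (no mixture over the $k$ clusters) and to the quadratic loss $\psi_{it}$ in place of the log-sum-exp objective $f'_i$. I fix an arbitrary weight function $u$ on the ground set $[T_i]$ and carry it as an unspecified multiplier throughout, since the bound furnished to the \FL\ framework (Theorem~\ref{thm:fl11}) must hold uniformly over all reweightings.

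First I would invoke Lemma~\ref{lm:dim_range} to replace the real-valued query space $([T_i], u, \calP_\lambda, \psi_i)$ by the associated indicator space $([T_i], u, \calP_\lambda\times\R, g)$, where $g_t(\mu,\Sigma,\Lambda,r) = I[u(t)\cdot\psi_{it}(\mu,\Sigma,\Lambda)\geq r]$. That lemma guarantees the two spaces have identical pseudo-dimension, so it suffices to bound the pseudo-dimension of the $\{0,1\}$-valued class $\{g_t\}$, which is exactly the setting covered by Lemma~\ref{lm:dim_bound}.

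Next I would count the two quantities that the bound depends on. The parameters are the $d$ entries of $\mu$, the free entries of $\Sigma$, and the $d$ diagonal entries of $\Lambda$; reparametrizing by $\Sigma^{-1}$ (a bijection of $\calS^d$ that makes $\psi_{it}$ a genuine polynomial rather than a rational function of the parameters) this is $m = O(d^2)$ real parameters, to which the threshold variable $r$ adds one more. For the operation count I would reuse the expansion from Lemma~\ref{lemma:dimension_individual}: each $\psi_{it}$ is a quadratic form in $(x_{it}-\mu)$ and $(x_{i,t-1}-\mu)$, hence a multivariate polynomial with $O(d^6)$ monomials of the form $\mu_{c_1}^{b_1}\mu_{c_2}^{b_2}\Lambda_{c_3,c_3}^{b_3}\Lambda_{c_4,c_4}^{b_4}(\Sigma^{-1})_{c_5,c_6}^{b_5}$, so $g_t$ is evaluated with $l = O(d^6)$ arithmetic operations and a single comparison. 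Feeding $m = O(d^2)$ and $l = O(d^6)$ into Lemma~\ref{lm:dim_bound} then yields the claimed $O(d^8)$, matching Lemma~\ref{lemma:dimension_individual} evaluated at $k=1$ (there $O(k^4d^4+k^3d^8)$ collapses to $O(d^4+d^8)=O(d^8)$).

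I expect the only real content to lie in the parameter/operation count, and the main subtlety to be the treatment of $\Sigma^{-1}$: because $\psi_{it}$ depends on the \emph{inverse} covariance, a naive parametrization by $\Sigma$ makes the loss rational rather than polynomial and does not fit the arithmetic-circuit hypothesis of Lemma~\ref{lm:dim_bound} cleanly, whereas taking $\Sigma^{-1}$ as the coordinate removes this at no cost since inversion is a bijection of $\calS^d$. A secondary point to verify is that packing two data vectors $x_{it}, x_{i,t-1}$ into a single $\psi_{it}$ does not inflate the parameter count --- it does not, since the $x$'s are fixed data entering only as polynomial coefficients --- so the count is identical to the per-entity case with a single model. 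I note that since $\psi_{it}$ carries no exponential, a tighter polynomial-threshold (Milnor--Thom type) argument would even give a bound below $O(d^8)$; but the $O(d^8)$ obtained from the $k=1$ template already suffices for Lemma~\ref{lemma:time_coreset}.
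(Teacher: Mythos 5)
Your proposal is correct and follows essentially the same route as the paper's own proof: the paper likewise reduces to the indicator class via Lemma~\ref{lm:dim_range}, counts $m=O(d^2)$ parameters and $l=O(d^6)$ arithmetic operations for $\psi_{it}$, and applies Lemma~\ref{lm:dim_bound} to get $O(d^8)$, exactly as in Lemma~\ref{lemma:dimension_individual} specialized to $k=1$. The details you add --- parametrizing by $\Sigma^{-1}$ so the loss is a genuine polynomial, and noting that the presence of both $x_{it}$ and $x_{i,t-1}$ does not change the counts --- are left implicit in the paper (its monomial expansion is already written in the entries of $\Sigma^{-1}$) but are consistent with it.
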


\begin{proof}
The argument is almost the same as in Lemma~\ref{lemma:dimension_individual}. 
The parameter space of $\psi_i$ consists of at most $m=O(d^2)$ parameters and $\psi_i$ can be represented by at most $l=O(d^6)$ arithmetic operations.
By Lemmas~\ref{lm:dim_bound} and~\ref{lm:dim_range}, it completes the proof.
\end{proof}

\subsection{Bounding the total sensitivity of $\psi_i$}
\label{sec:sen_time}

Next, we again focus on proving $s_i$ (Line 12 of Algorithm~\ref{alg:GMM}) is a sensitivity function w.r.t. $\psi_i$; summarized as follows.

\begin{lemma}[\bf{$s_i$ is a sensitivity function for $\psi_i$}]
	\label{lemma:sen_i}
	For each $i\in [N]$, we have that for each $t\in [T_i]$
	\[
		s_i(t) \geq  \max_{\mu\in \R^d, \Lambda \in \calP_{\tau,\lambda}} \frac{\psi_{it}(\mu, \Sigma,\Lambda)}{\psi_i(\mu,\Sigma,\Lambda)}.
	\]
	Moreover, $\sum_{t\in [T_i]} s_i(t) = O(D/\lambda)$.
\end{lemma}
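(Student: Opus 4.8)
The plan is to mirror, per time period, the three-step reduction used at the entity level (Lemmas~\ref{lemma:gap}, \ref{lemma:sen_o}, and~\ref{lemma:sen}). Define the covariance/autocorrelation-free surrogate $\psi_{it}^{(O)}(\mu) := \|x_{it}-\mu\|_2^2$, so that $\psi_i^{(O)}(\mu)=\sum_{t\in[T_i]}\psi_{it}^{(O)}(\mu)$ matches Equation~\eqref{eq:psio}. The first step is to bound the numerator $\psi_{it}$ from above by $\psi^{(O)}$-terms. For $t\geq 2$, using $\Sigma^{-1}\preceq \lambda_{\min}(\Sigma)^{-1}I_d$, the relaxed triangle inequality, and $\|\Lambda v\|_2\leq\|v\|_2$ (valid since $\Lambda\in\calD_\lambda^d$ has diagonal entries in $[0,1)$), I would show $\psi_{it}(\mu,\Sigma,\Lambda)\leq \tfrac{2}{\lambda_{\min}(\Sigma)}\bigl(\psi_{it}^{(O)}(\mu)+\psi_{i,t-1}^{(O)}(\mu)\bigr)$, while for $t=1$ the cruder bound $\psi_{i1}\leq \lambda_{\min}(\Sigma)^{-1}\psi_{i1}^{(O)}$ holds since the subtracted term is a nonnegative quadratic form. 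For the denominator I would reuse Claim~\eqref{eq_1:lemma:gap} from the proof of Lemma~\ref{lemma:gap}, namely $\psi_i(\mu,\Sigma,\Lambda)\geq \tfrac{\lambda}{\lambda_{\max}(\Sigma)}\psi_i^{(O)}(\mu)$. Dividing, and invoking Assumption~\ref{assumption:parameter}(1) to replace $\lambda_{\max}(\Sigma)/\lambda_{\min}(\Sigma)$ by $D$, this yields for every $\mu$ and $t\geq 2$ the bound $\psi_{it}/\psi_i\leq \tfrac{2D}{\lambda}\bigl(\psi_{it}^{(O)}/\psi_i^{(O)}+\psi_{i,t-1}^{(O)}/\psi_i^{(O)}\bigr)$, and the analogous single-term bound for $t=1$.

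Second, I would control each ratio $\psi_{it}^{(O)}(\mu)/\psi_i^{(O)}(\mu)$ by a $1$-means sensitivity argument. Since $b_i=\tfrac1{T_i}\sum_t x_{it}$ is the centroid, the Pythagorean identity gives $\psi_i^{(O)}(\mu)=\OPT_i^{(O)}+T_i\|b_i-\mu\|_2^2$; combining with $\psi_{it}^{(O)}(\mu)\leq 2\|x_{it}-b_i\|_2^2+2\|b_i-\mu\|_2^2$ and splitting the fraction yields
\[
\max_{\mu\in\R^d}\frac{\psi_{it}^{(O)}(\mu)}{\psi_i^{(O)}(\mu)}\leq \frac{2\|x_{it}-b_i\|_2^2}{\OPT_i^{(O)}}+\frac{2}{T_i}\leq s_i^c(t),
\]
with $s_i^c$ as in Line~11 (the slack $6/T_i$ versus $2/T_i$ only loosens the bound). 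Plugging this into the first step, and observing that $\min\{t-1,1\}$ selects exactly the extra term $s_i^c(t-1)$ when $t\geq 2$ and nothing when $t=1$, gives $\max_{\mu,\Lambda}\psi_{it}/\psi_i\leq \tfrac{4D}{\lambda}\bigl(s_i^c(t)+\sum_{j=1}^{\min\{t-1,1\}}s_i^c(t-j)\bigr)$; since $\psi_{it}$ is one of the nonnegative summands of $\psi_i$ the ratio is also at most $1$, so the cap at $1$ is valid and $s_i(t)$ from Line~12 dominates the sensitivity, proving the first claim.

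Finally, the total-sensitivity bound is a direct calculation. Dropping the $\min$, $\sum_{t}s_i(t)\leq \tfrac{4D}{\lambda}\bigl(\sum_t s_i^c(t)+\sum_{t\geq2}s_i^c(t-1)\bigr)\leq \tfrac{8D}{\lambda}\sum_t s_i^c(t)$, and
\[
\sum_{t\in[T_i]}s_i^c(t)=\frac{2\sum_t\|x_{it}-b_i\|_2^2}{\OPT_i^{(O)}}+6=2+6=8,
\]
so $\sum_t s_i(t)\leq 64D/\lambda=O(D/\lambda)$. The main obstacle is the first step: unlike the static case, each $\psi_{it}$ with $t\geq 2$ couples the two observations $x_{it}$ and $x_{i,t-1}$ through $\Lambda$, so the influence of one time period must be charged to two $k$-means-style terms at once. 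Making the autocorrelation contraction $\|\Lambda v\|_2\leq\|v\|_2$ and the relaxed triangle inequality line up so that the per-period bound telescopes into the clean constant total sensitivity $8$ is the delicate part, and is precisely what the definition of $s_i^c$ and the shifted sum in Line~12 are engineered to deliver.
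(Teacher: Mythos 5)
Your proposal takes essentially the same route as the paper, which also factors the argument through the surrogate $\psi_i^{(O)}$: first eliminating $\Sigma$ and $\Lambda$ at a multiplicative cost of $4D/\lambda$ with the shifted two-term bound $\psi_{it}(\mu,I_d,\Lambda)\lesssim \psi_{it}^{(O)}(\mu)+\sum_{j=1}^{\min\{t-1,1\}}\psi_{i,t-j}^{(O)}(\mu)$ (Lemma~\ref{lemma:gap_i}), and then bounding $s_i^{(O)}(t)\leq s_i^c(t)$ with total sensitivity at most $8$ (Lemma~\ref{lemma:sen_o_i}). The only difference is that where the paper outsources these two steps to citations (\cite{huang2020coresets} for the autocorrelation elimination and \cite{varadarajan2012sensitivity} for the $1$-means sensitivity), you supply the explicit relaxed-triangle-inequality and Pythagorean-identity computations, which are correct and match the constants in Lines~11--12 of Algorithm~\ref{alg:GMM}.
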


\noindent
Similar to Section~\ref{sec:proof_lm2}, we introduce another function $s^{(O)}_i: [T_i]\rightarrow \R_{\geq 0}$ as a sensitivity function w.r.t. $\psi^{(O)}_i$, i.e., for any $t\in [T_i]$,
\[
	s^{(O)}_i(t) := \max_{\mu\in \R^d} \frac{\psi^{(O)}_{it}(\mu)}{\psi^{(O)}_i(\mu)}.
\]
Define $\calG^{(O)}_i := \sum_{t\in [T_i]} s^{(O)}_i(t)$ to be the total sensitivity w.r.t. $\psi^{(O)}_i$.
We first have the following lemma, whose proof idea is simply from Lemma~\ref{lemma:gap} and~\cite[Lemma 4.4]{huang2020coresets}.

\begin{lemma}[\bf{Relation between sensitivities w.r.t. $\psi_{it}^{(O)}$ and $\psi_{it}$}]
		\label{lemma:gap_i}
		For each $t\in [T_i]$, we have
		\[
		s_i(t) \leq 4D\lambda^{-1}\cdot \left(s^{(O)}_i(t)+\sum_{j=1}^{\min\left\{t-1,1\right\}}s^{(O)}_i(t-j)\right).
		\]
\end{lemma}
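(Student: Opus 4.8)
The plan is to bound the time-level sensitivity of the single sub-function $\psi_{it}$, i.e.\ $\max_{\mu,\Sigma,\Lambda}\frac{\psi_{it}(\mu,\Sigma,\Lambda)}{\psi_i(\mu,\Sigma,\Lambda)}$, by $\tfrac{4D}{\lambda}\big(s^{(O)}_i(t)+\sum_{j=1}^{\min\{t-1,1\}}s^{(O)}_i(t-j)\big)$, by controlling numerator and denominator separately against the covariance-free, autocorrelation-free quantities $\psi^{(O)}_{it}$ and $\psi^{(O)}_i$, exactly in the spirit of the entity-level argument (Lemma~\ref{lemma:gap}). For the denominator I would simply reuse Claim~\eqref{eq_1:lemma:gap}, which already supplies $\psi_i(\mu,\Sigma,\Lambda)\geq \tfrac{\lambda}{\lambda_{\max}(\Sigma)}\psi^{(O)}_i(\mu)$; no new work is needed there. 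The only genuinely new ingredient is an upper bound on the single term $\psi_{it}$, and the difficulty is that, unlike $\psi^{(O)}_{it}(\mu)=\|x_{it}-\mu\|_2^2$, the sub-function $\psi_{it}$ couples the two consecutive observations $x_{it}$ and $x_{i,t-1}$ through $\Lambda$.

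First I would handle the generic case $t\geq 2$. Using $\Sigma^{-1}\preceq \lambda_{\min}(\Sigma)^{-1}I_d$ gives $\psi_{it}(\mu,\Sigma,\Lambda)\leq \lambda_{\min}(\Sigma)^{-1}\|(x_{it}-\mu)-\Lambda(x_{i,t-1}-\mu)\|_2^2$, and then the relaxed triangle inequality together with $\|\Lambda v\|_2\leq \|v\|_2$ (valid since $\Lambda\in\calD^d_\lambda$ has diagonal entries of magnitude at most $1$) splits this into $\tfrac{2}{\lambda_{\min}(\Sigma)}\big(\psi^{(O)}_{it}(\mu)+\psi^{(O)}_{i,t-1}(\mu)\big)$. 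For the boundary case $t=1$ the subtracted quadratic form $(\Lambda(x_{i,1}-\mu))^\top\Sigma^{-1}(\Lambda(x_{i,1}-\mu))$ is nonnegative, so discarding it yields the cleaner bound $\psi_{i1}\leq \lambda_{\min}(\Sigma)^{-1}\psi^{(O)}_{i1}(\mu)$ with no cross-term; this is precisely why the sum $\sum_{j=1}^{\min\{t-1,1\}}s^{(O)}_i(t-j)$ is empty at $t=1$ and contributes the single index $t-1$ for $t\geq 2$.

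Combining the two bounds, for every fixed $(\mu,\Sigma,\Lambda)$ I obtain
\[
\frac{\psi_{it}(\mu,\Sigma,\Lambda)}{\psi_i(\mu,\Sigma,\Lambda)}\leq \frac{\lambda_{\max}(\Sigma)}{\lambda\,\lambda_{\min}(\Sigma)}\cdot \frac{2\big(\psi^{(O)}_{it}(\mu)+\psi^{(O)}_{i,t-1}(\mu)\big)}{\psi^{(O)}_i(\mu)}.
\]
I would then invoke Assumption~\ref{assumption:parameter}(1), which forces $\lambda_{\max}(\Sigma)/\lambda_{\min}(\Sigma)\leq D$ for every admissible $\Sigma$, and the definition of $s^{(O)}_i$, under which $\psi^{(O)}_{it}(\mu)/\psi^{(O)}_i(\mu)\leq s^{(O)}_i(t)$ for the same $\mu$. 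Taking the maximum over $(\mu,\Sigma,\Lambda)$ yields $\max \frac{\psi_{it}}{\psi_i}\leq \tfrac{2D}{\lambda}\big(s^{(O)}_i(t)+s^{(O)}_i(t-1)\big)$, which is dominated by the claimed $\tfrac{4D}{\lambda}\big(s^{(O)}_i(t)+\sum_{j=1}^{\min\{t-1,1\}}s^{(O)}_i(t-j)\big)$; the factor $4$ rather than $2$ leaves slack and also covers the $t=1$ case, where the bound reduces to $\tfrac{D}{\lambda}s^{(O)}_i(1)$.

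The main obstacle is the cross-term coupling in the numerator: because $\psi_{it}$ is not a function of $x_{it}$ alone, the sensitivity of period $t$ must be charged partly to period $t-1$, which is the structural reason the bound carries the extra $s^{(O)}_i(t-1)$ summand and why $t=1$ must be treated separately. Once this splitting is in place, every remaining estimate is either a direct reuse of Claim~\eqref{eq_1:lemma:gap} or an elementary eigenvalue/operator-norm inequality, so I expect no further difficulty; this mirrors the reduction in \cite[Lemma 4.4]{huang2020coresets} combined with the covariance-reduction bookkeeping of Lemma~\ref{lemma:gap}.
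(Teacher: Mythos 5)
Your proposal is correct and follows essentially the same route as the paper: the paper also reduces the denominator via Claim~\eqref{eq_1:lemma:gap} (incurring the $D/\lambda$ factor) and bounds the numerator $\psi_{it}(\mu,I_d,\Lambda)$ by splitting the $\Lambda$-coupled cross-term into $\psi^{(O)}_{it}+\psi^{(O)}_{i,t-1}$ (citing~\cite[Lemma 4.4]{huang2020coresets}), with the empty sum handling $t=1$. Your write-up merely makes explicit the relaxed-triangle-inequality and eigenvalue steps that the paper delegates to those references, and your constant $2D/\lambda$ is even slightly tighter than the stated $4D/\lambda$.
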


\begin{proof}
        By the same argument as in Lemma~\ref{lemma:gap}, we have that
        \[
        s_i(t) \leq D\lambda^{-1}\cdot \max_{\mu\in \R^d, \Lambda\in \calD^d_\lambda} \frac{\psi_{it}(\mu, I_d, \Lambda)}{\psi^{(O)}_i(\mu)}.
        \]
        By a similar argument as in~\cite[Lemma 4.4]{huang2020coresets}, we have that 
        \[
        \psi_{it}(\mu, I_d, \Lambda) \leq \psi^{(O)}_{it}(\mu) +\sum_{j=1}^{\min\left\{t-1,1\right\}}\psi^{(O)}_{i,t-q}(\mu).
        \]
        Combining the above two inequalities, we complete the proof.
\end{proof}

\noindent
Then we have the following lemma that relates $s^{(O)}_i$ and $s^c_i$ (Line 11 of Algorithm~\ref{alg:GMM}), whose proof follows from~\cite[Theorem 7]{varadarajan2012sensitivity} for the case that $k=1$.

\begin{lemma}[\bf{Sensitivities w.r.t. $\psi^{(O)}_i$}]
		\label{lemma:sen_o_i}
		For each $i\in [N]$, the following holds:
		\begin{enumerate}
			\item For each $t\in [T_i]$, we have 
			\[
			s^{(O)}_i(t) \leq s^c_i(t).
			\]
			\item $\calG^{(O)}_i \leq 8$.
		\end{enumerate}
\end{lemma}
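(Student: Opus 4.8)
The plan is to treat $\psi^{(O)}_i$ as the cost of a $1$-means instance on the point multiset $\{x_{i,1},\dots,x_{i,T_i}\}$ and to prove the sensitivity bound directly, specializing the $k$-means argument of~\cite{varadarajan2012sensitivity} to $k=1$. The cornerstone is the parallel-axis (bias--variance) decomposition: since $b_i=\frac{1}{T_i}\sum_{t}x_{it}$ is exactly the minimizer of $\mu\mapsto\psi^{(O)}_i(\mu)$, for every $\mu\in\R^d$ one has $\psi^{(O)}_i(\mu)=\OPT_i^{(O)}+T_i\|b_i-\mu\|_2^2$, where the cross term cancels because $\sum_t(x_{it}-b_i)=0$. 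This rewrites the denominator of the ratio defining $s^{(O)}_i(t)$ so that the entire dependence on $\mu$ is isolated into the single nonnegative scalar $\|b_i-\mu\|_2^2$.

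For the numerator $\psi^{(O)}_{it}(\mu)=\|x_{it}-\mu\|_2^2$ I would apply the relaxed triangle inequality $\|x_{it}-\mu\|_2^2\le 2\|x_{it}-b_i\|_2^2+2\|b_i-\mu\|_2^2$, giving for every $\mu$
\[
\frac{\psi^{(O)}_{it}(\mu)}{\psi^{(O)}_i(\mu)}\le\frac{2\|x_{it}-b_i\|_2^2+2\|b_i-\mu\|_2^2}{\OPT_i^{(O)}+T_i\|b_i-\mu\|_2^2}.
\]
Writing $B:=\|b_i-\mu\|_2^2\ge 0$ and splitting the fraction into its two summands, the first is bounded by dropping $T_iB\ge 0$ from its denominator, yielding $\frac{2\|x_{it}-b_i\|_2^2}{\OPT_i^{(O)}}$, and the second is bounded by dropping $\OPT_i^{(O)}\ge 0$ from its denominator, yielding $\frac{2B}{T_iB}=\frac{2}{T_i}$ when $B>0$ (and $0$ when $B=0$). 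Taking the supremum over $\mu$ gives $s^{(O)}_i(t)\le\frac{2\|x_{it}-b_i\|_2^2}{\OPT_i^{(O)}}+\frac{2}{T_i}\le s^c_i(t)$, the first claim; the larger $\frac{6}{T_i}$ term in $s^c_i$ simply leaves slack. Part~2 then follows by summation, $\calG^{(O)}_i=\sum_{t\in[T_i]}s^{(O)}_i(t)\le\sum_{t\in[T_i]}\big(\tfrac{2\|x_{it}-b_i\|_2^2}{\OPT_i^{(O)}}+\tfrac{6}{T_i}\big)=\frac{2\OPT_i^{(O)}}{\OPT_i^{(O)}}+6=8$, using $\sum_t\|x_{it}-b_i\|_2^2=\OPT_i^{(O)}$.

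The calculation itself is routine; the only genuine care is justifying the split uniformly over the unbounded domain $\mu\in\R^d$ and handling the degenerate instance in which all $x_{it}$ coincide. In that case $\OPT_i^{(O)}=0$ and the first term of $s^c_i$ is read as $0$; directly, $\psi^{(O)}_{it}(\mu)/\psi^{(O)}_i(\mu)=1/T_i$ for every $\mu\ne b_i$, so $s^{(O)}_i(t)=1/T_i\le 6/T_i$ and both claims persist. I expect the main obstacle to be purely bookkeeping rather than substantive: one must check that the supremum does not blow up at infinity (the ratio tends to $1/T_i$ as $\|\mu\|\to\infty$, so it stays bounded), which is precisely what the closed-form $k=1$ optimum at the mean makes transparent.
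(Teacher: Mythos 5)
Your proof is correct and is essentially the paper's intended argument: the paper offers no explicit proof of this lemma, deferring entirely to \cite[Theorem 7]{varadarajan2012sensitivity} specialized to $k=1$, and your parallel-axis identity $\psi^{(O)}_i(\mu)=\OPT_i^{(O)}+T_i\|b_i-\mu\|_2^2$ combined with the relaxed triangle inequality and the term-by-term split of the fraction is exactly that specialization, written out in full and with the degenerate case $\OPT_i^{(O)}=0$ and the behavior as $\|\mu\|\to\infty$ handled soundly. Note that your intermediate bound $s^{(O)}_i(t)\le \frac{2\|x_{it}-b_i\|_2^2}{\OPT_i^{(O)}}+\frac{2}{T_i}$ is in fact tighter than $s^c_i(t)$, so you actually obtain $\calG^{(O)}_i\le 4$, which of course implies the stated $\calG^{(O)}_i\le 8$.
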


\noindent
Note that Lemma~\ref{lemma:sen_i} is a direct corollary of Lemmas~\ref{lemma:gap_i} and~\ref{lemma:sen_o_i}.
%

\section{Proof of Theorem~\ref{thm:coreset_GMM}}
\label{sec:proof_thm}

\begin{proof}
        Note that the coreset size $|S|= ML$ matches the bound in Theorem~\ref{thm:coreset_GMM}.
        We first prove the correctness.
		For any $i\in I_S$, we have
		\begin{align*}
		& && -\ln \sum_{l\in [k]} \alpha_l \cdot \exp(-\frac{1}{2T_i}\sum_{t\in J_{S,i}} w^{(i)}(t)\cdot \psi_{it}(\mu^{(l)}, \Sigma^{(l)}, \Lambda^{(l)})) & \\
		& \leq && -\ln \sum_{l\in [k]} \alpha_l \cdot \exp(-\frac{1}{2T_i} (1 + \eps)\cdot \psi_{i}(\mu^{(l)}, \Sigma^{(l)}, \Lambda^{(l)})) & (\text{Lemma~\ref{lemma:time_coreset}})\\
		& \leq && -\ln \sum_{l\in [k]} \left(\alpha_l \cdot \exp(-\frac{1}{2T_i} \psi_{i}(\mu^{(l)}, \Sigma^{(l)}, \Lambda^{(l)}))\right)^{1+\eps} & (\text{$x^{1+\eps}$ is convex when $x\in [0,1]$})\\
		& \leq && (1+ \eps)\cdot \left(-\ln \sum_{l\in [k]} \alpha_l \cdot \exp(-\frac{1}{2T_i}\psi_i(\mu^{(l)}, \Sigma^{(l)}, \Lambda^{(l)}))\right). &
		\end{align*}
		Symmetrically, we can also verify that
		\begin{align*}
		& && -\ln \sum_{l\in [k]} \alpha_l \cdot \exp(-\frac{1}{2T_i}\sum_{t\in J_{S,i}} w^{(i)}(t)\cdot \psi_{it}(\mu^{(l)}, \Sigma^{(l)}, \Lambda^{(l)}))  \\
		&\geq && (1- \eps)\cdot \left(-\ln \sum_{l\in [k]} \alpha_l \cdot \exp(-\frac{1}{2T_i}\psi_i(\mu^{(l)}, \Sigma^{(l)}, \Lambda^{(l)}))\right).
		\end{align*}
		Consequently, we have
		\begin{eqnarray}
		\label{eq_1:lemma:coreset_scheme}
		\begin{split}
		& && -\ln \sum_{l\in [k]} \alpha_l \cdot \exp(-\frac{1}{2T_i}\sum_{t\in J_{S,i}} w^{(i)}(t)\cdot \psi_{it}(\mu^{(l)}, \Sigma^{(l)}, \Lambda^{(l)})) \\
		&\in && (1 \pm \eps)\cdot \left(-\ln \sum_{l\in [k]} \alpha_l \cdot \exp(-\frac{1}{2T_i}\psi_i(\mu^{(l)}, \Sigma^{(l)}, \Lambda^{(l)}))\right).
		\end{split}
		\end{eqnarray}
		Combining with Lemma~\ref{lemma:entity_coreset}, we have that
		\begin{align*}
		& && f'_S(\alpha,\theta) & \\
		& = && -\sum_{i\in I_S} w(i)\cdot \ln \sum_{l\in [k]} \alpha_l \cdot \exp(-\frac{1}{2T_i} \sum_{t\in J_{S,i}} w^{(i)}(t) \cdot  \psi_{it}(\mu^{(l)}, \Sigma^{(l)}, \Lambda^{(l)})) & \\
		& \in && (1\pm \eps)\cdot \left(-\sum_{i\in I_S} w(i)\cdot \ln \sum_{l\in [k]} \alpha_l \cdot \exp(-\frac{1}{2T_i} \psi_{i}(\mu^{(l)}, \Sigma^{(l)}, \Lambda^{(l)})) \right) & (\text{Ineq.~\eqref{eq_1:lemma:coreset_scheme}}) \\
		& \in && (1\pm \eps)^2 \cdot f'(\alpha,\theta). & (\text{Lemma~\ref{lemma:entity_coreset}})
		\end{align*}
		By replacing $\eps$ with $O(\eps)$, we prove the correctness.

		For the computation time, the computation in Line 2 costs $O(d\sum_{i\in [N]}T_i)$ time since each $b_i$ and $a_i$ can be computed in $O(d T_i)$ time, and $A$ can be computed in $O(N)$ time.
		In Line 3, it costs $O(Ndk\ln N \ln k)$ time to solve the $k$-means clustering problem by $k$-means++.
		Line 4 costs $O(Ndk)$ time since each $p(i)$ costs $O(dk)$ time to compute.
		Lines 5-6 cost $O(Nd)$ time for computing sensitivity function $s$.
		Lines 7-8 cost $O(N)$ time for constructing $I_S$.
		Overall, it costs $O(d\sum_{i\in [N]}T_i + Ndk\ln N \ln k)$ at the first stage.
		Line 10 costs at most $O(d T_i)$ time to compute $\OPT_i^{(O)}$.
		Lines 11-12 cost $O(T_i)$ time to compute $s_i$.
		Lines 13-14 cost $O(T_i)$ time to construct $J_{S,i}$.
		Since $|I_S|\leq N$, we have that it costs at most $O(d\sum_{i\in [N]}T_i)$ time at the second stage.
	    We complete the proof.
\end{proof}


\section{Limitations, conclusion, and future work} \label{sec:limitations}

In this paper, we study the problem of constructing coresets for clustering problems with time series data; in particular, we address the problem of constructing coresets for time series data generated from Gaussian mixture models with auto-correlations across time. Our coreset construction algorithm is efficient under a mild boundedness assumption on the covariance matrices of the underlying Gaussians, and the size of the coreset is independent of the number of entities and the number of observations and depends only polynomially on the number of clusters, the number of variables and an error parameter. Through  empirical analysis on synthetic data, we demonstrate that the coreset sampling is superior to uniform sampling in computation time and accuracy.

Our work leaves several interesting directions for future work on time series clustering. While our current specification with autocorrelations assumes a stable time series pattern over time, future work should extend it to a hidden Markov process, where the time series process may switch over time. Further, while our focus here is on model-based clustering, it would be useful to consider how coresets should be constructed for other clustering methods such as direct clustering of raw data and indirect clustering of features. 

Overall, we hope the paper stimulates more research on coreset construction for time series data on a variety of unsupervised and supervised machine learning algorithms. The savings in storage and computational cost without sacrificing accuracy is not only financially valuable, but also can have sustainability benefits through reduced energy consumption. Finally, recent research has shown that summaries (such as coresets) for static data need to include fairness constraints to avoid biased outcomes for under-represented groups based on gender and race when using the summary  \cite{celis2018fair,huang2019coresets}; future research needs to extend such techniques for time series coresets.

\section*{Acknowledgments}
\label{sec:acknowledge}
This research was supported in part by an NSF CCF-1908347 grant.

\bibliographystyle{plain}
\bibliography{references}

\end{document}